\documentclass{article}


\PassOptionsToPackage{numbers, compress}{natbib}
\usepackage[preprint]{neurips_2025}




\usepackage[utf8]{inputenc} 
\usepackage[T1]{fontenc}    
\usepackage{hyperref}  
\usepackage{bbm}
\usepackage{url}            
\usepackage{booktabs}       
\usepackage{amsfonts}       
\usepackage{nicefrac}       
\usepackage{microtype}      
\usepackage[dvipsnames]{xcolor}         
\usepackage{multirow}

\usepackage{hhline}

\usepackage{graphicx}
\usepackage{subcaption}
\usepackage[shortlabels]{enumitem}
\usepackage{amsmath}
\usepackage{amssymb}
\usepackage{mathtools}
\usepackage{amsthm}
\usepackage{mathrsfs} 
\usepackage{pifont}
\usepackage{tikz}

\usetikzlibrary{arrows.meta}

\newcommand{\NNW}{W}
\newcommand{\NNB}{b}
\newcommand{\BSBS}{\psi}
\newcommand{\Besov}{\mathcal{B}}
\newcommand{\dnaught}{d_\circ}
\newcommand{\tnaught}{t_\circ}
\newcommand{\bfsnaught}{\bfs_\circ}

\definecolor{darkgreen}{rgb}{0.0, 0.6, 0.0}

\newcommand{\redx}{{\color{red}\ding{55}}}
\newcommand{\greench}{{\color{darkgreen}\ding{51}}}

\makeatletter
\let\orgdescriptionlabel\descriptionlabel
\renewcommand*{\descriptionlabel}[1]{%
  \let\orglabel\label
  \let\label\@gobble
  \phantomsection
  \edef\@currentlabel{#1\unskip}%
  \let\label\orglabel
  \orgdescriptionlabel{#1}%
}
\makeatother

\setlist[description]{%
  labelwidth=!,       
  labelsep=0.5em,     
  leftmargin=2.5em,       
  align=left,
  style=multiline
}

\makeatletter
\newcommand{\labitem}[2]{%
\def\@itemlabel{\textbf{#1}}
\item
\def\@currentlabel{#1}\label{#2}}
\makeatother

\usepackage[capitalize,noabbrev]{cleveref}

\theoremstyle{plain}
\newtheorem{theorem}{Theorem}[section]
\newtheorem*{theorem*}{Theorem}

\newtheorem{lemma}[theorem]{Lemma}
\newtheorem*{lemma*}{Lemma}
\newtheorem{corollary}[theorem]{Corollary}
\theoremstyle{definition}

\newtheorem{example}[theorem]{Example}
\newtheorem{remark}[theorem]{Remark}




\newcommand{\bfs}{s}

\newcommand{\bc}{\begin{center}}
\newcommand{\ec}{\end{center}}
\newcommand{\be}{\begin{equation}}
\newcommand{\ee}{\end{equation}}
\newcommand{\been}{\begin{equation*}}
\newcommand{\eeen}{\end{equation*}}
\newcommand{\ba}{\begin{array}}
\newcommand{\ea}{\end{array}}
\newcommand{\bean}{\setlength\arraycolsep{2pt}\begin{eqnarray*}}
\newcommand{\eean}{\end{eqnarray*}}
\newcommand{\bea}{\setlength\arraycolsep{2pt}\begin{eqnarray}}
\newcommand{\eea}{\end{eqnarray}}
\newcommand{\ben}{\begin{enumerate}}
\newcommand{\een}{\end{enumerate}}
\newcommand{\bed}{\begin{itemize}}
\newcommand{\eed}{\end{itemize}}

\newcommand{\psingle}{P}

\newcommand{\UB}{\mathcal{U}\mathcal{B}}

\DeclarePairedDelimiterX\Set[1]\{\}{%
  
  #1
}
\DeclarePairedDelimiterXPP\prob[1]{\mathbb{P}}(){}{
  
  #1
}
\DeclarePairedDelimiterXPP\expt[1]{\mathbb{E}}[]{}{
  
  #1
}

\DeclarePairedDelimiterXPP\exptd[2]{\mathbb{E}^{#2}}[]{}{
  
  #1
}

\DeclarePairedDelimiterXPP\Log[1]{\operatorname{log}}(){}{#1}
\DeclarePairedDelimiterXPP\Exp[1]{\operatorname{exp}}(){}{#1}
\DeclarePairedDelimiterXPP\Covering[1]{\mathcal{N}}(){}{#1}
\DeclarePairedDelimiterXPP\MEPR[1]{\operatorname{MEPR}}(){}{#1}
\DeclarePairedDelimiterXPP\Gaussian[2]{\mathrm{N}}(){}{#1,#2}
\DeclarePairedDelimiterXPP\Ber[1]{\mathrm{Ber}}(){}{#1}
\DeclarePairedDelimiter\abs{\lvert}{\rvert}%
\DeclarePairedDelimiter\norm{\lVert}{\rVert}%
\DeclarePairedDelimiter\pbrac{(}{)}%
\DeclarePairedDelimiter\sbrac{[}{]}%
\DeclarePairedDelimiter\cbrac{\{}{\}}%

\makeatletter
\let\oldSet\Set
\def\Set{\@ifstar{\oldSet}{\oldSet*}}
\let\oldprob\prob
\def\prob{\@ifstar{\oldprob}{\oldprob*}}
\let\oldexpt\expt
\def\expt{\@ifstar{\oldexpt}{\oldexpt*}}
\let\oldexptd\exptd
\def\exptd{\@ifstar{\oldexptd}{\oldexptd*}}
\let\oldLog\Log
\def\Log{\@ifstar{\oldLog}{\oldLog*}}
\let\oldExp\Exp
\def\Exp{\@ifstar{\oldExp}{\oldExp*}}
\let\oldMEPR\MEPR
\def\MEPR{\@ifstar{\oldMEPR}{\oldMEPR*}}
\let\oldabs\abs
\def\abs{\@ifstar{\oldabs}{\oldabs*}}
\let\oldnorm\norm
\def\norm{\@ifstar{\oldnorm}{\oldnorm*}}
\let\oldBer\Ber
\def\Ber{\@ifstar{\oldBer}{\oldBer*}}
\let\oldGaussian\Gaussian
\def\Gaussian{\@ifstar{\oldGaussian}{\oldGaussian*}}
\let\oldCovering\Covering
\def\Covering{\@ifstar{\oldCovering}{\oldCovering*}}
\let\oldpbrac\pbrac
\def\pbrac{\@ifstar{\oldpbrac}{\oldpbrac*}}
\let\oldsbrac\sbrac
\def\sbrac{\@ifstar{\oldsbrac}{\oldsbrac*}}
\let\oldcbrac\cbrac
\def\cbrac{\@ifstar{\oldcbrac}{\oldcbrac*}}
\makeatother
\newcommand{\dprime}{{\prime\prime}}

\DeclareMathOperator*{\argmin}{arg\,min}

\def\lemmaautorefname~#1\null{Lemma #1\null}
\def\theoremautorefname~#1\null{Theorem #1\null}
\def\sectionautorefname~#1\null{Section #1\null}
\def\exampleautorefname~#1\null{Example #1\null}
\def\subsectionautorefname~#1\null{Section #1\null}


\newcommand{\aref}[1]{{\rm\ref{#1}}}

\title{Posterior Contraction for Sparse Neural Networks\\ in Besov Spaces with Intrinsic Dimensionality}

%


\author{
 Kyeongwon Lee\textsuperscript{1} \quad Lizhen Lin\textsuperscript{1} \quad  Jaewoo Park\textsuperscript{2} \quad Seonghyun Jeong\textsuperscript{2}\thanks{Corresponding author}
 \\
  \textsuperscript{1}Department of Mathematics, University of Maryland \\
  \textsuperscript{2}Department of Statistics and Data Science, Yonsei University\\
  \texttt{\{kwlee,lizhen01\}@umd.edu \quad \{jwpark88,sjeong\}@yonsei.ac.kr}
}

\begin{document}

\maketitle

\begin{abstract}
This work establishes that sparse Bayesian neural networks achieve optimal posterior contraction rates over anisotropic Besov spaces and their hierarchical compositions. These structures reflect the intrinsic dimensionality of the underlying function, thereby mitigating the curse of dimensionality.
Our analysis shows that Bayesian neural networks equipped with either sparse or continuous shrinkage priors attain the optimal rates which are dependent on the intrinsic dimension of the true structures. Moreover, we show that these priors enable \emph{rate adaptation}, allowing the posterior to contract at the optimal rate even when the smoothness level of the true function is unknown. The proposed framework accommodates a broad class of functions, including additive and multiplicative Besov functions as special cases.
These results advance the theoretical foundations of Bayesian neural networks and provide rigorous justification for their practical effectiveness in high-dimensional, structured estimation problems.
\end{abstract}

\section{Introduction}

Neural networks (NNs) have been widely used to extract features from complex datasets, such as visual recognition and language modeling \citep{goodfellow2016deep}. Due to their approximation ability \citep{bianchini2014complexity,cybenko1989approximation, delalleau2011shallow,hornik1991approximation,telgarsky2015representation,telgarsky2016benefits}, NNs exhibit remarkable flexibility in representing complex multivariate functions. Given their network structure, NNs are trained by minimizing empirical risk defined through a suitable loss function, which can be viewed as maximum likelihood estimation from a statistical perspective \citep{schmidt2020nonparametric}.  However, such a frequentist approach may lead to miscalibration and overconfidence \citep{minderer2021revisiting}, particularly when the model is trained on out-of-distribution samples \citep{hein2019relu}. On the other hand, the Bayesian approach quantifies predictive uncertainty via the posterior distribution \citep{blundell2015weight,gal2016dropout} and can improve calibration and robustness in many cases \citep{arbel2023primer,daxberger2021bayesian}. 
Another key advantage of the Bayesian perspective is that, unlike their frequentist counterparts, Bayesian neural networks (BNNs) can easily achieve the optimal rate of convergence without knowledge of the smoothness of the function. This property, known as \emph{rate adaptation}, is inherently attained by the nature of Bayesian inference.

To capture the flexibility of NNs, it is desirable to consider function classes that are richer than the intuitively smooth ones, such as H\"older or Sobolev spaces.
In particular, Besov spaces, which encompass non-smooth or even discontinuous functions, are well suited for this purpose. For example, images often exhibit local inhomogeneities and sharp edges naturally represented within Besov spaces \citep{donoho1995adapting}. 
Moreover, additional flexibility arises from accounting for intrinsic dimensionality, which helps explain the robustness of NNs to the curse of dimensionality. 
For instance, images frequently possess low intrinsic dimensionality \citep{pope2021intrinsic}.
Such an intrinsic dimensional structure can be implicitly modeled through anisotropy or composite function constructions. Although frequentist approaches have extensively investigated theoretical properties in these complex settings \citep{suzuki2018adaptivity,suzuki_deep_2021}, Bayesian analyses remain largely limited to simpler cases \citep{kong2024posterior,lee2022asymptotic,polson2018posterior,castillo2024posterior}.
This study aims to fill this gap by establishing that BNNs achieve optimal posterior contraction and exhibit rate adaptation under the fully Bayesian paradigm in such complex scenarios.


\subsection{Related works and our contribution}

Rates of convergence provide a fundamental means of assessing the quality of estimation procedures in both frequentist and Bayesian inference. While frequentist analyses focus on the convergence rates of estimators, Bayesian approaches examine the contraction rates of posterior distributions, which describe how quickly the posterior concentrates around the truth. In the deep neural network (DNN) literature, such notions of optimality have been investigated under a variety of settings from both frequentist and Bayesian perspectives.

\textbf{Frequentist works.}
Under the frequentist paradigm, \citet{schmidt2020nonparametric} showed that a carefully constructed DNN estimator achieves the near-minimax optimal rate over H\"older classes, provided that the network architecture has sufficient depth and sparsity adapted to the underlying function complexity. Their result extends to composite structures, encompassing a wide range of structured function classes, including additive models. \citet{suzuki2018adaptivity} further extended these results to Besov spaces, and \citet{suzuki_deep_2021} demonstrated that DNNs mitigate the curse of dimensionality by adapting to anisotropic smoothness and composite structures in Besov spaces. In contrast to the aforementioned works that utilize sparse networks, \citet{kohler2021rate} investigated dense NNs for composite H\"older spaces. Collectively, these frequentist results demonstrate that DNNs can achieve optimal convergence rates under appropriately specified conditions. However, attaining such optimality requires careful tuning of the network architecture to the underlying function complexity, and rate-adaptive results are currently not available.

\textbf{Bayesian works.}
The Bayesian literature remains comparatively limited. A pioneering contribution by \citet{polson2018posterior} established that BNNs equipped with spike-and-slab priors achieve optimal posterior contraction over H\"older classes. This work was extended to Besov spaces by \citet{lee2022asymptotic}, who considered both spike-and-slab and continuous shrinkage priors, although the application of shrinkage priors remains largely limited. \citet{kong2024posterior} obtained similar results using dense networks with non-sparse priors, further extending the theory to composite H\"older spaces. To the best of our knowledge, the only Bayesian study that considers anisotropic Besov spaces is \citet{castillo2024posterior}. However, that work primarily employs the fractional posterior approach \citep{bhattacharya2019Bayesian,yang2020alpha}, and the standard posterior under the fully Bayesian framework has been investigated only under strong restrictions. Moreover, they focused on H\"older classes in the composite setting, and, to our knowledge, no Bayesian study has yet considered composite structures within anisotropic Besov spaces.
Despite this limited scope, these Bayesian approaches achieve rate adaptation, attaining optimal posterior contraction without prior knowledge of the smoothness level.


\textbf{Our contribution.}
We establish that sparse BNNs achieve the near-minimax optimal rates over anisotropic and composite Besov spaces. This result demonstrates that BNNs adapt to intrinsic dimensional structures, thereby avoiding the curse of dimensionality. We show that sparse BNNs can accommodate a broader class of realistic functions that have not been fully addressed in earlier works. Compared to \citet{castillo2024posterior}, we adopt the pure Bayesian framework, relying on the standard posterior distribution for inference. This choice aligns more closely with conventional Bayesian practice and facilitates broader acceptance within the Bayesian community. Furthermore, we show that sparse BNNs adapt to the underlying model complexity and attain the optimal rate for the target function without requiring oracle knowledge. As a result, users are not required to specify the exact network architecture; rate adaptation is achieved through an appropriately designed prior distribution. A summary of related works and our contributions is provided in \Cref{tbl:related_works_modified}.

\begin{table}[t!]
 \caption{
 Summary of related works and this study. The abbreviations Iso., Aniso.,  and Ada. denote \textit{isotropic}, \textit{anisotropic}, and \textit{adaptation}, respectively. }
    \centering
   \resizebox{\textwidth}{!}{  
    \begin{tabular}{lllllll}
    \toprule
    & & & \multicolumn{2}{c}{Single Function} &  \multicolumn{2}{c}{Composite Function} \\    
\cmidrule(lr){4-5} \cmidrule(lr){6-7}
    Study & Approach & Architecture & Function Space & Ada. & Function Space & Ada. \\
    \midrule
    \citet{schmidt2020nonparametric} & Frequentist & Sparse  & Iso. H\"older & \redx & Iso. H\"older  & \redx \\
    \citet{kohler2021rate} & Frequentist & Dense & Iso. H\"older & \redx & Iso. H\"older  & \redx \\
    \citet{suzuki2018adaptivity} & Frequentist & Sparse & Iso. Besov & \redx & -  & - \\
    \citet{suzuki_deep_2021} & Frequentist & Sparse & Aniso. Besov & \redx &  Aniso. Besov  & \redx \\
    \citet{polson2018posterior} & Bayesian & Sparse & Iso. H\"older & \greench & - & - \\
   \citet{kong2024posterior} & Bayesian & Dense & Iso. H\"older & \greench & Iso. H\"older & \greench \\
    \citet{lee2022asymptotic} & Bayesian & Sparse & Iso. Besov & \greench & - & - \\
    \citet{castillo2024posterior} & Bayesian$^\ast$ & Dense & Aniso. Besov$^{\ast\ast}$ & \greench & Iso. H\"older & \greench  \\
    This work & Bayesian & Sparse & Aniso. Besov & \greench & Aniso. Besov & \greench \\
    \bottomrule
    \end{tabular}
    }
        \begin{minipage}{0.97\linewidth}
    \scriptsize $\ast$  \citet{castillo2024posterior} primarily employ the fractional posterior to circumvent issues of model complexity.
    
    \scriptsize $\ast\ast$  \citet{castillo2024posterior} impose stronger restrictions on the smoothness parameter than other studies on anisotropic Besov spaces (see \Cref{sec:ratebesov} for details).
    
    \end{minipage}
    \label{tbl:related_works_modified}
\end{table}

\section{Preliminaries}\label{sec:prelim}

\subsection{Setup}

\textbf{Notation.}
For $a \in \mathbb{R}$, let $\lfloor a \rfloor$ and $\lceil a \rceil$ denote the floor and ceiling functions, respectively. For $n \in \mathbb{N}$, the notation $[n]$ stands for the set $\Set{1, 2, \cdots, n}$. For $a, b \in \mathbb{R}$, we write $a \vee b$ and $a\wedge b$ to denote $\max\Set{a, b}$ and $\min\Set{a, b}$, respectively.
For a real vector $v$, $\norm{v}_p$ denotes the $\ell_p$-norm for $p\in[1,\infty]$, and $\norm{v}_0$ denotes the number of nonzero components.
For a measurable function $f:[0,1]^d \rightarrow \mathbb R$ and a measure $\mu$, define $\norm{f}_{L^p(\mu)} = (\int_{[0,1]^d} |f|^p d\mu)^{1/p}$ for $p\in (0,\infty)$.
When $\mu$ is the Lebesgue measure, we write $\norm{f}_{L^p}$ for brevity.
For $p=\infty$, we define $\norm{f}_{L^\infty} = \operatorname*{ess\,sup}_{x\in[0,1]^d} |f(x)|$. We also define the supremum norm by $\lVert f \rVert_\infty = \sup_{x\in[0,1]^d} |f(x)|$.
The Dirac delta at zero is denoted by $\delta_0$, and the indicator function of a set $A$ is denoted by $I(A)$. For sequences $a_n$ and $b_n$, we write $a_n \lesssim b_n$ and $a_n \gtrsim b_n$ to mean that $a_n \leq C b_n$ for some universal constant $C > 0$. 
If $a_n\lesssim b_n\lesssim a_n$, we write $a_n\asymp b_n$.
Let $\UB= \{f:[0,1]^d\rightarrow \mathbb R; \lVert f\rVert_{\infty} \leq 1\}$
denote the family of uniformly bounded functions. For a normed space $\mathcal{F}$, $U(\mathcal{F})$ denotes the unit ball of $\mathcal{F}$.

\textbf{Model.}
We consider a nonparametric regression problem with a $d$-dimensional input variable $X_i\in[0,1]^d$ and an output variable $Y_i\in\mathbb R$, for $i\in [n]$.
The observations $\mathcal{D}_n=\{(X_i,Y_i)\}_{i=1}^n$ are independent and identically distributed according to the model,
\begin{equation}
\label{eqn:reg:model}
Y_i = f_0(X_i) + \xi_i,\quad X_i\sim P_X,\quad \xi_i\sim\mathrm{N}(0,\sigma_0^2),\quad i\in [n],
\end{equation}
where $f_0:[0,1]^d\rightarrow \mathbb R$ is the true regression function, $\sigma_0^2>0$ is the noise variance, and $P_X$ is the distribution of $X_i$.  We denote the joint distribution of $\mathcal D_n$ under this model by $\psingle_{f_0,\sigma_0}^{(n)}$.

\textbf{Neural network.}
We denote the ReLU activation function by $\zeta(\cdot)$ and use the same notation for its vectorized version. The parameter space of $L$-layered NNs with $B$-bounded and $S$-sparse weights is defined as
    \begin{align*}
        \Theta(L, D, S, B) = \Big\{ &\theta=\big(\NNW^{(1)}, \NNB^{(1)},\dots, \NNW^{(L+1)},  \NNB^{(L+1)}\big) : \NNW^{(l)} \in \mathbb{R}^{d_{l} \times d_{l-1}}, \ \NNB^{(l)} \in \mathbb{R}^{d_l},
        \\ &d_l=D, \ l \in [L], \ d_0=d, \ d_{L+1}=1, \ \norm{\theta}_0 \leq S, \ \norm{\theta}_{\infty} \leq B
            \Big\}.
    \end{align*}
Let $f_\theta(\cdot) = (\NNW^{(L+1)}(\cdot) + \NNB^{(L+1)}) \circ \zeta \circ \cdots \circ \zeta \circ (\NNW^{(1)}(\cdot) + \NNB^{(1)}):[0,1]^d\rightarrow\mathbb R$ be the input-output mapping of an $L$-layered NN. 
We define the corresponding feedforward NN class as $\Phi(L, D, S, B)=\Set{\mathrm{clip} \circ f_\theta : \theta \in \Theta(L, D, S, B)}$, where $\mathrm{clip}(x) = \min\{1,\max\{-1,x\}\}$ clips its input to the interval $[-1,1]$.
We also define the unbounded sparse and unbounded dense parameter spaces as $\Theta(L, D,S) = \lim_{B \rightarrow \infty} \Theta(L, D, S, B)$ and $\Theta(L, D) = \lim_{S \rightarrow \infty} \Theta(L, D, S)$, respectively. 
By further taking union over the width, we define
$\Theta(L) = \bigcup_{D=1}^\infty\Theta(L, D)$.
The corresponding NN classes are defined analogously as
$\Phi(L, D,S)=\lim_{B \rightarrow \infty} \Phi(L, D, S, B)$, $\Phi(L, D)=\lim_{S \rightarrow \infty} \Phi(L, D,  B)$, and $\Phi(L) = \bigcup_{D=1}^\infty\Phi(L, D)$.

\subsection{Anisotropic Besov spaces}

We first consider anisotropic Besov spaces, which accommodate varying smoothness across different directional components \citep{nikolskii1975approximation,suzuki_deep_2021}.
Let $\bfs = (s_1, \dots, s_d) \in \mathbb{R}_{++}^d$, $0 < p, q \leq \infty$, and $r = \lfloor \max_j s_j \rfloor + 1$. For $h \in \mathbb{R}^d$, the $r$-th order difference is defined as $\Delta_h^r(f)(x) = \sum_{j=0}^r \binom{r}{j} (-1)^{r-j}f(x+jh)$ if $x, x+rh \in [0,1]^d$, and $0$ otherwise.
Given $t=(t_1, \dots, t_d)\in\mathbb{R}_{++}^d$, the anisotropic modulus of smoothness is defined as
$w_{r, p}(f, t) = \sup_{h \in \mathbb{R}^d: |h_j| \leq t_j} \|\Delta_h^r(f)\|_{L^p}$.
The anisotropic Besov seminorm is then defined as $\|f\|_{\Besov_{p,q}^{\bfs}} := \|f\|_{L^p} + \|f\|_{\Besov_{p,q}^{\bfs}}^\ast$, where
\begin{equation*}
\|f\|_{\Besov_{p,q}^{\bfs}}^\ast =
\begin{cases}
\left( \sum_{k=0}^\infty \left[ 2^k w_{r,p} ( f, ( 2^{-k/s_1}, \dots, 2^{-k/s_d} ) ) \right]^q \right)^{1/q} & \text{if } q < \infty, 
\\
\sup_{k \ge 0} \left[ 2^k w_{r,p} ( f, ( 2^{-k/s_1}, \dots, 2^{-k/s_d} ) ) \right] & \text{if } q = \infty.
\end{cases}
\end{equation*}
The anisotropic Besov space $\Besov_{p,q}^{\bfs}$ is the collection of all functions $f \in L^p$ such that $\|f\|_{\Besov_{p,q}^{\bfs}}$ is finite.
For an anisotropic smoothness parameter $\bfs$, we define the smallest smoothness as $\underline{\bfs} = \min_j s_j$, the largest smoothness as $\overline{\bfs} = \max_j s_j$, and the intrinsic smoothness (exponent of global smoothness) as $\tilde{\bfs} = ( \sum_{j=1}^d s_j^{-1} )^{-1}$.
In the special case where $\bfs = (s_0, \dots, s_0)$ for some $s_0 > 0$, the space $\Besov_{p,q}^{\bfs}$ reduces to an isotropic Besov space. Besov spaces generalize classical notions of differentiability and continuity and are more flexible than H\"{o}lder spaces, which can be viewed as particular subspaces of Besov spaces.
For further discussion of the continuous embedding properties, see Remark~\ref{rmk:besov-embeddings}.

It is well known that the minimax optimal rate of convergence over isotropic Besov classes with smoothness $s_0$ is $n^{-{s_0}/{(2s_0+d)}}$ \citep{donoho1998minimax,Gine_Nickl_2016}. For anisotropic Besov spaces, the minimax rate is given by $n^{-\tilde{\bfs}/(2\tilde{\bfs}+1)}$ \citep{hoffman2002random,ibragimov2013statistical}, which can also be expressed as $n^{-\underline{\bfs}/(2\underline{\bfs} + d^\ast)}$, where $d^\ast := \underline{\bfs} / \tilde{\bfs}$. This form resembles the minimax rate over isotropic Besov spaces with smoothness $\underline{\bfs}$ and dimension $d^\ast$, in which the anisotropic space is continuously embedded. In this sense, $d^\ast$ can be interpreted as an intrinsic dimension associated with anisotropic Besov spaces \citep{suzuki_deep_2021}. For example, the two functions $f_1$ and $f_2$ in \Cref{fig:ex_aniso} belong to $\Besov_{1,\infty}^{(s_1, s_2)}$ for any $0 < s_1 < 1$ and $s_2 > s_1$. The anisotropic Besov space $\Besov_{1,\infty}^{(s_1, s_2)}$ is continuously embedded in the isotropic Besov space with $s_1$. By accounting for anisotropy, the rate exponent can be improved from $s_1/(2s_1+2)$ to $\tilde{s}/(2\tilde{s}+1)$, where $\tilde{s} = s_1s_2/(s_1+s_2)$. The intrinsic dimension satisfies $d^\ast = s_1/\tilde{s} = s_1/s_2 + 1 < 2$, indicating a reduced effective dimension compared to the ambient dimension $d=2$.

\begin{figure}[t!]
    \centering
\includegraphics[width=\textwidth]{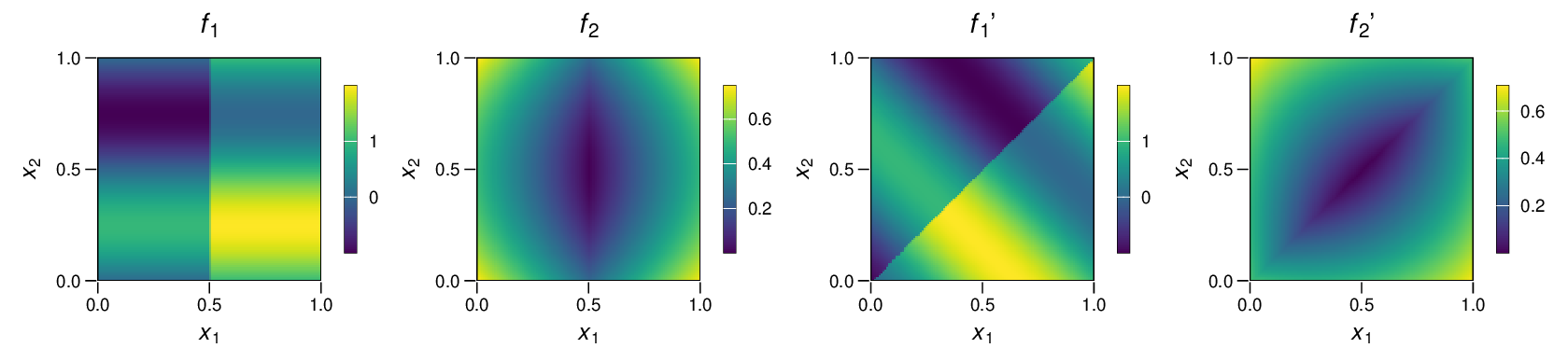}
    \caption{We illustrate two example functions, $f_1(x) = I(\{x_1 \in [1/2, 1]\}) + \sin (2\pi x_2)$ and $f_2(x) = \lvert x_1 - 1/2 \rvert + (x_2 - 1/2)^2$, and their rotated counterparts $f_1'$ and $f_2'$.}\label{fig:ex_aniso}
\end{figure}

\subsection{Composite Besov spaces}
\label{sec:compo}

Beyond anisotropic spaces, we consider composite function spaces in which functions are represented as hierarchical compositions of simpler components. 
For a composite function $f = f_H \circ \cdots \circ f_1 : [0,1]^d\rightarrow \mathbb R$ with composition depth $H$, each intermediate function $f_h$ maps between successive spaces, potentially involving dimension reduction through sparse connectivity. This compositional structure naturally aligns with the layered architecture of NNs, where each layer progressively transforms its inputs into higher-level representations. \Cref{fig:ex_aniso} presents examples in which rotated functions can be interpreted as compositions of Besov functions with affine transformations.
Compositional function spaces offer a useful framework for understanding the ability of NNs to approximate high-dimensional functions without suffering from the curse of dimensionality.

Let $\dnaught = (d^{(0)}, d^{(1)}, \dots, d^{(H)})$ be a sequence of dimensions with $d^{(0)}=d$ and $d^{(H)}=1$. Let $\tnaught = (t^{(1)}, \dots, t^{(H)})$ denote the effective dimensions, where each $t^{(h)}$ satisfies $1 \leq t^{(h)} \leq d^{(h-1)}$. Let $\bfsnaught = (\bfs^{(1)}, \dots, \bfs^{(H)})$ be a sequence of anisotropic smoothness vectors, where each $\bfs^{(h)} \in \mathbb{R}_{++}^{t^{(h)}}$. We define the composite anisotropic Besov space $\Besov_{p,q}^{\dnaught,\tnaught,\bfsnaught}$ as the collection of composite functions $f = f_H \circ \cdots \circ f_1 : [0,1]^d\rightarrow \mathbb R$ such that the intermediate functions $f_h = (f_{h,1}, \dots, f_{h,d^{(h)}}): [0,1]^{d^{(h-1)}} \rightarrow [0,1]^{d^{(h)}}$ for $h=1,\dots, H-1$, and $f_H:[0,1]^{d^{(H-1)}}\rightarrow \mathbb R$ satisfy the following condition: for each $h$ and each $j \in [d^{(h)}]$, there exists a subset $I_{h,j} \subset [d^{(h-1)}]$ with $|I_{h,j}| = t^{(h)}$ and a function $\tilde{f}_{h,j}\in U(\Besov^{\bfs^{(h)}}_{p,q})$ such that $f_{h,j}(x) = \tilde{f}_{h,j}(x_{I_{h,j}})$, where $x_I$ denotes the subvector of $x$ restricted to the coordinates in $I$. 
Additive and multiplicative Besov spaces serve as simple examples of composite Besov spaces, as illustrated in \Cref{fig:composite_structures}.

\begin{figure}[t!]
    \centering
    \begin{subfigure}[b]{0.48\textwidth}
        \centering
        \begin{tikzpicture}[scale=0.95]
            \tikzset{
                input/.style={circle, draw, thick, fill=yellow!30, minimum size=4.5mm, font=\scriptsize, inner sep=0.5pt},
                feature/.style={circle, draw, thick, fill=yellow!30, minimum size=4.5mm, font=\scriptsize, inner sep=0.5pt},
                output/.style={circle, draw, thick, fill=yellow!30, minimum size=4.5mm, font=\scriptsize, inner sep=1pt},
                arrow/.style={-Stealth, thick},
                comp/.style={rectangle, rounded corners, draw, thick, fill=yellow!30, minimum width=4.5mm, minimum height=4.5mm, font=\scriptsize},
            }
            
            \node[input] (x1) at (0,0) {$x_1$};
            \node[input] (x2) at (0,-0.8) {$x_2$};
            \node at (0,-1.3) {\scriptsize$\vdots$};
            \node[input] (xd) at (0,-2) {$x_d$};
            
            \node[feature] (f1) at (1.7,0) {$g_1$};
            \node[feature] (f2) at (1.7,-0.8) {$g_2$};
            \node at (1.7,-1.3) {\scriptsize$\vdots$};
            \node[feature] (fd) at (1.7,-2) {$g_d$};
            
            \draw[dashed, thick, rounded corners, draw=red] (-0.5,0.4) rectangle (2.2,-2.4);
            \node[anchor=south east, font=\scriptsize] at (2.2,0.4) {$t^{(1)}=1\leq d^{(0)}=d$};
            
            \node[comp] (sum) at (3.4,-1) {$\sum$};
            \node[anchor=north west, font=\scriptsize] at (3.2,-1.2) {$t^{(2)}=d^{(1)}$};
            
            \node[output] (out) at (4.7,-1) {$f$};
            
            \draw[arrow] (x1) -- (f1);
            \draw[arrow] (x2) -- (f2);
            \draw[arrow] (xd) -- (fd);
            
            \draw[arrow] (f1) -- (sum);
            \draw[arrow] (f2) -- (sum);
            \draw[arrow] (fd) -- (sum);
            
            \draw[arrow] (sum) -- (out);
            
        \end{tikzpicture}
        \caption{Additive structure}
        \label{fig:additive}
    \end{subfigure}
    \hfill
    \begin{subfigure}[b]{0.48\textwidth}
        \centering
        \begin{tikzpicture}[scale=0.95]
            \tikzset{
                input/.style={circle, draw, thick, fill=yellow!30, minimum size=4.5mm, font=\scriptsize, inner sep=0.5pt},
                feature/.style={circle, draw, thick, fill=yellow!30, minimum size=4.5mm, font=\scriptsize, inner sep=0.5pt},
                output/.style={circle, draw, thick, fill=yellow!30, minimum size=4.5mm, font=\scriptsize, inner sep=1pt},
                arrow/.style={-Stealth, thick},
                comp/.style={rectangle, rounded corners, draw, thick, fill=yellow!30, minimum width=4.5mm, minimum height=4.5mm, font=\scriptsize},
            }
            
            \node[input] (x1) at (0,0) {$x_1$};
            \node[input] (x2) at (0,-0.8) {$x_2$};
            \node at (0,-1.3) {\scriptsize$\vdots$};
            \node[input] (xd) at (0,-2) {$x_d$};
            
            \node[feature] (f1) at (1.7,0) {$g_1$};
            \node[feature] (f2) at (1.7,-0.8) {$g_2$};
            \node at (1.7,-1.3) {\scriptsize$\vdots$};
            \node[feature] (fd) at (1.7,-2) {$g_d$};
            
            \draw[dashed, thick, rounded corners, draw=red] (-0.5,0.4) rectangle (2.2,-2.4);
            \node[anchor=south east, font=\scriptsize] at (2.2,0.4) {$t^{(1)}=1\leq d^{(0)}=d$};
            
            \node[comp] (prod) at (3.4,-1) {$\prod$};
            \node[anchor=north west, font=\scriptsize] at (3.2,-1.2) {$t^{(2)}=d^{(1)}$};
            
            \node[output] (out) at (4.7,-1) {$f$};
            
            \draw[arrow] (x1) -- (f1);
            \draw[arrow] (x2) -- (f2);
            \draw[arrow] (xd) -- (fd);
            
            \draw[arrow] (f1) -- (prod);
            \draw[arrow] (f2) -- (prod);
            \draw[arrow] (fd) -- (prod);
            
            \draw[arrow] (prod) -- (out);
            
        \end{tikzpicture}
        \caption{Multiplicative structure}
        \label{fig:multiplicative}
    \end{subfigure}
    \caption{Illustration of additive ($f(x) = \sum_{i=1}^d g_i(x_i)$) and multiplicative ($f(x) = \prod_{i=1}^d g_i(x_i)$) composite Besov functions. Each component function $g_i$ depends on a single input dimension ($t^{(1)} = 1$), although the ambient dimension $d^{(0)}=d$ may be much larger.}
    \label{fig:composite_structures}
\end{figure}

Note that \citet{suzuki_deep_2021} did not impose effective low-dimensionality in their definition of composite Besov spaces. In contrast, our definition aligns more with the composite Hölder spaces introduced by \citet{schmidt2020nonparametric}, and it is well suited for analyzing functions approximated by NNs, where dimension reduction may occur within layers. By incorporating anisotropy within each $f_h$, the framework captures low-dimensional structures even when the ambient or latent dimensions are high. This interplay between compositional structure and anisotropy enables efficient function representation.

\section{Main results}\label{sec:main_results}

In this section, we present our main results on posterior contraction.
\Cref{table:prior_summary} summarizes the corresponding assumptions and theorems for the anisotropic and composite anisotropic Besov spaces.

\subsection{Posterior contraction in anisotropic Besov spaces}
\label{sec:ratebesov}

We first establish theoretical results for anisotropic Besov spaces. 
The following assumptions are imposed.

\begin{description}
  \item[(A1)\label{assum:a-1}] The true regression function $f_0$ satisfies $f_0\in U(\Besov_{p,q}^{\bfs}) \cap \UB$ for some $0<p, q \leq \infty$ and $\bfs \in \mathbb{R}_{++}^d$, such that $(1/p-1/2)_+ <\tilde{\bfs}$. 
  \item[(A2)\label{assum:a-2}] The distribution $P_X$ has a bounded density $p_X$ such that $\lVert{p_X}\rVert_{\infty} \leq R$ for some constant $R>0$.
  \item[(A3)\label{assum:a-3}] The true standard deviation  $\sigma_0$ satisfies $\underline{\sigma} < \sigma_0 < \overline{\sigma}$ for some constants $\underline{\sigma}$ and $\overline{\sigma}$, and the prior for $\sigma$ is supported on $[\underline{\sigma}, \overline{\sigma}]$ with a positive density throughout.
\end{description}

Assumption~\aref{assum:a-1} states that $f_0$ lies in a bounded Besov class. The condition $(1/p - 1/2)_+ < \tilde{\bfs}$ aligns with those in other theoretical studies on Besov spaces \citep{lee2022asymptotic,suzuki2018adaptivity, suzuki_deep_2021}, whereas \citet{castillo2024posterior} impose a stronger requirement $1/p< \tilde{\bfs}$ to achieve a sharper approximation result.
Assumption~\aref{assum:a-2} is minor given that the dimension $d$ is fixed. Assumption~\aref{assum:a-3} ensures that the prior for $\sigma$ is supported on a compact interval containing $\sigma_0$.
We first show that the optimal rate is achieved by suitably specified spike-and-slab and shrinkage priors, and then demonstrate that rate adaptation can be attained for both priors with slight modifications.

\subsubsection{Spike-and-slab prior}
\label{sec:regression_sas}

\begin{table}[t!]
\caption{Summary of the main results. Along with the listed assumptions, all results also require Assumptions \aref{assum:a-2} and \aref{assum:a-3}, which are taken as common assumptions.
}
\label{table:prior_summary}
\centering
\resizebox{\textwidth}{!}{  
\begin{tabular}{lllll}
\toprule
& \multicolumn{2}{c}{ Anisotropic Besov} & \multicolumn{2}{c}{ Composite Anisotropic Besov} \\
\cmidrule(lr){2-3} \cmidrule(lr){4-5}
{Prior} & {Assumptions} & {Result} & {Assumptions} & {Result} \\
\midrule
Spike-and-Slab &  \aref{assum:a-1}, \aref{cond:ss_support}, \aref{cond:ss_tail} & \cref{thm:reg_ss} & \aref{assum:a-4}/\aref{assum:a-5}, \aref{cond:ss_support}, \aref{cond:ss_tail}  & \cref{thm:general_function}(i) \\
Shrinkage & \aref{assum:a-1}, \aref{cond:shr_support}--\aref{cond:shr_spike} & \cref{thm:shrinkage} &  \aref{assum:a-4}/\aref{assum:a-5}, \aref{cond:shr_support}--\aref{cond:shr_spike}  & \cref{thm:general_function}(ii) \\
Adaptive Spike-and-Slab & \aref{assum:a-1}, \aref{cond:ss_support}, \aref{cond:ss_tail} & \cref{thm:adaptive_estimation}(i) & \aref{assum:a-4}/\aref{assum:a-5}, \aref{cond:ss_support}, \aref{cond:ss_tail} & \cref{thm:general_function}(i) \\
Adaptive Shrinkage & \aref{assum:a-1}, \aref{cond:shr_support}--\aref{cond:shr_spike}  & \cref{thm:adaptive_estimation}(ii) & \aref{assum:a-4}/\aref{assum:a-5}, \aref{cond:shr_support}--\aref{cond:shr_spike} & \cref{thm:general_function}(ii) \\
\bottomrule
\end{tabular}
}
\end{table}

Our results rely on the fact that for every $f_0\in U(\Besov_{p,q}^{\bfs})$, there exists a NN approximator $\hat f\in\Phi(L_{1n},D_{1n},S_{1n},B_1)$ achieving the optimal approximation error, where the network parameters satisfy
\begin{equation}\label{eqn:aniso_optimal_hyperparm}
    \begin{gathered}
        L_{1n}\asymp\log n , \quad D_{1n}\asymp N_n,\quad S_{1n} \asymp N_n \log n ,\quad B_{1} \propto 1,
    \end{gathered}
\end{equation}
with $N_n = \lceil n^{1/(2\tilde{\bfs}+1)} \rceil$. These network parameters depend on the unknown Besov parameters $\tilde s$ and $p$.
See \Cref{lem:anisotropic_approx} and \Cref{rmk:hyper_bigo} for details.

We place a spike-and-slab prior over $\Theta(L_{1n}, D_{1n}, S_{1n})$ using the network parameters in \eqref{eqn:aniso_optimal_hyperparm}. This implies that the network structure is determined based on the smoothness parameter and thus the procedure does not attain rate adaptation.
A spike-and-slab prior for $\theta$ is given by 
\begin{equation}\label{eqn:sparse_prior}
    \begin{aligned}
        \pi(\theta \mid \gamma,L,D,S) & = \prod_{j=1}^{T} \left[ \gamma_j \tilde{\pi}_{SL}(\theta_j)  + (1-\gamma_j) \delta_0(\theta_j) \right] ,\\ 
        \pi( \gamma \mid L,D,S) & = \frac{1}{\binom{T}{S}}I(\gamma \in \Set{0,1}^T,~\norm{\gamma}_0 = S),
    \end{aligned}
\end{equation}
where $\tilde{\pi}_{SL}$ is the slab density for the nonzero components and $T = \abs{\Theta(L,D)}$. The slab distribution is required to satisfy the following assumptions.
\begin{description}
     \item[(B1)\label{cond:ss_support}] 
     $     \log \int_{|u|>K_n} \tilde{\pi}_{SL}(u) d u \lesssim - K_n     $   for any $K_n \rightarrow \infty$.
     \item[(B2)\label{cond:ss_tail}] 
    $\log \inf_{|u| \le B_1}\tilde{\pi}_{SL}(u) \gtrsim -(\log n)^2$. 
\end{description}
Assumption~\ref{cond:ss_support} implies that $\tilde \pi_{SL}$ has exponential tails on both sides. From a nonparametric Bayesian perspective, this condition is necessary to control the prior mass outside a chosen sieve.
Assumption~\ref{cond:ss_tail} requires that $\tilde \pi_{SL}$ to place sufficient mass around the NN approximator, which is essential for ensuring adequate prior concentration in a Kullback-Leibler neighborhood.
If $\tilde \pi_{SL}$ is independent of $n$ and bounded away from zero on $[-B_1, B_1]$, then the assumption holds trivially.

\begin{example}[Uniform slab prior \citep{lee2022asymptotic,polson2018posterior}]\label{ex:uniform}
    If $\tilde{\pi}_{SL}$ is the density of the uniform distribution $U(-C_u, C_u)$ for $C_u>B_1$, then $\tilde{\pi}_{SL}$ satisfies Assumptions~\ref{cond:ss_support}--\ref{cond:ss_tail}.
\end{example}

\begin{example}[Gaussian slab prior]\label{ex:gaussian}
If $\tilde{\pi}_{SL}$ is the density of a zero-mean Gaussian distribution with fixed variance, then $\tilde{\pi}_{SL}$ satisfies Assumptions~\ref{cond:ss_support}--\ref{cond:ss_tail}.
\end{example}

\Cref{ex:uniform} is simple but has the drawback that the prior depends on $B_1$. \Cref{ex:gaussian} mitigates this issue.
We now present the posterior contraction result.

\begin{theorem}[Spike-and-slab prior]\label{thm:reg_ss}
Suppose that Assumptions~\aref{assum:a-1}--\aref{assum:a-3} hold, and that the prior distribution in \eqref{eqn:sparse_prior} is placed over $\Theta(L_{1n}, D_{1n}, S_{1n})$.
Assume further that the slab density $\tilde{\pi}_{SL}$ satisfies Assumptions~\aref{cond:ss_support}--\aref{cond:ss_tail}.
    Then, the posterior distribution concentrates at the rate $\epsilon_n = n^{-\tilde{\bfs}/(2\tilde{\bfs}+1)}(\log n)^{3/2}$, in the sense that
    \begin{equation*}
        \begin{gathered}
        \Pi\Big((f ,\sigma)\in \Phi(L_{1n}, D_{1n}, S_{1n})\times [\underline{\sigma}, \overline{\sigma}]: 
        \lVert{f - f_0}\rVert_{L^2(P_X)} + |\sigma^2 - \sigma_0^2|  > M_n \epsilon_n \mid \mathcal{D}_n\Big) \rightarrow 0
        \end{gathered}
    \end{equation*}
    in $\psingle_{f_0,\sigma_0}^{(n)}$-probability as $n\rightarrow \infty$ for any $M_n \rightarrow \infty$.
\end{theorem}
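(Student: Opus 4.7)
The plan is to verify the three standard conditions of the Ghosal--van der Vaart posterior contraction theorem for independent non-identically distributed observations, applied to model \eqref{eqn:reg:model} at rate $\epsilon_n = n^{-\tilde{\bfs}/(2\tilde{\bfs}+1)}(\log n)^{3/2}$: (i) a prior-mass lower bound $\Pi(B_n(f_0,\sigma_0;\epsilon_n)) \gtrsim \exp(-c n\epsilon_n^2)$ on a Kullback--Leibler type neighborhood of $(f_0,\sigma_0)$; (ii) a sieve $\mathcal F_n$ whose complement carries prior mass at most $\exp(-(c+4)n\epsilon_n^2)$; and (iii) an entropy bound $\log \mathcal N(\epsilon_n,\mathcal F_n,\|\cdot\|_\infty) \lesssim n\epsilon_n^2$. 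The key input supplying the correct exponent is the network approximation result (\Cref{lem:anisotropic_approx}): for every $f_0 \in U(\Besov_{p,q}^{\bfs})\cap \UB$ under~\aref{assum:a-1}, there exists $\hat\theta \in \Theta(L_{1n},D_{1n},S_{1n},B_1)$ with $\|f_{\hat\theta}-f_0\|_\infty \lesssim n^{-\tilde{\bfs}/(2\tilde{\bfs}+1)}$, using the architecture in \eqref{eqn:aniso_optimal_hyperparm}.

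For the prior-mass condition, I would condition on the event that the spike-and-slab indicator $\gamma$ selects exactly $\mathrm{supp}(\hat\theta)$. With $T \lesssim L_{1n} D_{1n}^2 \lesssim N_n^2 \log n$ and $S_{1n} \asymp N_n \log n$, Stirling gives $\log\binom{T}{S_{1n}} \lesssim S_{1n}\log(eT/S_{1n}) \lesssim N_n(\log n)^2$, which is of smaller order than $n\epsilon_n^2 \asymp N_n(\log n)^3$. Conditional on this support selection, I use Lipschitz continuity of $\theta \mapsto f_\theta$ on bounded parameter sets (the Lipschitz constant being polynomial in $L_{1n}$, $D_{1n}$, and $B_1$) to translate an $\ell_\infty$-ball of radius $\epsilon_n/\mathrm{poly}(n)$ around the nonzero entries of $\hat\theta$ into a $\|\cdot\|_\infty$-ball of radius $\lesssim \epsilon_n$ around $f_{\hat\theta}$, and hence around $f_0$ by the triangle inequality. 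Assumption~\aref{cond:ss_tail} gives $\log \tilde\pi_{SL}(u) \gtrsim -(\log n)^2$ uniformly on $[-B_1,B_1]$, so the slab contribution to the log-prior mass is at least $-S_{1n}(\log n)^2 \asymp -n\epsilon_n^2$, which is the dominant term. The continuous prior for $\sigma$, positive at $\sigma_0$ by~\aref{assum:a-3}, contributes $O(1)$. Translating supremum closeness of $f$ and $|\sigma-\sigma_0|\lesssim \epsilon_n$ into an average KL neighborhood of the Gaussian regression densities is routine under the bounded-density condition~\aref{assum:a-2}.

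For the sieve, I would take $\mathcal F_n = \Phi(L_{1n},D_{1n},S_{1n},B_n)\times[\underline{\sigma},\overline{\sigma}]$ with $B_n$ slowly diverging so that $B_n \gtrsim n\epsilon_n^2$. The complement prior mass is bounded by $S_{1n}\int_{|u|>B_n}\tilde\pi_{SL}(u)\,du \lesssim S_{1n}\exp(-cB_n)$ via~\aref{cond:ss_support}, which is negligible against $\exp(-(c+4)n\epsilon_n^2)$. For the entropy, I would enumerate over the $\binom{T}{S_{1n}}$ support patterns and place a uniform $\delta$-grid on each of the $S_{1n}$ active weights in $[-B_n,B_n]$; invoking the NN Lipschitz bound once more converts parameter-space covers into $\|\cdot\|_\infty$-covers of $\Phi(L_{1n},D_{1n},S_{1n},B_n)$. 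Combined with a one-dimensional cover of $[\underline{\sigma},\overline{\sigma}]$, this yields $\log\mathcal N \lesssim S_{1n}\log(TB_n/\delta) \lesssim N_n(\log n)^2 \lesssim n\epsilon_n^2$ for $\delta \asymp \epsilon_n/\mathrm{poly}(n)$. The clipping by $\mathrm{Clip}_1$ in the definition of $\Phi$ is harmless since it is $1$-Lipschitz and preserves covering numbers.

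The main obstacle is the careful accounting of the polylogarithmic factors: both $\log\binom{T}{S_{1n}}$ and $-\log\tilde\pi_{SL}$ contribute factors of $\log n$, and it is the combined exponent $S_{1n}(\log n)^2 \asymp N_n(\log n)^3 = n\epsilon_n^2$ that forces the extra $(\log n)^{3/2}$ factor in the stated rate. Once (i)--(iii) are verified, the standard posterior contraction theorem delivers contraction at $\epsilon_n$ in the Hellinger distance on the joint $(X,Y)$ density, which under $\sigma \in [\underline{\sigma},\overline{\sigma}]$ and~\aref{assum:a-2} is equivalent (up to constants) to $\|f-f_0\|_{L^2(P_X)}+|\sigma^2-\sigma_0^2|$, yielding the stated conclusion.
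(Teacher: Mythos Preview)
Your strategy coincides with the paper's: verify the three Ghosal--van der Vaart conditions (packaged here as \Cref{lem:consistency_unknown_var}) with sieve $\Phi(L_{1n},D_{1n},S_{1n},B_n)$, using \Cref{lem:anisotropic_approx} for the prior-mass step and \Cref{lem:covering} together with \Cref{lem:nn_norm} for the entropy. One step as written fails, however. The sup-norm approximation $\|f_{\hat\theta}-f_0\|_\infty \lesssim n^{-\tilde{\bfs}/(2\tilde{\bfs}+1)}$ is \emph{not} available under Assumption~\aref{assum:a-1}: \Cref{lem:anisotropic_approx} yields an $L^r$-bound only when $(1/p-1/r)_+<\tilde{\bfs}$, and \aref{assum:a-1} secures this for $r=2$, not $r=\infty$ (requiring $r=\infty$ would force $1/p<\tilde{\bfs}$, precisely the stronger hypothesis the paper avoids). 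The paper therefore uses $\|f_{\hat\theta}-f_0\|_{L^2(P_X)}\lesssim \epsilon_n$ and, in the prior-mass step, bounds $\|f_\theta-f_0\|_{L^2(P_X)}\le \|f_\theta-f_{\hat\theta}\|_{\infty}+\|f_{\hat\theta}-f_0\|_{L^2(P_X)}$; only the first summand, handled via \Cref{lem:nn_norm}, requires a sup-norm estimate. Your argument goes through verbatim once you make this substitution.

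A second inaccuracy concerns the Lipschitz constant in \Cref{lem:nn_norm}: it equals $L_{1n}(B\vee 1)^{L_{1n}-1}(D_{1n}+1)^{L_{1n}}$, which with $L_{1n}\asymp\log n$ in the exponent is $\exp(O((\log n)^2))$, not polynomial in $n$ as you state. Consequently the $\ell_\infty$-radius needed in the prior-mass step is $\epsilon_n\exp(-c(\log n)^2)$ rather than $\epsilon_n/\mathrm{poly}(n)$, and the covering-number bound is $S_{1n}\cdot O((\log n)^2)=N_n(\log n)^3$, not $N_n(\log n)^2$. Your headline identity $S_{1n}(\log n)^2\asymp n\epsilon_n^2$ remains correct and absorbs these slips, but it is worth noting that this super-polynomial Lipschitz constant (together with \aref{cond:ss_tail}) is exactly what forces the $(\log n)^{3/2}$ factor in the rate.
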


\begin{proof}
See \Cref{subsec:proof_reg_ss}.
\end{proof}

The contraction rate $\epsilon_n$ is near-minimax rate and only depends on intrinsic smoothness $\tilde{\bfs}$ of the function. \citet{suzuki_deep_2021} showed that the empirical risk minimizer under the squared loss achieves the same rate $\epsilon_n$. Therefore, BNNs attain the same theoretical optimality as their frequentist counterpart for anisotropic Besov spaces.

\begin{remark}\label{rmk:unbounded_ss}
The exponential tail condition in \aref{cond:ss_support} is imposed to effectively control model complexity, as measured by the metric entropy of a suitably chosen sieve \citep{ghosal2007convergence,ghosal2000convergence}. If one instead uses a polynomial-tailed density, the entropy calculation can be bypassed using the Vapnik-Chervonenkis (VC)-dimension technique \citep{bartlett2019nearly}. In that case, the posterior contraction rate is slightly degraded to $n^{-\tilde{s}/(2\tilde{s} + 1)} (\log n)^2$.
See \Cref{appendix:sparse_ubd} for further details.
\end{remark}

\subsubsection{Shrinkage prior}
\label{sec:shr}

The main bottleneck in using spike-and-slab priors lies in the computational burden introduced by the point-mass component of the prior. To address this issue, we extend the results to continuous shrinkage priors, which offer a more computationally efficient alternative. These priors retain the sparsity-promoting nature of spike-and-slab priors while avoiding the complexity of variable-dimensional posterior inference. This leads to scalable and practical Bayesian procedures suitable for high-dimensional settings, where computational tractability is essential.

Using the network parameters in \eqref{eqn:aniso_optimal_hyperparm}, we place a prior over $\Theta(L_{1n}, D_{1n})$. That is, sparsity is not imposed explicitly but is instead induced implicitly through the use of shrinkage priors.
We express a shrinkage prior as
\begin{equation}\label{eqn:shr_prior}
    \pi(\theta \mid L,D) = \prod_{j=1}^{T} \tilde \pi_{SH}(\theta_j),
\end{equation}
where $\tilde \pi_{SH}$ is a density on $\mathbb R$ and $T = \abs{\Theta(L,D)}$. 
Specifically, $\tilde \pi_{SH}$ is assumed to satisfy the following assumptions.
\begin{description}
    \item[(C1)\label{cond:shr_support}] 
     $
     \log \int_{|u| > K_n} \tilde \pi_{SH} (u) d u \lesssim - K_n
     $ for any $K_n \rightarrow \infty$.
    \item[(C2)\label{cond:shr_tail}] 
    $
         \log \inf_{|u|\le B_1} \tilde{\pi}_{SH}(u) \gtrsim -(\log n)^2
    $.
    \item[(C3)\label{cond:shr_spike}] $ \log\int_{|u|> a_{n}} \tilde{\pi}_{SH}(u) du\le-C_A (\log n)^2$ for a sufficiently large constant $C_A>0$,
    where $a_n = e^{-2L_{1n}\log n}$.    
\end{description}

Assumptions~\aref{cond:shr_support}--\aref{cond:shr_tail} serve similar roles to Assumptions~\aref{cond:ss_support}--\aref{cond:ss_tail}. Assumption~\aref{cond:shr_spike} imposes an additional constraint that serves as a continuous analogue of the spike component in spike-and-slab priors. This assumption is indeed restrictive: for rapidly decreasing $a_n$, it requires the prior to concentrate nearly all its mass on $[-a_n, a_n]$. As a result, widely used shrinkage priors such as the horseshoe \citep{Carvalho_Polson_Scott_2010} do not satisfy Assumptions~\aref{cond:shr_support}--\aref{cond:shr_spike} directly. Nevertheless, the assumptions can be satisfied by designing priors that place most of their mass near zero, as required in Assumption~\aref{cond:shr_spike}, while also satisfying the exponential tail decay in Assumption~\aref{cond:shr_support}. The following examples illustrate such constructions, with verification given in Appendices \ref{subsec:proof_ex_relaxed_ss} and \ref{subsec:proof_ex_gaussian_mixture}.

\begin{example}[Relaxed spike-and-slab \citep{lee2022asymptotic}]\label{ex:relaxed_ss}
Let $\varphi_k$ denote a sub-Weibull density with tail index $0 < k \leq 1$ \citep{vladimirova2020sub} satisfying $\int_{|u|>K} \varphi_k(u)\,du \leq C_1 \exp(-C_2 K^{1/k})$ for any $K > 0$ and some constants $C_1,C_2>0$. For $C_u>B_1$,
define $\tilde \pi_{SH}$  as
\begin{align}
\label{eqn:Gmm}
\tilde \pi_{SH}(u) = \pi_{1n} \frac{1}{\sigma_{1n}} \varphi_k\!\left(\frac{u}{\sigma_{1n}} \right) + \pi_{2n} U(u; -C_u,C_u),
\end{align}
where $\pi_{1n} = 1 - e^{-2C_A(\log n)^2}$, $\pi_{2n} = e^{-2C_A(\log n)^2}$, $\sigma_{1n} =a_n(2C_A(\log n)^2)^{-k}$ and a sufficiently large $C_A>0$. Then, $\tilde \pi_{SH}$ satisfies Assumptions~\aref{cond:shr_support}--\aref{cond:shr_spike}. 
\end{example}

\begin{example}[Relaxed spike-and-slab; Gaussian slab]
\label{ex:gaussian_mixture}
    If the uniform component in~\eqref{eqn:Gmm} is replaced by the density of a zero-mean Gaussian distribution with fixed variance, then $\tilde \pi_{SH}$ satisfies Assumptions~\aref{cond:shr_support}--\aref{cond:shr_spike}. 
\end{example}

\begin{remark}
    For $k = 1$, a sub-Weibull density $\varphi_k$ corresponds to a sub-exponential density. For $k = 1/2$, it corresponds to a sub-Gaussian density.
    In particular, if $\varphi_k$ is chosen as a Gaussian density, the prior in \Cref{ex:gaussian_mixture} becomes a Gaussian mixture prior.
\end{remark}
Note that the mixture weight $\pi_{1n}$ in \eqref{eqn:Gmm} approaches 1. Although this may be practically undesirable, it ensures that Assumption~\aref{cond:shr_spike} is satisfied. Similar constraints have also been adopted in the literature on sparse BNNs with shrinkage priors \citep{sun2021consistent,lee2022asymptotic}.
We now formalize the contraction result for shrinkage priors. The following theorem shows that BNNs equipped with shrinkage priors achieve the near-minimax rate.

\begin{theorem}[Shrinkage prior]\label{thm:shrinkage}
Suppose that Assumptions~\aref{assum:a-1}--\aref{assum:a-3} hold, and that the prior distribution in \eqref{eqn:shr_prior} is placed over $\Theta(L_{1n}, D_{1n})$. Assume further that the continuous density $\tilde \pi_{SH}$ satisfies Assumptions~\aref{cond:shr_support}--\aref{cond:shr_spike}.
    Then, the posterior distribution concentrates at the rate $\epsilon_n = n^{-\tilde{\bfs}/(2\tilde{\bfs}+1)}(\log n)^{3/2}$, in the sense that
    \begin{equation*}
        \begin{gathered}
        \Pi\Big((f ,\sigma)\in{\Phi}(L_{1n}, D_{1n})\times [\underline{\sigma}, \overline{\sigma}]: 
        \lVert{f - f_0}\rVert_{L^2(P_X)} + |\sigma^2 - \sigma_0^2|  > M_n \epsilon_n \mid \mathcal{D}_n\Big) \rightarrow 0
        \end{gathered}
    \end{equation*}
    in $\psingle_{f_0,\sigma_0}^{(n)}$-probability as $n\rightarrow \infty$ for any $M_n \rightarrow \infty$.
\end{theorem}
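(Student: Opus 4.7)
The plan is to verify the three standard posterior contraction conditions of Ghosal, Ghosh and van der Vaart: (i) a lower bound on the prior mass of a Kullback--Leibler neighborhood of the true density, (ii) the existence of a sieve $\mathcal{F}_n$ of controlled metric entropy, and (iii) an upper bound on the prior mass outside the sieve. The outline mirrors the proof of \Cref{thm:reg_ss}, but the main new difficulty is that the shrinkage prior places continuous mass on every coordinate, so there is no deterministic support restriction to a sparse subset. The role of the point mass at zero in the spike-and-slab prior is taken over by Assumption~\aref{cond:shr_spike}, which forces an overwhelming share of the prior mass onto the cube $[-a_n, a_n]^T$, where $T = \lvert\Theta(L_{1n}, D_{1n})\rvert$.

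For the prior-mass condition, I would first invoke \Cref{lem:anisotropic_approx} to obtain a sparse approximator $\hat{\theta} \in \Theta(L_{1n}, D_{1n}, S_{1n}, B_1)$ for which $f_{\hat{\theta}}$ attains the optimal approximation rate $n^{-\tilde{\bfs}/(2\tilde{\bfs}+1)}$ (Assumption~\aref{assum:a-2} is used to pass from the ambient norm to $L^2(P_X)$ if needed). Since a ReLU network is Lipschitz in its parameters with a Lipschitz constant of order $e^{C L_{1n}\log n}$, the cube of radius $\delta_n = e^{-C L_{1n}\log n}\epsilon_n$ centered at $\hat{\theta}$ is mapped by $\theta \mapsto f_\theta$ into a supremum-norm $\epsilon_n$-ball of $f_{\hat{\theta}}$. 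The prior mass of this cube factorizes across coordinates: on the $\|\hat{\theta}\|_0 \le S_{1n}$ active coordinates, Assumption~\aref{cond:shr_tail} gives $\log \tilde{\pi}_{SH}(u \mid L_{1n}, D_{1n}) \gtrsim -(\log n)^2$ uniformly on $[-B_1, B_1]$, contributing at least $-S_{1n}(\log n)^2$ in log scale; on the remaining $T - S_{1n}$ inactive coordinates, Assumption~\aref{cond:shr_spike} (with $a_n = e^{-L_{1n}\log n}$, which dominates $\delta_n$ after tuning $C$) shows that a $\delta_n$-neighborhood of zero carries mass $1 - o(1/[N_n(\log n)^2])$, producing a log-mass loss of lower order than $n\epsilon_n^2$. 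Combining these pieces yields $\Pi(B_n) \gtrsim e^{-C n \epsilon_n^2}$ for the claimed $\epsilon_n = n^{-\tilde{\bfs}/(2\tilde{\bfs}+1)}(\log n)^{3/2}$, and the corresponding concentration for $\sigma$ is supplied by Assumption~\aref{assum:a-3}.

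For the sieve and its complement, I would set
\begin{equation*}
    \mathcal{F}_n = \{ f_\theta : \theta \in \Theta(L_{1n}, D_{1n}),\ \|\theta\|_\infty \le B_n,\ \#\{ j : |\theta_j| > a_n \} \le S_n \},
\end{equation*}
with $B_n \asymp n\epsilon_n^2$ and $S_n \asymp N_n \log n$. Thresholding the $a_n$-small coordinates to zero introduces a supremum-norm perturbation of at most $e^{C L_{1n}\log n}\cdot T a_n \lesssim \epsilon_n$ by the same parameter-Lipschitz estimate (this is precisely the reason \aref{cond:shr_spike} pins $a_n$ at $e^{-L_{1n}\log n}$), so each element of $\mathcal{F}_n$ lies within $\epsilon_n$ of a network in $\Phi(L_{1n}, D_{1n}, S_n, B_n)$. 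Standard covering-number estimates for bounded sparse ReLU networks then yield $\log N(\epsilon_n, \mathcal{F}_n, \|\cdot\|_\infty) \lesssim n\epsilon_n^2$. The prior mass of $\mathcal{F}_n^c$ is controlled by a union bound: the event $\{\|\theta\|_\infty > B_n\}$ has mass at most $2T e^{-C B_n}$ by \aref{cond:shr_support}, and $\{ \#\{j : |\theta_j| > a_n\} > S_n \}$ is a tail event of a $\mathrm{Binomial}(T, q_n)$ with $q_n = o(1/[N_n(\log n)^2])$, which is bounded at the required $e^{-K n\epsilon_n^2}$ level by a Chernoff estimate after calibrating the constants.

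The main obstacle is the three-way interplay between the poly-logarithmic network depth, the continuous prior, and the required effective sparsity. In the spike-and-slab setting, active and inactive coordinates are cleanly separated, so the complexity of the prior support is immediate; here all $T$ coordinates are nonzero almost surely, and the effective sparsity must be extracted probabilistically through Assumption~\aref{cond:shr_spike}. Making all three scales line up --- the approximation error $\epsilon_n$, the Lipschitz perturbation contributed by the $a_n$-small weights, and the Binomial tail for the number of $a_n$-large weights --- is the delicate part, and it is exactly why the explicit threshold $a_n = e^{-L_{1n}\log n}$ is imposed in \aref{cond:shr_spike}. With these three ingredients in place, the GGV theorem delivers the claimed contraction rate for $f$ in the $L^2(P_X)$-metric, and the rate for $\sigma$ follows from the positivity of the prior density under Assumption~\aref{assum:a-3}.
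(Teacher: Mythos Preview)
Your proposal is correct and follows essentially the same route as the paper: the paper defines the sieve $\mathcal{F}_n = \Phi(L_{1n},D_{1n},S_{1n},B_n,a_n)$ exactly as you describe (bounded parameters plus an $a_n$-thresholded effective sparsity), uses \Cref{lem:covering_a} for the entropy bound after the Lipschitz thresholding step, controls $\Pi(\mathcal{F}\setminus\mathcal{F}_n)$ by the union of the exponential-tail event (via \aref{cond:shr_support}) and a Chernoff bound on the $\mathrm{Binomial}(T_{1n},1-u_n)$ count of $a_n$-large coordinates (via \aref{cond:shr_spike}), and lower-bounds the KL prior mass by factoring over active coordinates (using \aref{cond:shr_tail}) and inactive coordinates lying in $[-a_n,a_n]$ (using \aref{cond:shr_spike}). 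The only cosmetic differences are that the paper takes $B_n = n^{K_1}$ rather than $B_n \asymp n\epsilon_n^2$, and that the thresholding perturbation is bounded via the $\ell_\infty$-parameter Lipschitz estimate of \Cref{lem:nn_norm} directly (so the extra factor of $T$ you insert is not needed, though it is harmless).
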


\begin{proof}
See \Cref{subsec:proof_shrinkage}.
\end{proof}

\subsubsection{Rate adaptation}

In \Cref{sec:regression_sas} and \Cref{sec:shr}, the priors depend on the smoothness parameter $s$, indicating that the procedures are not rate-adaptive. By placing suitable priors on the network parameters, the results can be extended to achieve rate adaptation. Therefore, the optimal contraction rate is achieved without knowing the characteristics of the true function.

Specifically, instead of the network parameters in \eqref{eqn:aniso_optimal_hyperparm}, we consider the depth $\tilde{L}_{n}=\lceil C_L \log n \rceil$   for a sufficiently large $C_L>0$ and the priors $\pi_D$ and $\pi_S$ on the width $D$ and sparsity $S$, respectively, given by 
    \begin{equation}
    \label{eqn:adaptive_prior}
        \begin{gathered}
          \pi_D(D) \propto e^{-\lambda_D D (\log D)^3},\quad \pi_S(S) \propto e^{-\lambda_S S (\log S)^2},
        \end{gathered}
    \end{equation}
    for constants $\lambda_D>0$ and $\lambda_S > 0$.  

\begin{theorem}[Adaptation]\label{thm:adaptive_estimation}
  Suppose that Assumptions~\aref{assum:a-1}--\aref{assum:a-3} hold, and that the prior distributions satisfy either one of the following conditions:
  \begin{itemize}[leftmargin=1.8em,itemsep=-0.3em,topsep=0em]
  \item[(i)] The priors $\pi_D$ and $\pi_S$ in \eqref{eqn:adaptive_prior} are assigned to $(D,S)$, respectively, and the prior in \eqref{eqn:sparse_prior} is placed over $\Theta(\tilde{L}_{n}, D, S)$ conditional on $(D, S)$. The slab part $\tilde{\pi}_{SL}$ satisfies Assumptions~\aref{cond:ss_support}--\aref{cond:ss_tail}.
  \item [(ii)] The prior $\pi_D$ in \eqref{eqn:adaptive_prior} is assigned to $D$, and the prior in \eqref{eqn:shr_prior} is placed over $\Theta(\tilde{L}_{n}, D)$ conditional on $D$. The continuous density $\tilde \pi_{SH}$ satisfies Assumptions~\aref{cond:shr_support}--\aref{cond:shr_spike}.
  \end{itemize}
    If $C_L$ is sufficiently large so that $\tilde L_{n}\ge L_{1n}$, then the posterior distribution concentrates at the rate $\epsilon_n = n^{-\tilde{\bfs}/(2\tilde{\bfs}+1)}(\log n)^{3/2}$, in the sense that
    \begin{equation*}
        \begin{gathered}
        \Pi\Big((f ,\sigma)\in{\Phi}(\tilde L_{n})\times [\underline{\sigma}, \overline{\sigma}]: 
        \lVert{f - f_0}\rVert_{L^2(P_X)} + |\sigma^2 - \sigma_0^2|  > M_n \epsilon_n \mid \mathcal{D}_n\Big) \rightarrow 0
        \end{gathered}
    \end{equation*}
    in $\psingle_{f_0,\sigma_0}^{(n)}$-probability as $n\rightarrow \infty$ for any $M_n \rightarrow \infty$. 
    \end{theorem}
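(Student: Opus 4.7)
The plan is to apply the Ghosal--van der Vaart framework for posterior contraction in the hierarchical setting, reducing to the results already established in \Cref{thm:reg_ss} and \Cref{thm:shrinkage}. The key numerical observation driving adaptation is
\[
n\epsilon_n^2 \;\asymp\; N_n(\log n)^3 \;\asymp\; D_{1n}(\log D_{1n})^3 \;\asymp\; S_{1n}(\log S_{1n})^2,
\]
so the decay rates hard-coded into the hyperpriors $\pi_D$ and $\pi_S$ in \eqref{eqn:adaptive_prior} are precisely calibrated to the oracle architecture from \eqref{eqn:aniso_optimal_hyperparm}, even though that architecture depends on the unknown smoothness $\tilde{\bfs}$. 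The hypothesis $\tilde L_n \geq L_{1n}$ is benign because any approximator in $\Phi(L_{1n}, D_{1n}, S_{1n}, B_1)$ from \Cref{lem:anisotropic_approx} can be embedded into a $\tilde L_n$-deep network by appending identity layers (a standard ReLU trick), at the cost of only a constant-factor increase in sparsity and width.

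For the prior-mass condition, I would restrict to the oracle event $\{D = D_{1n},\,S = S_{1n}\}$ in case (i) and to $\{D = D_{1n}\}$ in case (ii). On this event, the conditional prior on $\theta$ is exactly the prior in \eqref{eqn:sparse_prior} (resp.~\eqref{eqn:shr_prior}) over $\Theta(\tilde L_n, D_{1n}, S_{1n})$ (resp.~$\Theta(\tilde L_n, D_{1n})$), so the Kullback--Leibler prior-mass bound proved in \Cref{thm:reg_ss} (resp.~\Cref{thm:shrinkage}) applies verbatim, and the noise-variance factor is handled by Assumption~\aref{assum:a-3} as there. The additional multiplicative cost from the hyperpriors is $\pi_D(D_{1n})\pi_S(S_{1n}) \gtrsim \exp(-C\,n\epsilon_n^2)$, which is absorbed into the standard prior-mass inequality after inflating $M_n$. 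For the sieve, I would take
\[
\mathcal F_n \;=\; \bigcup_{D\le \bar D_n}\ \bigcup_{S\le \bar S_n}\Phi(\tilde L_n, D, S, \bar B_n),
\]
with $\bar D_n,\bar S_n$ of order $N_n\log n$ (times a large constant) and $\bar B_n$ a polynomial in $n$ chosen via Assumption~\aref{cond:ss_support} or~\aref{cond:shr_support}. Standard covering-number bounds for sparse ReLU networks give $\log N(\epsilon_n,\mathcal F_n,\|\cdot\|_\infty)\lesssim \bar S_n(\log n)^2 \asymp n\epsilon_n^2$, while the exponential tails of $\pi_D,\pi_S$ in \eqref{eqn:adaptive_prior} make $\Pi(\mathcal F_n^c)\le e^{-C'n\epsilon_n^2}$ after summing the geometric tails over $D>\bar D_n$ and $S>\bar S_n$.

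The main obstacle is the shrinkage case (ii), where sparsity is not imposed explicitly and one cannot simply condition on $S=S_{1n}$. Here I would mimic the device used in \Cref{thm:shrinkage}: Assumption~\aref{cond:shr_spike} forces all but an effective sparsity of order $N_n\log n$ weights to fall in the spike region $[-a_n,a_n]$ with overwhelming prior probability, and this must now be propagated uniformly over the hyperprior on $D$. Balancing the $(\log N)^3$ and $(\log H)^2$ penalties in \eqref{eqn:adaptive_prior} against both the prior-mass and sieve-mass bounds, while ensuring that the same $(\log n)^{3/2}$ rate factor suffices despite the extra hyperprior cost, is the delicate bookkeeping step that constitutes the bulk of the argument.
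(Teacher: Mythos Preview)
Your proposal is correct and follows essentially the same route as the paper: verify the three Ghosal--van der Vaart conditions by (i) building a sieve as a union over $D,S$ bounded by suitable cutoffs and $B_n$ polynomial, (ii) conditioning on a single oracle architecture to reuse the prior-mass bound from \Cref{thm:reg_ss}/\Cref{thm:shrinkage}, and (iii) summing the exponential tails of $\pi_D,\pi_S$. The only cosmetic differences are that the paper conditions on $(D,S)=(\tilde N_n,\tilde H_n)$ with $\tilde N_n\asymp N_n$ and $\tilde H_n\asymp N_n\log n$ rather than on the exact oracle $(D_{1n},S_{1n})$, and accordingly takes the sieve cutoff for $D$ of order $N_n$ rather than $N_n\log n$; both choices work since the key numerical identity you identified, $n\epsilon_n^2\asymp N_n(\log n)^3$, makes either scaling give the right balance.
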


\begin{proof}
See \Cref{subsec:proof_adaptive}.
\end{proof}

In both cases, the priors are placed over $\Theta(\tilde{L}_{n})$, and the procedures achieve rate adaptation once $C_L$ is chosen sufficiently large. Although the priors do not depend on the Besov parameters for the true function, the conditions on $\tilde{\pi}_{SL}$ and $\tilde{\pi}_{SH}$ still involve these unknown quantities. Nevertheless, the required conditions can be satisfied without explicit knowledge of them. Specifically, if \Cref{ex:uniform} and \Cref{ex:relaxed_ss} are defined with a sufficiently large constant $C_v$ such that $C_v > B_1$, the conditions hold regardless of the value of $B_1$, which depends on the Besov parameters. In contrast, \Cref{ex:gaussian} and \Cref{ex:gaussian_mixture} are already independent of these parameters.
Therefore, rate adaptation can be readily achieved in practice. 

\subsection{Posterior contraction in composite Besov spaces}\label{subsec:composite}

In this section, we show that BNNs attain the minimax optimal rate over composite Besov classes. In other words, we demonstrate that BNNs outperform traditional statistical approaches by achieving the minimax optimal convergence rate. Following \citet{suzuki_deep_2021}, we assume that the true regression function satisfies one of the following two structural conditions.

\begin{description}
  \item[(A4)\label{assum:a-4}] The true regression function $f_0$ satisfies $f_0 \in \Besov_{p,q}^{\dnaught,\tnaught,\bfsnaught} \cap \UB$ for some $0<p, q \leq \infty$, $\dnaught$, $\tnaught$, and $\bfsnaught$, such that $\tilde{\bfs}^{(1)} > (1/p-1/2)_+$ and $\tilde{\bfs}^{(h)} > 1/p $ for $h =2,\dots,H$.
  \item[(A5)\label{assum:a-5}] The true regression function $f_0$ is defined as $f_0 = f_2 \circ f_1$, where $f_1 = A \cdot + b$ with $A \in \mathbb{R}^{d^{(1)} \times d^{(0)}}$ and $b \in \mathbb{R}^{d^{(1)}}$ such that $d^{(1)} \leq d^{(0)}$ and $f_1(x) \in [0,1]^{d^{(1)}}$ for all $x\in[0,1]^{d^{(0)}}$, and $f_2 \in U(\Besov_{p,q}^{\bfs^{(2)}}) \cap \UB$ with $0<p, q \leq \infty$, and $\bfs^{(2)} \in \mathbb{R}_{++}^{d^{(1)}}$ such that $\tilde{\bfs}^{(2)} > (1/p-1/2)_+$.
\end{description}

Assumption~\aref{assum:a-4} corresponds to the composite Besov space defined in \Cref{sec:compo}. We note that our assumptions are comparable to those in \citet{suzuki_deep_2021}.
Assumption~\aref{assum:a-5} defines a nontrivial Besov class involving an affine transformation, which is a specific case of the general class $\Besov_{p,q}^{\dnaught,\tnaught,\bfsnaught}$. However, it is not contained within Assumption~\aref{assum:a-4} owing to the relaxed smoothness requirement on $\tilde{\bfs}^{(2)}$.
Under Assumption \aref{assum:a-4}, define  $t^{\ast(h)} = \underline{\bfs}^{(h)}/\tilde{\bfs}^{(h)}$, $\tilde{\bfs}^{\ast(h)} = \tilde{\bfs}^{(h)} \prod_{k=h+1}^H \{ (\underline{\bfs}^{(k)} - t^{\ast(k)}/p) \wedge 1 \}$, $h\in[H]$, and $h^\ast = \argmin_{h \in [H]} \tilde{\bfs}^{\ast(h)}$.  We define the intrinsic dimension as $t^\ast = t^{(h^\ast)}$ and the intrinsic smoothness as $\tilde{\bfs}^\ast = \tilde{\bfs}^{\ast(h^\ast)}$. 
For Assumption \aref{assum:a-5}, we adopt the same definitions with $h^\ast = 2$. \citet{suzuki_deep_2021} obtained a lower bound of the minimax risk that roughly matches $n^{-\tilde{\bfs}^\ast/(2\tilde{\bfs}^\ast+1)}$. 
The following examples illustrate practically relevant functions that are covered by either Assumption~\aref{assum:a-4} or Assumption~\aref{assum:a-5}.

\begin{example}[Additive function; \cref{fig:additive}]\label{ex:additive}
    Suppose $f(x) = \sum_{i=1}^d g_i(x_i)$, where $g_i \in U(\Besov_{p,q}^{s_0})$ for $s_0 > (1/p-1/2)_+$. Then, $f$ is covered by Assumption~\aref{assum:a-4} in the form $f \in \Besov_{p,q}^{(d, d, 1), (1, d), (s_0, s^{(2)})}$ for $s^{(2)}$ such that $\tilde{s}^{(2)}$ can be taken arbitrarily large, reflecting the additive structure.
\end{example}

\begin{example}[Multiplicative function; \Cref{fig:multiplicative}]\label{ex:multi}
    Suppose $f(x)=\prod_{i=1}^d g_i(x_i)$, where $g_i\in U(\Besov_{p,q}^{s_0})$ for $s_0>(1/p-1/2)_+$. This $f$ is also covered by Assumption~\aref{assum:a-4}, similar to the additive case in \Cref{ex:additive}.
\end{example}

\begin{example}[Rotation; \Cref{fig:ex_aniso}]\label{ex:rotation}
    Let $R_\tau \in \mathbb{R}^{d \times d}$ be the rotation matrix by angle $\tau$. Suppose  $f = g\circ(A\cdot+b)$, where $g \in U(\Besov_{p,q}^{\bfs^{(2)}})\cap\UB$, $A=R_\tau/\sqrt{d}$ and $b=(I-R_\tau)(1/2, \cdots, 1/2)^T$. This $f$ is covered by Assumption~\aref{assum:a-5}.
  \end{example}

\begin{example}[Piecewise function]\label{ex:piecewise}
    The indicator function of a hyper-rectangle lies in $\Besov_{p,\infty}^{s}$ if $\overline{\bfs} \leq 1/p$ for $1 \leq p < \infty$ \citep{sickel2020regularity}. For hyper-rectangles $A_i\subset [0,1]^d$, suppose $f(x) = \sum_{i=1}^H I(x \in A_i)\, g_i(x)$, where $g_i \in U(\Besov_{p,\infty}^{s_0})$ for $(1/p - 1/2)_+ < \tilde{s}_0$ and $\overline{\bfs}_0 \leq 1/p$.
    This $f$ is covered by Assumption~\aref{assum:a-4} using Examples~\ref{ex:additive} and~\ref{ex:multi}. 
\end{example}

We now establish posterior contraction properties for the composite Besov spaces. Similar to the anisotropic Besov case, our results rely on the existence of an optimal NN approximator $\hat{f} \in \Phi(L_{2n}, D_{2n}, S_{2n}, B_2)$ under Assumption~\aref{assum:a-4}, and $\hat{f} \in \Phi(L_{3n}, D_{3n}, S_{3n}, B_3)$ under Assumption~\aref{assum:a-5}, where 
$$
L_{kn} \asymp \log n,\quad D_{kn} \asymp N_n^\ast,\quad S_{kn} \asymp N_n^\ast \log n,\quad B_{k} \propto 1,\quad k=2,3,
$$
with $N_n^\ast = \lceil n^{1/(2\tilde{\bfs}^{(\ast)}+1)} \rceil$. Therefore, the network parameters share the same asymptotic orders for $k = 2, 3$, although their specific values differ. 
For further details, see \Cref{lem:anisotropic_approx} and \Cref{rmk:hyper_bigo}. The posterior contraction results are formalized below.

\begin{theorem}[Composite anisotropic Besov]\label{thm:general_function}
Suppose that Assumptions~\aref{assum:a-2}--\aref{assum:a-3} hold, and that either Assumption~\aref{assum:a-4} or Assumption~\aref{assum:a-5} is satisfied as follows.
  \begin{itemize}[leftmargin=1.8em,itemsep=-0.3em,topsep=0em]
  \item[(i)] If Assumption~\aref{assum:a-4} holds, assume that the priors and conditions in one of Theorems~\ref{thm:reg_ss}, \ref{thm:shrinkage}, and \ref{thm:adaptive_estimation} are satisfied with $(L_{2n},D_{2n},S_{2n})$ in place of $(L_{1n},D_{1n},S_{1n})$.
  \item [(ii)] If Assumption~\aref{assum:a-5} holds, assume that the priors and conditions in one of Theorems~\ref{thm:reg_ss}, \ref{thm:shrinkage}, and \ref{thm:adaptive_estimation} are satisfied with $(L_{3n},D_{3n},S_{3n})$ in place of $(L_{1n},D_{1n},S_{1n})$.
  \end{itemize}
Then, the posterior distribution concentrates at the rate $\epsilon_n = n^{-\tilde{\bfs}^\ast/(2\tilde{\bfs}^\ast+1)}(\log n)^{3/2}$.
\end{theorem}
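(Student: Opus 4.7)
The plan is to show that the proofs of Theorems~\ref{thm:reg_ss}--\ref{thm:adaptive_estimation} carry over essentially verbatim once the anisotropic approximator is replaced by its composite counterpart. The only role of Assumption~\aref{assum:a-1} in those three theorems is (a) to supply, via \Cref{lem:anisotropic_approx}, a NN $\hat f \in \Phi(L_{1n},D_{1n},S_{1n},B_1)$ with $\|\hat f - f_0\|_\infty$ of the desired order, and (b) to fix $N_n = \lceil n^{1/(2\tilde{\bfs}+1)}\rceil$. Under \aref{assum:a-4}, \Cref{lem:anisotropic_approx} furnishes an analogous approximator $\hat f \in \Phi(L_{2n},D_{2n},S_{2n},B_2)$ with approximation error of order $n^{-\tilde{\bfs}^\ast/(2\tilde{\bfs}^\ast+1)}$ (up to log factors), where $L_{2n},D_{2n},S_{2n}$ have the same asymptotic orders as their anisotropic counterparts but with $N_n$ replaced by $N_n^\ast = \lceil n^{1/(2\tilde{\bfs}^\ast+1)}\rceil$. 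Under \aref{assum:a-5}, the affine map $f_1 = A\cdot + b$ is realized exactly by one NN layer, so only $f_2$ contributes to the error and \Cref{lem:anisotropic_approx} is applied on $[0,1]^{d^{(1)}}$, yielding $\hat f \in \Phi(L_{3n},D_{3n},S_{3n},B_3)$ with the same rate.

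With the approximator in hand, the proof follows the standard Ghosal--van der Vaart-type program. \textbf{Prior concentration}: Assumptions~\aref{cond:ss_tail} or \aref{cond:shr_tail} give $\log \tilde{\pi}_{SL}(u) \gtrsim -(\log n)^2$ (resp.\ the analogous bound for $\tilde{\pi}_{SH}$) uniformly on $[-B_k,B_k]$, so the prior assigns mass at least $\exp(-c_1 S_{kn}(\log n)^2) \asymp \exp(-c_2 N_n^\ast (\log n)^3)$ to a Kullback--Leibler neighborhood of $(f_0,\sigma_0)$, and this is precisely of order $\exp(-c_3 n\epsilon_n^2)$ since $n\epsilon_n^2 \asymp N_n^\ast(\log n)^3$. \textbf{Sieve control}: the sieve $\mathcal{F}_n = \Phi(L_{kn},D_{kn},S_{kn},K_n)$ with $K_n$ polylogarithmic satisfies $\Pi(\mathcal{F}_n^c) \le e^{-Cn\epsilon_n^2}$ by \aref{cond:ss_support}/\aref{cond:shr_support} (combined with \aref{cond:shr_spike} in the shrinkage case), while the standard metric-entropy bound for sparse NNs gives $\log \mathcal{N}(\epsilon_n, \mathcal{F}_n, \|\cdot\|_\infty) \lesssim S_{kn}\log(L_{kn}D_{kn}K_n/\epsilon_n) \lesssim n\epsilon_n^2$. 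Existence of suitable tests for $\|f-f_0\|_{L^2(P_X)} + |\sigma^2-\sigma_0^2|$ is standard under \aref{assum:a-2}--\aref{assum:a-3}. For the adaptive case, one integrates over the priors in \eqref{eqn:adaptive_prior} and checks that the penalties $e^{-\lambda_N N(\log N)^3}$ and $e^{-\lambda_H H(\log H)^2}$ are calibrated so as to both suppress mass on architectures larger than $(D_{kn},S_{kn})$ at rate $e^{-cn\epsilon_n^2}$ and place at least $e^{-c'n\epsilon_n^2}$ weight on $\{D = D_{kn},\, S = S_{kn}\}$.

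The main obstacle is verifying the composite approximation rate itself, which is the content of \Cref{lem:anisotropic_approx} in the composite regime (and is largely inherited from \citet{suzuki_deep_2021}). The subtlety is that errors at intermediate layers propagate through subsequent layers with exponents governed by the layer-specific smoothness, so the effective rate is determined by the bottleneck layer $h^\ast = \argmin_h \tilde{\bfs}^{\ast(h)}$; the definition $\tilde{\bfs}^{\ast(h)} = \tilde{\bfs}^{(h)} \prod_{k=h+1}^H \{(\underline{\bfs}^{(k)} - t^{\ast(k)}/p) \wedge 1\}$ encodes precisely this propagation, and the optimal construction allocates the budget $N_n^\ast$ across the $H$ Besov pieces in accordance with these exponents. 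Once $\tilde{\bfs}^\ast$ is identified as the governing exponent and $N_n^\ast$ as the corresponding width parameter, the entropy and prior-mass calculations depend only on the NN class $\Phi(L_{kn},D_{kn},S_{kn},\cdot)$ rather than on the target function, so the remaining arguments from Theorems~\ref{thm:reg_ss}--\ref{thm:adaptive_estimation} transfer without modification to yield contraction at rate $\epsilon_n = n^{-\tilde{\bfs}^\ast/(2\tilde{\bfs}^\ast+1)}(\log n)^{3/2}$.
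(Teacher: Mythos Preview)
Your proposal is correct and follows essentially the same approach as the paper: the paper's proof also observes that the arguments of Theorems~\ref{thm:reg_ss}--\ref{thm:adaptive_estimation} transfer verbatim once the anisotropic approximator from \Cref{lem:anisotropic_approx} is replaced by the composite approximator (\Cref{lem:composite_anisotropic_approx} under \aref{assum:a-4}, or \Cref{lem:affine_anisotropic_approx} under \aref{assum:a-5}), and then re-verifies the entropy and prior-mass conditions of \Cref{lem:consistency_unknown_var} with $(L_{2n},D_{2n},S_{2n},B_2)$ in place of $(L_{1n},D_{1n},S_{1n},B_1)$. Two minor points: the approximation guarantee is in $L^r$ (hence $L^2(P_X)$) rather than $\|\cdot\|_\infty$, and the sieve in the paper uses a polynomial bound $B_n=n^{K_1}$ rather than a polylogarithmic one, but neither affects the structure of your argument.
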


\begin{proof}
See \Cref{subsec:proof_general}.
\end{proof}

The contraction rate in \Cref{thm:general_function} matches that of \citet{suzuki_deep_2021}, indicating that BNNs attain the same level of optimality. However, achieving the optimal rate in \citet{suzuki_deep_2021} requires a correctly specified network structure. In contrast, our Bayesian procedure offers a clear advantage by learning the unknown compositional depth, connectivity, and anisotropic smoothness directly from data, without requiring prior structural knowledge.

\section{Discussion}\label{sec:conclusions}

In this paper, we have established that sparse BNNs achieve optimal posterior contraction rates over anisotropic Besov spaces and their hierarchical compositions. We show that BNNs equipped with either spike-and-slab or continuous shrinkage priors attain a near-minimax rate that depends only on the intrinsic dimension, thereby overcoming the curse of dimensionality. Furthermore, BNNs are shown to achieve rate adaptation over both anisotropic and composite Besov classes.

\textbf{Practice meets theory.}
Our work provides a rigorous theoretical foundation for the empirical success of NNs on complex real-world data. By establishing optimal rates over anisotropic Besov spaces, we shed light on why NNs perform well in high-dimensional settings where classical smoothness-based theory falls short. Although our analysis focuses on BNNs, which additionally offer rate adaptation and uncertainty quantification, the insights extend more broadly to understanding NN performance. The results provide actionable guidance: practitioners dealing with such data may benefit from employing sparse architectures with sparsity-inducing priors, potentially inspiring new algorithmic advances for efficient inference.

\textbf{Extension.} Our theoretical results are established under the Gaussian nonparametric regression model in \eqref{eqn:reg:model}. These results are readily extended to other statistical models in which the Hellinger distance can be translated into an $L^2$-type distance for the underlying function. As one such extension, we present posterior contraction results for nonparametric binary classification. We also show that the same contraction rates derived for the $L^2$-norm with respect to $P_X$ hold for the empirical $L^2$-norm by applying empirical process theory. See \Cref{appendix:asymptotic_iid} for details.

\textbf{Future work.} This study has several limitations that suggest promising directions for future research. While the spike-and-slab prior offers desirable theoretical properties, its practical implementation is hampered by the point mass at zero, which introduces substantial computational challenges. Variational approximations may partially alleviate this burden \citep{cherief2020convergence, bai2020efficient}. Shrinkage priors address the issue differently, but our theoretical framework does not accommodate widely used choices such as the horseshoe prior \citep{Carvalho_Polson_Scott_2010}. Extending our results to incorporate such popular shrinkage priors is an important direction. 
Furthermore, a natural direction for future work is to extend our theoretical framework to modern deep learning architectures widely used in practice. Recent developments in the statistical theory of convolutional NNs \citep{kohler2022convergence,fang2023optimal} and transformers \citep{kim2024transformers,jiao2025approximation} offer valuable foundations for establishing posterior contraction results beyond fully connected architectures.

\section*{Acknowledgment}
This research was supported by grants from the National Research Foundation of Korea (NRF) funded by the Korean government (MSIT) (2022R1C1C1006735, RS-2023-00217705, RS-2025-00513129) and the National Science Foundation (NSF) (DMS-2503119).

\bibliographystyle{myabbrvnat}
\bibliography{refs}



\newpage

\renewcommand{\theequation}{S\arabic{equation}}


\appendix

\setcounter{equation}{0}

\section{Supplement on posterior contraction}\label{appendix:asymptotic_iid}

\subsection{Nonparametric regression}

In this section, we present theorems that apply to general settings with independent and identically distributed (IID) data. For a semi-metric space $(A,\rho)$, we let $\Covering{\epsilon, A, \rho}$ denote the $\epsilon$-covering number.
The following result builds on the foundational work of \citet{ghosal2000convergence} and \citet{ghosal2007convergence}.

\begin{lemma}\label{lem:consistency_unknown_var}
    Consider model \eqref{eqn:reg:model} with $f_0\in\mathcal{UB}$ and $\sigma_0 \in[\underline \sigma,\overline \sigma]$ for constants $0 < \underline{\sigma} \leq \overline{\sigma}$. 
    The prior for $\sigma$ is supported on $[\underline{\sigma}, \overline{\sigma}]$.
    For $\mathcal F\subset \mathcal {UB}$, define 
      \begin{align*}
        A_{\epsilon} = \left\{(f, \sigma) \in \mathcal{F}\times[\underline{\sigma},\overline{\sigma}]: \norm{f - f_0}_{L^2(P_X)} \leq \frac{\epsilon}{2}, \abs{\sigma - \sigma_0} \leq \frac{\epsilon}{2} \right\}.
    \end{align*}
Suppose there exist a subset $\mathcal{F}_n \subset \mathcal{F}$ and a sequence $\epsilon_n \rightarrow 0$ with $n \epsilon_n^2 \rightarrow \infty$ such that
    \begin{align}
        \log \Covering{\epsilon_n, \mathcal{F}_n, \norm{\cdot}_{L^2(P_X)}} &\lesssim n\epsilon_n^2,\label{eqn:lem_con1} \\
        -\log \Pi(A_{\epsilon_n}) &\lesssim n \epsilon_n^2,\label{eqn:lem_con2} \\
        \sup_{\sigma \in [\underline{\sigma}, \overline{\sigma}]} \Pi(\mathcal{F} \setminus \mathcal{F}_n \mid \sigma) &= o\!\left(e^{-Cn\epsilon_n^2}\right), \label{eqn:lem_con3}
    \end{align}
    for a sufficiently large $C>0$. Then, the posterior satisfies 
    $$\Pi\!\left((f, \sigma) \in \mathcal{F}\times[\underline{\sigma},\overline{\sigma}] : \norm{f - f_0}_{L^2(P_X)} + |\sigma^2 - \sigma_0^2| > M_n \epsilon_n \mid \mathcal{D}_n\right) \rightarrow 0$$
    in $P_{f_0,\sigma_0}^{(n)}$-probability as $n\rightarrow \infty$ for any $M_n \rightarrow \infty$.
\end{lemma}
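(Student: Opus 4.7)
The plan is to reduce the lemma to the master theorem for posterior contraction with IID data \citep{ghosal2000convergence,ghosal2007convergence}, applied to the joint parameter $(f,\sigma)$ on the product space $\mathcal{F}' = \mathcal{F} \times [\underline{\sigma}, \overline{\sigma}]$. The main preparatory work is translating between the problem-specific metric $d((f,\sigma),(f_0,\sigma_0)) = \|f-f_0\|_{L^2(P_X)} + |\sigma^2-\sigma_0^2|$ and the intrinsic Hellinger/Kullback-Leibler quantities that the master theorem is phrased in. Since $\mathcal{F}$ is uniformly bounded and $\sigma,\sigma_0 \in [\underline{\sigma},\overline{\sigma}]$ with $\underline{\sigma}>0$, a direct Gaussian computation gives
$$h^2(p_{f,\sigma}, p_{f_0,\sigma_0}) \asymp \|f-f_0\|_{L^2(P_X)}^2 + (\sigma-\sigma_0)^2,$$
while the Kullback-Leibler divergence $K(p_{f_0,\sigma_0}, p_{f,\sigma})$ and the second central moment $V_2(p_{f_0,\sigma_0}, p_{f,\sigma})$ of the log-likelihood ratio are bounded above by the same quantity up to a constant depending on $\underline{\sigma}, \overline{\sigma}$, and the uniform bound on $\mathcal{F}$. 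Because $\sigma^2-\sigma_0^2 = (\sigma-\sigma_0)(\sigma+\sigma_0)$, the factor $\sigma+\sigma_0$ is bounded away from $0$ and $\infty$, so $|\sigma-\sigma_0| \asymp |\sigma^2-\sigma_0^2|$ on the support. Thus $d$ and $h$ are equivalent up to universal constants on $\mathcal{F}'$.

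With these equivalences, the three hypotheses feed directly into the master theorem. Condition \eqref{eqn:lem_con2} places prior mass at least $e^{-cn\epsilon_n^2}$ on the set $A_{\epsilon_n,1}$, which is contained in a Kullback-Leibler neighborhood of $p_{f_0,\sigma_0}$ of radius $\asymp \epsilon_n^2$, providing the standard prior mass condition. Taking as sieve the product $\mathcal{F}_n' = \mathcal{F}_n \times [\underline{\sigma},\overline{\sigma}]$, the $\|\cdot\|_{L^2(P_X)}$-covering of $\mathcal{F}_n$ from \eqref{eqn:lem_con1}, combined with an $\epsilon_n$-net of $[\underline{\sigma},\overline{\sigma}]$ of cardinality $O(\epsilon_n^{-1})$ (whose logarithm is $o(n\epsilon_n^2)$), produces a Hellinger covering of $\mathcal{F}_n'$ with $\log \Covering{\epsilon_n, \mathcal{F}_n', h} \lesssim n\epsilon_n^2$. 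Marginalizing \eqref{eqn:lem_con3} against the prior on $\sigma$ (supported on a compact interval) via the uniform-in-$\sigma$ bound gives $\Pi(\mathcal{F}' \setminus \mathcal{F}_n') = o(e^{-Cn\epsilon_n^2})$ with $C$ chosen strictly larger than the constant appearing in the prior mass bound by the margin required by the master theorem.

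Invoking the master theorem then delivers contraction at rate $\epsilon_n$ in Hellinger distance, and the equivalence from the first paragraph converts this to contraction in the metric $\|f-f_0\|_{L^2(P_X)} + |\sigma^2-\sigma_0^2|$, possibly at the cost of enlarging the arbitrary diverging constant $M_n$. The main subtlety is that the conditions decouple $f$ and $\sigma$ (the entropy and sieve-complement bounds concern only $\mathcal{F}_n$, with the latter stated conditionally on $\sigma$), whereas the Hellinger/KL bounds are inherently joint; the compactness of $[\underline{\sigma},\overline{\sigma}]$ and the independence of the constants in \eqref{eqn:lem_con3} from $\sigma$ make this bookkeeping routine. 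Choosing the constant $C$ in \eqref{eqn:lem_con3} to exceed the prior-mass constant by the master theorem's required gap is the only quantitative point that needs care; everything else is standard reduction.
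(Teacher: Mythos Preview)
Your proposal is correct and follows essentially the same route as the paper's proof: establish the equivalence $h^2 \asymp \|f-f_0\|_{L^2(P_X)}^2 + (\sigma-\sigma_0)^2$ (the paper cites Lemmas~B.1--B.2 of \citet{xie2020adaptive} rather than doing the Gaussian computation directly), combine the $L^2(P_X)$-entropy of $\mathcal{F}_n$ with a one-dimensional net on $[\underline{\sigma},\overline{\sigma}]$, marginalize \eqref{eqn:lem_con3} over the $\sigma$-prior, and invoke Theorem~2.1 of \citet{ghosal2000convergence}. Your remark on matching the constant $C$ in the sieve-complement bound to the prior-mass constant is exactly the ``sufficiently large $C$'' bookkeeping the paper carries out.
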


\begin{proof}
    See \Cref{subsec:lem:consistency_unknown_var}.
\end{proof}

We focus on the case where $\Pi(\mathcal{F} \setminus \mathcal{F}_n \mid\sigma)$ is invariant with respect to $\sigma$, and we write it simply as $\Pi(\mathcal{F} \setminus \mathcal{F}_n)$.

\subsection{Nonparametric binary classification}\label{appendix:binary_cls}

Suppose we have a set of $n$ input-output observations, where each pair is an independent random sample from a binary classification model with a $d$-dimensional input variable $X_i \in [0, 1]^d$ and an output variable $Y_i \in \Set{0, 1}$:
\begin{align}
\label{eqn:model_cls}
    \psingle(Y_i = 1 \mid  X_i) = 1 - \psingle(Y_i = 0 \mid  X_i) = (\psi_M \circ f_0)(X_i),\quad X_i \sim P_X, \quad i \in [n],
\end{align}
where $\psi_M$ is the sigmoid function $\psi_M(x) = (1+e^{-Mx})^{-1}$ for a large $M>0$ and $f_0:[0,1]^d\rightarrow \mathbb R$ denotes the true regression function.
The scaling factor $M$ is introduced because $f_0$ and the prior support are restricted to $\mathcal{UB}$, whose elements have unit sup-norm.
Let $P_{f}^{(n)}$ denote the joint distribution of $\mathcal{D}_n$ with a regression function $f$.

\begin{lemma}\label{lem:consistency_classification}
    Consider model \eqref{eqn:model_cls} with $f_0\in\mathcal{UB}$. For $\mathcal F\subset \mathcal {UB}$, define 
    \begin{align*}
        A_{\epsilon}^\prime = \left\{f \in \mathcal{F}: \norm{f - f_0}_{L^2(P_X)} \leq \epsilon\right\}.
    \end{align*}
Suppose there exist a subset $\mathcal{F}_n \subset \mathcal{F}$ and a sequence $\epsilon_n \rightarrow 0$ with $n \epsilon_n^2 \rightarrow \infty$ such that
   \begin{align*}
        \log \Covering{\epsilon_n, \mathcal{F}_n, \norm{\cdot}_{L^2(P_X)}} &\lesssim n\epsilon_n^2, \\
        -\log \Pi(A^\prime_{\epsilon_n}) &\lesssim n \epsilon_n^2, \\
        \Pi(\mathcal{F} \setminus \mathcal{F}_n) &= o\!\left(e^{-Cn\epsilon_n^2}\right),
    \end{align*}
    for a sufficiently large $C>0$. Then, the posterior satisfies 
     $$\Pi\!\left(f \in \mathcal{F} : \norm{(\psi_M \circ f) - (\psi_M \circ f_0)}_{L^2(P_X)} > M_n\epsilon_n \mid \mathcal{D}_n\right) \rightarrow 0$$
    in $P_{f_0}^{(n)}$-probability as $n\rightarrow \infty$ for any $M_n \rightarrow \infty$.
\end{lemma}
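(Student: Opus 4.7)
The plan is to apply the standard posterior contraction theorem for IID observations (as in \citet{ghosal2000convergence,ghosal2007convergence}), which delivers contraction in the Hellinger distance on the joint densities, and then to translate this Hellinger contraction back into the claimed $L^2(P_X)$ bound on $\zeta_c\circ f$. The proof therefore has three components: establishing the right metric equivalences, verifying the three hypotheses of the general theorem, and converting the conclusion.

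First, I would set up the equivalence between the relevant metrics. Since $\mathcal{F}$ is uniformly bounded, $\zeta_c\circ f$ takes values in a compact subinterval of $(0,1)$ for every $f\in\mathcal{F}$; hence the squared Hellinger distance between the Bernoulli joint densities $P_{\zeta_c\circ f_1}$ and $P_{\zeta_c\circ f_2}$ is equivalent, up to multiplicative constants, to $\|(\zeta_c\circ f_1)-(\zeta_c\circ f_2)\|_{L^2(P_X)}^2$, and the Lipschitz property of $\zeta_c$ yields the upper bound $h(P_{\zeta_c\circ f_1},P_{\zeta_c\circ f_2})\lesssim \|f_1-f_2\|_{L^2(P_X)}$. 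These one-sided bounds handle the entropy step, and the two-sided equivalence on bounded probabilities handles the conversion of the final conclusion.

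Next, I would verify the three hypotheses of the general posterior contraction theorem. (i) \emph{Entropy:} the assumed bound $\log\Covering{\epsilon_n,\mathcal{F}_n,\|\cdot\|_{L^2(P_X)}}\lesssim n\epsilon_n^2$ transfers directly to the required Hellinger entropy of $\{P_{\zeta_c\circ f}: f\in\mathcal{F}_n\}$ by the Lipschitz bound above. (ii) \emph{Prior concentration:} \Cref{lem:classification_norm} shows that the KL divergence and KL variation of $P^{(n)}_{\zeta_c\circ f_0}$ relative to $P^{(n)}_{\zeta_c\circ f}$ are controlled by $\|f-f_0\|_{L^2(P_X)}$ (up to the squared form required for the IID theorem, via the bound $|\log\zeta_c(f)-\log\zeta_c(f_0)|\lesssim|f-f_0|$ on bounded arguments), so the assumption $-\log\Pi(A'_{\epsilon_n,1})\lesssim n\epsilon_n^2$ implies the required prior mass on a KL-neighborhood. (iii) \emph{Sieve exclusion:} $\Pi(\mathcal{F}\setminus\mathcal{F}_n)=o(e^{-Cn\epsilon_n^2})$ is assumed directly, and the restriction $C>4$ is exactly what is needed to dominate the standard universal constant appearing in the Ghosal--Ghosh--van~der~Vaart formulation together with the prior concentration constant.

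Applying the theorem then yields $\Pi\bigl(h(P_{\zeta_c\circ f},P_{\zeta_c\circ f_0})>\tilde M_n\epsilon_n\mid\mathcal{D}_n\bigr)\to 0$ in $P^{(n)}_{f_0}$-probability for any $\tilde M_n\to\infty$, and the reverse direction of the metric equivalence from the first step converts this into the stated contraction for $\|(\zeta_c\circ f)-(\zeta_c\circ f_0)\|_{L^2(P_X)}$. The only mildly delicate point is the two-sided Hellinger--$L^2$ equivalence for the conditional probabilities, which requires $\zeta_c\circ f$ to be bounded away from both $0$ and $1$; this is guaranteed by the uniform boundedness of $\mathcal{F}$. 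Compared with \Cref{lem:consistency_unknown_var}, the absence of a nuisance variance parameter removes the need to profile over $\sigma$, so the argument is strictly simpler and no new obstacles arise beyond these routine metric translations.
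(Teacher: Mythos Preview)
Your proposal is correct and follows essentially the same route as the paper: establish the two-sided equivalence between the Hellinger distance on the Bernoulli densities and $\rho_c(f_1,f_2)=\|\zeta_c\circ f_1-\zeta_c\circ f_2\|_{L^2(P_X)}$ (using uniform boundedness of $\mathcal{F}$ so that $\zeta_c\circ f$ stays in a compact subinterval of $(0,1)$), use the Lipschitz property of $\zeta_c$ to transfer the $L^2(P_X)$ covering-number bound to the Hellinger metric, invoke \Cref{lem:classification_norm} for the KL prior-mass condition, and apply Theorem~2.1 of \citet{ghosal2000convergence}. The paper does exactly this, with the Hellinger--$\rho_c$ equivalence worked out via the elementary factorizations $(\eta_1-\eta_2)^2=(\sqrt{\eta_1}-\sqrt{\eta_2})^2(\sqrt{\eta_1}+\sqrt{\eta_2})^2$ applied to both $\eta$ and $1-\eta$.
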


\begin{proof}
    See \Cref{subsec:proof_lem:consistency_classification}.
\end{proof}

The following theorem shows that a plug-in classifier can achieve results analogous to those in \Cref{thm:reg_ss,thm:shrinkage,thm:adaptive_estimation,thm:general_function} for classification problems, under the same assumptions and notation as in \Cref{sec:main_results}. The proof closely parallels that of the Gaussian case and is therefore omitted.

\begin{theorem}[Nonparametric classification]\label{thm:cls}
    Consider model \eqref{eqn:model_cls} with the same prior specifications and assumptions as in \Cref{thm:reg_ss} (or in \Cref{thm:shrinkage,thm:adaptive_estimation,thm:general_function}), except for Assumption~\aref{assum:a-3}.
    Then, the posterior distribution concentrates at the rate $\epsilon_n = n^{-\tilde{\bfs}/(2\tilde{\bfs}+1)}(\log n)^{3/2}$, in the sense that
    \begin{equation*}
        \Pi\!\left(f \in {\Phi}: \norm{(\psi_M \circ f) - (\psi_M \circ f_0)}_{L^2(P_X)} > M_n \epsilon_n \mid \mathcal{D}_n\right) \rightarrow 0
    \end{equation*}
    in $P_{f_0}^{(n)}$-probability as $n\rightarrow \infty$ for any sequence $M_n \rightarrow \infty$, where {${\Phi}$} denotes the corresponding NN model space defined in the referenced theorem.
\end{theorem}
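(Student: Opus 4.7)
The plan is to invoke Lemma \ref{lem:consistency_classification}, which reduces the claim to three ingredients: a metric entropy bound on a sieve $\mathcal{F}_n$, a prior mass lower bound on an $L^2(P_X)$ ball around $f_0$, and an upper bound on the prior mass outside $\mathcal{F}_n$. The key observation is that these are precisely the three conditions that the proofs of Theorems \ref{thm:reg_ss}, \ref{thm:shrinkage}, \ref{thm:adaptive_estimation}, and \ref{thm:general_function} already establish in the regression setting via Lemma \ref{lem:consistency_unknown_var}, so my strategy is to reuse those constructions verbatim and verify that they still satisfy the (arguably weaker) classification requirements.

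For the entropy and sieve-tail bounds, I would take $\mathcal{F}_n$ to be the weight-truncated neural network class used in the referenced regression proof. Both estimates are intrinsic to the function class and to the prior, so they transfer unchanged; in particular, the exponential tail Assumptions \aref{cond:ss_support} and \aref{cond:shr_support} continue to supply $\Pi(\mathcal{F}\setminus \mathcal{F}_n) = o(\exp(-Cn\epsilon_n^2))$, with the truncation level inflated by a constant factor if needed to accommodate the stronger requirement $C > 4$ in Lemma \ref{lem:consistency_classification}. For the prior mass condition, Lemma \ref{lem:classification_norm} shows that the relevant neighborhood is simply $A_{\epsilon_n,1}^\prime = \{f : \|f - f_0\|_{L^2(P_X)} \leq \epsilon_n\}$, with the nuisance parameter $\sigma$ entirely absent (hence the exclusion of Assumption \aref{assum:a-3}). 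The same combination of the approximation result (Lemma \ref{lem:anisotropic_approx}) and Assumptions \aref{cond:ss_tail} or \aref{cond:shr_tail} that produces the regression bound $-\log \Pi(A_{\epsilon_n,1}) \lesssim n\epsilon_n^2$ then yields $-\log \Pi(A_{\epsilon_n,1}^\prime) \lesssim n\epsilon_n^2$ a fortiori, since the constraint on $\sigma$ is dropped.

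Applying Lemma \ref{lem:consistency_classification} delivers posterior contraction of $\|f - f_0\|_{L^2(P_X)}$ at rate $\epsilon_n$, and the $1$-Lipschitz continuity of the sigmoid $\zeta_c$ (in fact $|\zeta_c^\prime| \leq 1/4$) immediately transfers this to $\|\zeta_c \circ f - \zeta_c \circ f_0\|_{L^2(P_X)}$ at the same rate. The main obstacle is essentially bookkeeping: for the adaptive and composite cases in Theorems \ref{thm:adaptive_estimation} and \ref{thm:general_function}, the hyperpriors on $(D, S)$ and the structural sieve must be rechecked so that the sieve-tail bound is still $o(e^{-Cn\epsilon_n^2})$ with the larger constant $C > 4$; this is achieved by a slight increase of the penalty constants $\lambda_N$ and $\lambda_H$ in \eqref{eqn:adaptive_prior} or of the truncation level in the sieve, leaving the structure of the argument intact.
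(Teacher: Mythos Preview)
Your proposal is correct and matches the paper's approach exactly: the paper's own proof is a single sentence instructing one to rerun the regression arguments with Lemma~\ref{lem:consistency_classification} in place of Lemma~\ref{lem:consistency_unknown_var} and $\sigma^2\equiv 1$, which is precisely your plan. One small correction: Lemma~\ref{lem:consistency_classification} already concludes contraction in the metric $\|\zeta_c\circ f-\zeta_c\circ f_0\|_{L^2(P_X)}$ directly (via the Hellinger--$\rho_c$ equivalence established in its proof), not in $\|f-f_0\|_{L^2(P_X)}$, so your final Lipschitz transfer step is unnecessary.
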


It can be shown that the minimax rate is also attained for the excess risk for the misclassification error, defined as
    \begin{align*}
    \mathcal{E}(f_0, f) := \mathbb{E}_{f_0}[I(Y \neq I((\psi_M \circ f)(X) \ge 1/2))] - \mathbb{E}_{f_0}[I(Y \neq I((\psi_M \circ f_0)(X) \ge 1/2))],
\end{align*}
implying that optimal performance is achieved in general nonparametric classification settings.
It is well known that \citep{yang1999minimax} the excess risk satisfies
\begin{align*}
    \mathcal{E}(f_0, f) \leq 2 \norm{(\psi_M \circ f) - (\psi_M \circ f_0)}_{L^2(P_X)}.
\end{align*}
This provides the posterior contraction rate with respect to the misclassification error  as follows.
 
\begin{corollary}[Nonparametric classification with respect to the misclassification error]\label{thm:cls_misclass}
    Consider model \eqref{eqn:model_cls} with the same prior specifications and assumptions as in \Cref{thm:reg_ss} (or in \Cref{thm:shrinkage,thm:adaptive_estimation,thm:general_function}), except for Assumption~\aref{assum:a-3}.
    Then, the posterior distribution concentrates at the rate $\epsilon_n = n^{-\tilde{\bfs}/(2\tilde{\bfs}+1)}(\log n)^{3/2}$, such that
    \begin{equation*}
    \Pi\!\left(f \in {{\Phi}}: \mathcal{E}(f_0, f) > M_n \epsilon_n \mid \mathcal{D}_n\right) \rightarrow 0
    \end{equation*}
    in $P_{f_0}^{(n)}$-probability as $n\rightarrow \infty$ for any sequence $M_n \rightarrow \infty$, where {${\Phi}$} denotes the corresponding NN model space defined in the referenced theorem.
\end{corollary}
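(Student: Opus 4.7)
The plan is to invoke \Cref{lem:yang1999minimax} directly, observing that it delivers posterior contraction for $\mathcal{E}(f_0,f)$ at the same rate $\epsilon_n$ under exactly the hypotheses required by \Cref{lem:consistency_classification}. Hence the task reduces to verifying the three conditions of \Cref{lem:consistency_classification}---the $L^2(P_X)$-entropy bound, the prior mass bound on $A'_{\epsilon_n,1}$, and the sieve-complement bound---for the neural-network model space and prior combination specified in whichever of \Cref{thm:reg_ss}, \Cref{thm:shrinkage}, \Cref{thm:adaptive_estimation}, or \Cref{thm:general_function} is being referenced.

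The key observation is that these verifications are precisely the ones carried out in the proof of \Cref{thm:cls}. Concretely, for each referenced prior one chooses a sieve $\mathcal{F}_n \subset \Phi$ consisting of a parameter-norm-bounded subclass (and, in the adaptive case of \Cref{thm:adaptive_estimation}, a truncation on the architectural hyperparameters $D$ and $S$) and then establishes: (i) a metric-entropy estimate $\log \Covering{\epsilon_n, \mathcal{F}_n, \norm{\cdot}_{L^2(P_X)}} \lesssim n\epsilon_n^2$ via standard covering arguments for feedforward ReLU networks with bounded weights; (ii) a prior concentration bound $-\log \Pi(A'_{\epsilon_n,1}) \lesssim n\epsilon_n^2$ by placing the NN approximator $\hat{f}$ of \Cref{lem:anisotropic_approx} into a coordinate-wise neighborhood whose prior mass is controlled by \aref{cond:ss_tail} or \aref{cond:shr_tail} (augmented, under Assumption~\aref{cond:shr_spike}, by the spike-like concentration near zero for inactive weights); and (iii) a sieve-complement bound $\Pi(\mathcal{F}\setminus\mathcal{F}_n) = o(e^{-Cn\epsilon_n^2})$ following from the exponential-tail conditions \aref{cond:ss_support} or \aref{cond:shr_support}, and additionally from the exponential tails of $\pi_D, \pi_S$ in \eqref{eqn:adaptive_prior} in the adaptive case. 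Since \Cref{lem:yang1999minimax} requires no further ingredients beyond these three bounds, substituting it for \Cref{lem:consistency_classification} in that proof produces the corollary immediately, and exactly the same $\epsilon_n = n^{-\tilde{\bfs}/(2\tilde{\bfs}+1)}(\log n)^{3/2}$ (with $\tilde{\bfs}$ replaced by $\tilde{\bfs}^\ast$ under \Cref{thm:general_function}) is inherited.

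The only point requiring any care is that the three bounds above are stated with the uniformly bounded function class $\UB$ in mind, whereas the classification model operates on the logit $f$ rather than on the probability $\eta = \zeta_c\circ f$; however, the $1$-Lipschitz continuity of $\zeta_c$ makes $\norm{(\zeta_c\circ f_1)-(\zeta_c\circ f_2)}_{L^2(P_X)} \leq \norm{f_1-f_2}_{L^2(P_X)}$, so entropy and concentration bounds on $f$ transfer to $\eta$, and the KL-to-$L^2$ translation is supplied by \Cref{lem:classification_norm}, which is already embedded in the hypotheses of \Cref{lem:yang1999minimax}. I do not anticipate a genuine obstacle: the entire argument is a citation of \Cref{lem:yang1999minimax} followed by a verbatim reuse of the sieve and prior-mass calculations from the proof of \Cref{thm:cls}.
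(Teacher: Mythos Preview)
Your proposal is correct and follows essentially the same approach as the paper, which simply states that the corollary follows directly from \Cref{lem:yang1999minimax}. Your write-up is more explicit about why the hypotheses of \Cref{lem:yang1999minimax} are met (namely, they coincide with those of \Cref{lem:consistency_classification}, already verified in the proof of \Cref{thm:cls}), but the underlying logic is identical.
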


\begin{remark}
This result can be extended to achieve optimal convergence rates under boundary conditions, particularly under \textit{Tsybakov's margin condition} \citep{tsybakov_optimal_2004}, as demonstrated in frequentist approaches \citep{kim_fast_2021}. However, incorporating the likelihood corresponding to this loss function poses substantial challenges, and we leave this extension for future work.
\end{remark}

\subsection{Empirical norm}\label{appendix:empirical_norm}

The following lemma describes the relationship between expectations of empirical processes.
\begin{lemma}[Theorem 19.3 of \citet{gyorfi2002distribution}]\label{lem:gyorfi2002}
    Let $X,~X_1,\cdots, X_n \in \mathbb{R}^d$ be IID random vectors drawn from the distribution $P_X$. Let $K_1,~K_2 \ge 1$ be constants and let $\mathcal{G}$ be a class of functions $g: \mathbb{R}^d \rightarrow \mathbb{R}$ such that
    \begin{equation*}
        \abs{g(x)} \leq K_1, \quad \mathbb{E}[g(X)^2] \leq K_2 \mathbb{E}[g(X)].
    \end{equation*}
    For $0 < \alpha < 1$ and $\epsilon > 0$, assume that
    \begin{equation*}
        \sqrt{n}\alpha \sqrt{1-\alpha}\sqrt{\epsilon} \ge 288 \max\Set{2K_1, \sqrt{2K_2}},
    \end{equation*}
    and that, for all $x_1, \cdots, x_n \in \mathbb{R}^d$ and for all $t \ge \epsilon/8$,
    \begin{equation*}
        \begin{aligned}
        \frac{\sqrt{n}\alpha (1 - \alpha) t}{96\sqrt{2} \max\Set{K_1, 2K_2}} 
        \ge \int_{\frac{\alpha(1-\alpha)t}{16\max\Set{K_1, 2K_2}}}^{\sqrt{t}} \!\sqrt{\log \Covering{u, \Set{g \in \mathcal{G}: \frac{1}{n} \sum_{i=1}^n g(x_i)^2 \leq 16t}, \norm{\cdot}_{1, n}}} du,
        \end{aligned}
    \end{equation*}
    where $\norm{g}_{1,n} = \frac{1}{n} \sum_{i=1}^n \abs{g(X_i)}$.
    Then, 
    \begin{equation*}
        \Pr\!\left( \sup_{g \in \mathcal{G}} \frac{\abs{\mathbb{E}[g(X)] - \frac{1}{n}\sum_{i=1}^n g(X_i)}}{\epsilon + \mathbb{E}[g(X)]} > \alpha \right) \leq 60 \Exp{- \frac{n\epsilon \alpha^2(1-\alpha)}{128 \cdot 2304 \max\Set{K_1^2, K_2}}}.
    \end{equation*}
\end{lemma}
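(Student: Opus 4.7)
The plan is to establish this uniform deviation bound via the standard empirical process triad: symmetrization, chaining via Dudley's entropy integral, and a peeling (stratification) argument to handle the relative (weighted) form of the deviation. The distinctive feature of the statement is the denominator $\epsilon + \mathbb{E}[g(X)]$, which gives a multiplicative-type concentration rather than the classical additive Talagrand/Bernstein form, and the variance hypothesis $\mathbb{E}[g(X)^2] \le K_2 \mathbb{E}[g(X)]$ is exactly what allows this sharper normalization.

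First I would handle the weighted denominator by peeling: decompose $\mathcal{G}$ into shells $\mathcal{G}_k = \{g \in \mathcal{G} : 2^k \epsilon \le \epsilon + \mathbb{E}[g(X)] < 2^{k+1}\epsilon\}$ for $k = 0, 1, 2, \ldots$. On the event that the relative deviation exceeds $\alpha$ for some $g \in \mathcal{G}_k$, the ordinary deviation satisfies $|\mathbb{E}[g(X)] - n^{-1}\sum_{i=1}^n g(X_i)| > \alpha \cdot 2^k \epsilon$, while the variance is controlled by $K_2 \cdot 2^{k+1}\epsilon$. A union bound over $k$ reduces the claim to a uniform ordinary deviation inequality on each shell with a geometrically growing threshold; the shell-wise failure probabilities will sum geometrically to match the single-shell bound, which is where the leading constant $60$ (rather than $30$) appears.

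Second, within each shell I would symmetrize with a ghost sample $X_1',\ldots,X_n'$ and Rademacher signs $\sigma_i$. The ghost-sample step requires the event that $n^{-1}\sum g(X_i')$ is close to $\mathbb{E}[g(X)]$ relative to $\epsilon + \mathbb{E}[g(X)]$ holds with probability at least $1-\alpha$ uniformly — this is where the variance bound $\mathbb{E}[g(X)^2] \le K_2 \mathbb{E}[g(X)]$ is used (via a one-sided Chebyshev argument) and is the source of the $(1-\alpha)$ factor. After Rademacher symmetrization, condition on the data and apply Dudley's chaining to the empirical Rademacher process restricted to $\{g: n^{-1}\sum g(X_i)^2 \le 16 t\}$ (the $16t$ coming from transferring the population-level radius $t$ through a standard ratio inequality). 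The hypothesis on the entropy integral with lower cutoff $\alpha(1-\alpha)t/(16\max\{K_1,2K_2\})$ is precisely what is needed so that the Dudley bound stays below $\sqrt{n}\alpha(1-\alpha)t / \text{const}$. A Bernstein tail on each chaining link, using both $\|g\|_\infty \le K_1$ and the variance bound, then yields an exponential tail of the form $\exp(-c n t \alpha^2(1-\alpha) / \max\{K_1^2,K_2\})$ at level $t = \epsilon/8$, which is the single-shell tail that the peeling step integrates over.

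The main obstacle is not the conceptual architecture, which is routine, but the careful tracking of numerical constants through four compounding reductions (peeling, ghost-sample symmetrization, Rademacher chaining, Bernstein tail). In particular, the lower limit of integration $\alpha(1-\alpha)t/(16\max\{K_1,2K_2\})$ in the hypothesis must be matched to the precise stopping level of the chaining where the Bernstein regime takes over from the sub-Gaussian regime, and the preliminary size requirement $\sqrt{n}\alpha\sqrt{1-\alpha}\sqrt{\epsilon} \ge 288\max\{2K_1,\sqrt{2K_2}\}$ is exactly what ensures the chaining sum does not overwhelm the leading term. I would expect the bookkeeping at this interface to absorb most of the labor, while the underlying empirical-process arguments are all classical.
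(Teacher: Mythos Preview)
The paper does not provide a proof of this lemma: it is stated as a direct citation of Theorem~19.3 in \citet{gyorfi2002distribution} and is used as a black box (specifically, as input to the proof of \Cref{lem:change_norm}). There is therefore no in-paper proof to compare your proposal against.

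That said, your sketch is a faithful outline of the argument in the cited source. The proof in \citet{gyorfi2002distribution} proceeds exactly by peeling over shells determined by $\mathbb{E}[g(X)]$, a ghost-sample symmetrization step that exploits the Bernstein-type condition $\mathbb{E}[g(X)^2]\le K_2\,\mathbb{E}[g(X)]$ via a one-sided Chebyshev bound (producing the $(1-\alpha)$ factor), and a chaining argument on the symmetrized process with the empirical radius condition $n^{-1}\sum_i g(x_i)^2\le 16t$, terminated at the lower limit appearing in the entropy-integral hypothesis. Your identification of where each constant originates and why the size condition on $\sqrt{n}\alpha\sqrt{1-\alpha}\sqrt{\epsilon}$ is needed is accurate. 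If you wish to reconstruct the full proof with the exact constants, the cited reference carries out the bookkeeping in detail; for the purposes of this paper, however, the result is simply quoted.
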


For a function $f:[0,1]^d\rightarrow\mathbb R$, we define the empirical $L^2$-norm as $\norm{f}_n=(n^{-1}\sum_{i=1}^n |f(X_i)|^2)^{1/2}$. 
As a corollary of \Cref{lem:gyorfi2002}, we present the following result, adapted from \citet{kong2023masked}, which establishes a relationship between the empirical norm $\norm{\cdot}_{n}$ and the population norm $\norm{\cdot}_{L^2(P_X)}$.

\begin{lemma}[Change of norm]\label{lem:change_norm}
    Let $N_n = \lceil n^{1/(2\tilde{\bfs}+1)} \rceil,~\epsilon_n = n^{-\tilde{\bfs}/(2\tilde{\bfs}+1)} (\log n)^{3/2}$, and ${{\Phi}}$ be an NN model space in \Cref{thm:reg_ss} (or in \Cref{thm:shrinkage,thm:adaptive_estimation,thm:general_function}). Then, 
    \begin{equation*}
        P_{f_0,\sigma_0}^{(n)}\!\left( \sup_{f \in \Phi} \frac{\abs{\norm{f - f_0}_{L^2(P_X)}^2 - \norm{f - f_0}_{n}^2}}{M_0 \epsilon_n^2 + \norm{f - f_0}_{L^2(P_X)}^2} > \frac{1}{2}  \right) \leq 60 \Exp{- \frac{n M_0 \epsilon_n^2}{8 \cdot 128 \cdot 2304 \cdot 16}}
    \end{equation*}
    holds for a sufficiently large constant $M_0 > 0$ and sufficiently large $n$.
\end{lemma}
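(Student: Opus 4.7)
The plan is to apply \Cref{lem:gyorfi2002} to the squared-residual class $\mathcal{G} = \Set{g_f : f \in \mathcal{F}}$ with $g_f(x) = (f(x) - f_0(x))^2$, taking $\alpha = 1/2$ and $\epsilon = M_0 \epsilon_n^2$. Since $\mathbb{E}[g_f(X)] = \norm{f - f_0}_{L^2(P_X)}^2$ and $n^{-1}\sum_{i=1}^n g_f(X_i) = \norm{f - f_0}_n^2$, the event of interest in the statement is precisely the uniform-deviation event in \Cref{lem:gyorfi2002}, and the desired probability bound will follow once the two analytic preconditions are verified and the constants are matched.

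Because every $f \in \mathcal{F}$ is a clipped ReLU network with values in $[-1,1]$ and $f_0 \in \UB$, we have $\abs{g_f} \leq 4$ and, pointwise, $g_f^2 \leq 4\, g_f$, so the boundedness and variance conditions hold with $K_1 = K_2 = 4$. With $\alpha = 1/2$ the size requirement $\sqrt{n}\alpha\sqrt{1-\alpha}\sqrt{\epsilon} \geq 288\max\Set{2K_1, \sqrt{2K_2}}$ reduces to $\sqrt{n M_0 \epsilon_n^2} \geq 2\sqrt{2}\cdot 2304$, which holds for all sufficiently large $n$ since $n\epsilon_n^2 \to \infty$ under each of the referenced theorems.

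The main technical step is the entropy integral. For the sparse NN sieve supporting each of \Cref{thm:reg_ss,thm:shrinkage,thm:adaptive_estimation,thm:general_function}, the standard covering-number estimate gives $\log\Covering{u, \mathcal{F}, \norm{\cdot}_\infty} \lesssim S_n \log(L_n D_n B_n/u)$ with effective sparsity $S_n \asymp N_n \log n$ (for \Cref{thm:shrinkage} and the shrinkage-based case of \Cref{thm:adaptive_estimation}, the restriction to such a sieve is justified by the negligible prior mass outside, or else a VC-dimension bound as in \Cref{rmk:unbounded_ss} can be substituted for dense networks with polynomial-tailed priors). Combining this with the Lipschitz estimate $\norm{g_{f_1} - g_{f_2}}_{1,n} \leq 4\norm{f_1 - f_2}_\infty$ yields $\log\Covering{u, \mathcal{G}, \norm{\cdot}_{1,n}} \lesssim S_n \log(n/u)$. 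Plugging this in, the Dudley-type bound
\begin{equation*}
\int_{t/512}^{\sqrt{t}} \sqrt{S_n \log(n/u)}\, du \lesssim \sqrt{S_n\, t\, \log n}
\end{equation*}
is to be compared against the left-hand side $\sqrt{n}\, t /(3072\sqrt{2})$. Since $n\epsilon_n^2 = N_n (\log n)^3$ while $S_n \log n \asymp N_n (\log n)^2$, for $t \geq M_0 \epsilon_n^2/8$ the required inequality $nt \gtrsim S_n \log n$ reduces to $M_0 N_n (\log n)^3 \gtrsim N_n (\log n)^2$, which holds for any fixed $M_0$ and large $n$.

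Finally, substituting $\alpha^2(1-\alpha) = 1/8$ and $\max\Set{K_1^2, K_2} = 16$ into the exponential tail of \Cref{lem:gyorfi2002} produces exactly the claimed bound $60\Exp{-n M_0 \epsilon_n^2 /(8 \cdot 128 \cdot 2304 \cdot 16)}$. The main obstacle, I expect, is the uniform handling of the covering-number estimate across the four theorems: under the shrinkage and adaptive priors there is no a priori sparsity constraint on $\mathcal{F}$, so some care is needed either to restrict to the same sparse-network sieve used in the underlying contraction proofs (on which the posterior effectively concentrates) or to replace the entropy bound by the VC-dimension bound in the polynomial-tailed case, while still keeping the effective complexity of order $N_n \log n$ so that the entropy integral continues to close.
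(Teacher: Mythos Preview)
Your proposal is correct and follows essentially the same route as the paper: apply \Cref{lem:gyorfi2002} to $\mathcal{G}=\{(f-f_0)^2:f\in\mathcal{F}\}$ with $\alpha=1/2$, $K_1=K_2=4$, $\epsilon=M_0\epsilon_n^2$, reduce the $\norm{\cdot}_{1,n}$-covering of $\mathcal{G}$ to the $\norm{\cdot}_\infty$-covering of $\mathcal{F}$ via the $4$-Lipschitz map, and bound the entropy integral by $\sqrt{t\,n\epsilon_n^2}$ using \Cref{lem:covering}. The paper proves only the \Cref{thm:reg_ss} case and leaves the others implicit, so your explicit remark about handling the shrinkage and adaptive model spaces via the sieve (or the VC bound) is in fact more careful than the paper's own treatment; note, however, that the covering bound from \Cref{lem:covering} carries an extra factor of $L_n$ in the exponent, so the uniform entropy is of order $N_n(\log n)^3=n\epsilon_n^2$ rather than $S_n\log n$, which is precisely where the ``sufficiently large $M_0$'' enters.
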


\begin{proof}
    See \Cref{subsec:proof_change_norm}.
\end{proof}

Using \Cref{lem:change_norm}, we deduce that for any $f \in \Phi$ and all sufficiently large $n$, the following inequalities hold with probability $1 - \Exp{-M_0^\ast n\epsilon_n^2}$, for some $M_0^\ast > 0$:
\begin{align*}
\norm{f - f_0}_{n}^2 &\leq \frac{3}{2} \norm{f - f_0}_{L^2(P_X)}^2 + \frac{M_0}{2} \epsilon_n^2, \\
\norm{f - f_0}_{L^2(P_X)}^2 &\leq 2 \norm{f - f_0}_{n}^2 + M_0 \epsilon_n^2.
\end{align*}
This yields a contraction rate with respect to the empirical norm.

\begin{theorem}[Contraction rate with respect to the empirical norm]\label{thm:emp_norm}
    Consider model \eqref{eqn:reg:model}, prior and the assumptions as in \Cref{thm:reg_ss} (or in \Cref{thm:shrinkage,thm:adaptive_estimation,thm:general_function}).
    Then, the posterior distribution concentrates at the rate $\epsilon_n = n^{-\tilde{\bfs}/(2\tilde{\bfs}+1)}(\log n)^{3/2}$, in the sense that
    \begin{equation*}
        \Pi\!\left((f, \sigma) \in {\Phi} \times [\underline{\sigma}, \overline{\sigma}]: \norm{f - f_0}_{n} + |\sigma^2 - \sigma_0^2| > M_n \epsilon_n \mid \mathcal{D}_n\right) \rightarrow 0
    \end{equation*}
    in $P_{f_0,\sigma_0}^{(n)}$-probability as $n\rightarrow \infty$ for any sequence $M_n \rightarrow \infty$, where {$\Phi$} denotes the corresponding NN model space defined in the referenced theorem.
\end{theorem}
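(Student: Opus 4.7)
The theorem is a direct corollary of the posterior contraction in $\|\cdot\|_{L^2(P_X)}$ already established in Theorems~\ref{thm:reg_ss}--\ref{thm:general_function}, combined with the uniform change-of-norm bound in \Cref{lem:change_norm}. The latter provides an event $E_n$, measurable with respect to the covariates $X_1,\ldots,X_n$, on which every $f$ in the relevant model space $\Phi$ satisfies
$$\|f-f_0\|_n^2 \leq \tfrac12\|f-f_0\|_{L^2(P_X)}^2 + \tfrac{M_0}{2}\epsilon_n^2,$$
with $\psingle_{f_0,\sigma_0}^{(n)}(E_n^c) \lesssim \exp(-cn\epsilon_n^2) \to 0$, since $n\epsilon_n^2 \to \infty$. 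This one-sided control is exactly what is needed to translate $L^2(P_X)$-contraction into empirical-norm contraction, and since the bound is uniform over $\Phi$, it can be invoked on any $f$ supported by the posterior.

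Given any diverging sequence $M_n \to \infty$, I would choose a slower diverging sequence, say $\tilde M_n = M_n/(2\sqrt 2)$, so that for sufficiently large $n$,
$$\sqrt{\tfrac12 \tilde M_n^2 + \tfrac{M_0}{2}} \leq \tfrac{M_n}{2} \quad \text{and} \quad \tilde M_n \leq \tfrac{M_n}{2}.$$
By the referenced theorem applied with $\tilde M_n$ in place of $M_n$,
$$\Pi\!\big((f,\sigma):\, \|f-f_0\|_{L^2(P_X)} + |\sigma^2-\sigma_0^2| > \tilde M_n\epsilon_n \mid \mathcal D_n\big) \to 0$$
in $\psingle_{f_0,\sigma_0}^{(n)}$-probability. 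On the event $E_n$, if $(f,\sigma)$ satisfies $\|f-f_0\|_{L^2(P_X)} + |\sigma^2-\sigma_0^2| \leq \tilde M_n\epsilon_n$, then $\|f-f_0\|_n + |\sigma^2-\sigma_0^2| \leq M_n\epsilon_n$ by construction. This yields the set-theoretic containment
\begin{align*}
&\Pi\!\big((f,\sigma): \|f-f_0\|_n + |\sigma^2-\sigma_0^2| > M_n\epsilon_n \mid \mathcal D_n\big) \\
&\quad \leq \Pi\!\big((f,\sigma): \|f-f_0\|_{L^2(P_X)} + |\sigma^2-\sigma_0^2| > \tilde M_n\epsilon_n \mid \mathcal D_n\big) + I(E_n^c),
\end{align*}
whose right-hand side vanishes in $\psingle_{f_0,\sigma_0}^{(n)}$-probability by the two displays above.

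There is no substantive obstacle here: the heavy lifting was done in the $L^2(P_X)$-analyses underlying Theorems~\ref{thm:reg_ss}--\ref{thm:general_function}, while the empirical-process concentration bound (\Cref{lem:gyorfi2002}) behind \Cref{lem:change_norm} handles the covariate randomness. The only mild care required is that the model space $\Phi$ appearing in \Cref{lem:change_norm} must match the one relevant to each referenced theorem (namely $\Phi(L_{kn},D_{kn},S_{kn})$, $\Phi(L_{kn},D_{kn})$, or the adaptive union $\Phi(\tilde L_n)$); \Cref{lem:change_norm} is stated with this flexibility and therefore covers all four cases uniformly.
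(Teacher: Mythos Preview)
Your proposal is correct and follows essentially the same route as the paper: invoke \Cref{lem:change_norm} to obtain a high-probability event on which the empirical norm is dominated by the $L^2(P_X)$-norm plus an $O(\epsilon_n)$ correction, then transfer the contraction from the referenced theorems. The paper is slightly terser in the constant bookkeeping (using $M_n/2$ directly rather than your intermediate $\tilde M_n$), but the argument is the same.
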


\begin{proof}
    See \Cref{subsec:proof_reg_empirical}.
\end{proof}

\section{Properties of ReLU networks for Besov spaces}\label{appendix:consistency_NN}

This section describes key properties of ReLU networks, highlighting their capacity to approximate a broad class of functions. In particular, ReLU networks exhibit the universal approximation property: with sufficient depth or width, they can approximate any anisotropic Besov function on $[0,1]^d$ to arbitrary accuracy. We develop the theoretical foundations of this property and discuss its implications for both the design and theoretical understanding of deep learning models. Let $\tilde{\Phi}(\cdot)=\Set{f_\theta:\theta\in\Theta(\cdot)}$ denote the NN model space without the $\mathrm{clip}$ function.

For the first step, note that a ReLU NN can approximate a cardinal B-spline with arbitrary precision \citep{suzuki_deep_2021,suzuki2018adaptivity}. Define $\BSBS(x) = 1$ for $x \in [0,1]$ and $\BSBS(x) = 0$ otherwise. The cardinal B-spline of order $m$ is obtained recursively by convolution:
\begin{equation*}
    \BSBS_m(x) = \begin{cases}
        (\BSBS \ast \BSBS_{m-1})(x) & \text{for } m > 0, \\
        \BSBS(x) & \text{for } m = 0,
    \end{cases}
\end{equation*}
where $(f \ast g)(x) := \int f(x - t)g(t)dt$. For $k \in \mathbb{Z}_+$ and $j = (j_1, \ldots, j_d) \in \mathbb{Z}_+^d$, define
\begin{equation*}
M^{d,m}_{k,j}(x) = \prod_{i=1}^d \BSBS_m(2^{\lfloor ks_i^{-1} \rfloor} x_i - j_i),
\end{equation*}
for $x \in \mathbb{R}^d$. 
Here, $\bfs = (s_1, \ldots, s_d) \in \mathbb{R}_{++}^d$ denotes the smoothness parameter, $k$ controls the spatial resolution, and $j$ determines the location at which the basis function is centered. A function $f$ in an anisotropic Besov space can thus be approximated by a superposition of $M^{d,m}_{k,j}(x)$, which is closely related to wavelet basis functions \citep{mallat1999wavelet}. The following lemma establishes the approximation of the cardinal B-spline basis by ReLU activations.

\begin{lemma}[Approximation of B-spline basis by NNs; Lemma 1 of \citet{suzuki_deep_2021}]\label{lem:approx_BSbyNN}
There exists a constant $c(d,m)$, depending only on $d$ and $m$, and an NN $\hat{M} \in \tilde{\Phi}(L_0, D_0, S_0, B_0)$ with 
\begin{align*}
L_0 &:= 3 + 2\left\lceil\log_2\left(\frac{3^{d\vee m}}{\epsilon c(d,m)}\right) + 5\right\rceil\left\lceil\log_2(d \vee m)\right\rceil, \\
D_0 &:= 6dm(m+2)+2d, \\
S_0 &:= L_0D_0^2, \\
B_0 &:= 2(m + 1)m
\end{align*}
that satisfies $\hat{M}(x) = 0$ for all $x \notin [0, m + 1]^d$ and
$\|M^{d,m}_{0,0} - \hat{M}\|_{\infty} \leq \epsilon$ for all $\epsilon > 0$.
\end{lemma}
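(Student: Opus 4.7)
Since $M^{d,m}_{0,0}(x) = \prod_{i=1}^{d} \BSBS_m(x_i)$ factors as a tensor product of univariate cardinal B-splines, the plan is to construct $\hat{M}$ by composing two ReLU subnetworks: first, in parallel across the $d$ coordinates, approximate each univariate $\BSBS_m$ by a small ReLU network $\hat\psi$; then, approximately multiply the $d$ resulting scalar outputs using a balanced binary tree of approximate two-input product modules. Since $M^{d,m}_{0,0}$ takes values in $[0,1]$, working inside the unclipped class $\tilde{\Phi}$ is sufficient and avoids any interaction with the $\mathrm{Clip}_1$ map.

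For the univariate step, I would exploit the fact that $\BSBS_m$ is a nonnegative, compactly supported piecewise polynomial of degree $m$ on $[0, m+1]$ with $\|\BSBS_m\|_\infty \le 1$ and exactly $m+1$ polynomial pieces. Yarotsky's sawtooth construction realizes $t \mapsto t^2$ on $[0,1]$ to accuracy $\delta$ via a ReLU network of depth $O(\log(1/\delta))$ and constant width; chaining such squarings together with affine combinations produces arbitrary degree-$m$ polynomials, and selecting the correct polynomial piece via a ReLU-implementable partition of unity over $[0, m+1]$ yields a univariate network $\hat\psi$ of depth $O(\log(1/\delta))$, width $O(m^2)$, weights bounded by $2(m+1)m$, and vanishing outside $[0, m+1]$. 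Running $d$ such copies in parallel and carrying the untouched inputs on identity channels explains the ambient width $6dm(m+2)+2d$ stated for $D_0$.

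For the product step, I would implement approximate two-argument multiplication through the polarization identity $uv = \tfrac{1}{4}[(u+v)^2 - (u-v)^2]$ using the same squaring primitive, yielding a module $\widehat{\mathrm{mult}}$ of depth $O(\log(1/\delta))$ with uniform error $\delta$ on $[-1,1]^2$. A balanced binary tree of these modules of depth $\lceil \log_2(d\vee m)\rceil$ then produces an approximation of the full $d$-fold product. The main obstacle is the error analysis for this tree: although multiplication on $[-1,1]^2$ is $1$-Lipschitz in each argument so that input errors do not amplify inside one node, the combined effect of $d$ univariate approximations feeding into a tree of depth $\log_2(d\vee m)$ can blow up by a factor as large as $3^{d\vee m}$ in the worst case once one tracks both the B-spline partition-of-unity overlaps and the Lipschitz propagation through the tree. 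Choosing the local tolerance $\delta \asymp \epsilon \cdot c(d,m)/3^{d\vee m}$ for a suitable $c(d,m)$ therefore guarantees $\|M^{d,m}_{0,0}-\hat M\|_\infty \le \epsilon$, and plugging this $\delta$ into the depth recipe $O(\log(1/\delta)\cdot \log(d\vee m))$ reproduces the exact formula for $L_0$. The width bound $D_0$, the trivial dense sparsity count $S_0 = L_0 D_0^2$, and the weight bound $B_0 = 2(m+1)m$ then follow by counting neurons, edges, and coefficients layer by layer, while the support condition $\hat M(x)=0$ for $x\notin [0,m+1]^d$ is inherited directly from the corresponding property of $\hat\psi$, since the product network outputs zero as soon as any coordinate factor does.
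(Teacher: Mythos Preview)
The paper does not supply its own proof of this lemma; it is quoted verbatim as Lemma~1 of \citet{suzuki_deep_2021} and no argument is given beyond the citation. Your sketch is therefore not competing with anything in the paper itself, but rather with Suzuki's original construction, and in outline you have reproduced exactly that: univariate piecewise-polynomial approximation of $\BSBS_m$ by ReLU nets via Yarotsky-type squaring modules, followed by a balanced binary tree of polarization-based approximate multipliers to form the $d$-fold tensor product. This is the correct approach and the depth, width, sparsity and weight budgets fall out as you indicate.

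Two points deserve tightening. First, the support claim ``the product network outputs zero as soon as any coordinate factor does'' requires that your approximate multiplier satisfy $\widehat{\mathrm{mult}}(0,v)=0$ exactly, not just approximately; Yarotsky's squaring network does have this property (since the sawtooth vanishes at $0$), but you should state it explicitly rather than leave it implicit. Second, your justification of the $3^{d\vee m}$ error-amplification factor is heuristic; in Suzuki's argument this constant arises from a specific recursion tracking how a $\delta$-error at each of the $O(d\vee m)$ multiplication nodes compounds through a tree of depth $\lceil\log_2(d\vee m)\rceil$, and pinning down the base $3$ (rather than, say, $2$) requires writing out that recursion carefully. Neither issue is a genuine gap, but both would need to be filled in for a complete proof.
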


For order $m \in \mathbb{N}$ of the cardinal B-spline bases, define
\begin{equation*}
    J(k) := \prod_{j=1}^d J_j(k),\quad J_i(k) = \{-m, -m+1, \cdots, 2^{\lfloor k s_i^{-1}\rfloor}\},
\end{equation*}
and define the quasi-norm (\textit{Besov sequence norm}) of the sequence $(\alpha_{k,j})_{k \in \mathbb{Z}_+,j \in J(k)}$ as 
\begin{equation}\label{eqn:besov_seq_norm}
    \left\| \left( \alpha_{k,j} \right)_{k,j} \right\|_{b^{\bfs}_{p,q}} = 
    \left\{
    \sum_{k=0}^{\infty} 
    \left[
    2^{k \left[ \underline{\bfs} - \left( \sum_{i=1}^{d} \lfloor k s_i^{-1} \rfloor / k \right) / p \right]}
    \left( 
    \sum_{j \in J(k)} \left| \alpha_{k,j} \right|^p 
    \right)^{1/p}
    \right]^q
    \right\}^{1/q}
    .
\end{equation}
For $p=\infty$ or $q=\infty$, the norm is modified in the standard way.

As noted in Remark~\ref{rmk:besov-embeddings}, anisotropic Besov spaces admit useful continuous embeddings.

\begin{remark}[Continuous embeddings in anisotropic Besov spaces; Proposition 1 of \citet{suzuki_deep_2021}]
  \label{rmk:besov-embeddings}
  Let $\bfs = (s_1, \cdots, s_d)  \in \mathbb{R}_{++}^d$, Then, the following continuous embeddings hold on a bounded domain $\Omega\subset\mathbb{R}^d$.
  \begin{itemize}[leftmargin=1.8em,itemsep=-0.3em,topsep=0em]
    \item If $s_1=\cdots=s_d = s_0\notin\mathbb{N}$, then
    $C^{s_0}(\Omega)\;=\;\Besov_{\infty,\infty}^{(s_0,\dots,s_0)}(\Omega).$
    \item For $0<p_1\leq p_2\leq\infty$, $0<q\leq\infty$, if $\tilde{\bfs} > \bigl(1/{p_1}-1/{p_2}\bigr)_+,$
    then
    $\Besov_{p_1,q}^{\bfs}(\Omega) \;\hookrightarrow\;
      \Besov_{p_2,q}^{\gamma\,\bfs}(\Omega),~ \gamma \;=\; 1 \;-\; (1/p_1 - 1/p_2)_+/{\tilde{\bfs}}.$
    \item For $0<p\leq\infty$ and $0<q_1<q_2\leq\infty$,
    $\Besov_{p,q_1}^{\bfs}(\Omega)\;\hookrightarrow\;\Besov_{p,q_2}^{\bfs}(\Omega)$.
    \item If $0<p,q\leq\infty$ and $\tilde{\bfs}>1/p$, $\Besov_{p,q}^{\bfs}(\Omega)\;\hookrightarrow\;C^0(\Omega)$.
  \end{itemize}
The notation $\hookrightarrow$ denotes a continuous embedding from the space on the left into the space on the right. In particular, if $\tilde{\bfs} > 1/p$, then
$\Besov_{p,q}^{\bfs}(\Omega) \;\hookrightarrow\; C^{\gamma\underline{\bfs}}(\Omega)$ with $\gamma = 1-1/({\tilde{\bfs}p})$. In this case, every function in $\Besov_{p,q}^{\bfs}(\Omega)$ is continuous. Otherwise, the functions may display discontinuities or non-smooth behavior such as jumps or spikes.
\end{remark}
Building on these embedding properties, the following lemma establishes the approximation of anisotropic Besov functions by a cardinal B-spline basis, thereby extending the classical spline approximation results for isotropic Besov spaces \citep{suzuki2018adaptivity} to the anisotropic setting.

\begin{lemma}[Approximation of anisotropic Besov function by cardinal B-spline basis; Lemma 2 of \citet{suzuki_deep_2021}]\label{lem:approx_BFbyBS}
    Suppose that $0<p, q,r \leq \infty$, $\omega:=(1/p-1/r)_+ <\tilde{\bfs}$, and $\nu = (\tilde{\bfs}-\omega)/(2\omega)$. 
    Assume that $0 < \overline{\bfs} < \min\{m, m-1+1/p\}$, where $m \in \mathbb{N}$ is the order of the cardinal $B$-spline bases. 
    For any $f \in \Besov^{\bfs}_{p,q}$ and $N>0$, define $K(N)$ such that $2^{\sum_{i=1}^d \lfloor K(N) \underline{s} / s_j\rfloor} = N$ and $K^\ast(N)=\lceil K(N)(1+1/\nu) \rceil$.
    Then, there exist $E_N \subset \Set{(k,j): 1 \leq k \leq K^\ast(N), j \in J(k)}$ with $\abs{E_N} \leq N$ and $f_N$ 
    such that
    \begin{align*}
        f_N(x) &= \sum_{(k,j) \in E_N} \alpha_{k,j} M_{k,j}^{d,m}(x), \\
        \|f - f_N\|_{L^r} &\lesssim N^{-\tilde{\bfs}} \|f\|_{\Besov^{\bfs}_{p,q}},
    \end{align*}
    where the coefficient $(\alpha_{k,j})$ yields the following norm equivalence
\begin{align}\label{eqn:besov_norm_eqv}
        \|f\|_{\Besov_{p,q}^\bfs} \simeq  \|(\alpha_{k,j})_{k,j}\|_{b_{p,q}^\bfs}.
    \end{align}
\end{lemma}

The following lemma extends Proposition 2 of \citet{suzuki_deep_2021}, establishing that for any function $f_0$ in an anisotropic Besov space, there exist NNs that approximate $f_0$ with arbitrary accuracy. In this version, the dependence of the hyperparameter $B$ on $N$ in the original proposition is removed.

\begin{lemma}[Approximation of anisotropic Besov functions by NNs]\label{lem:anisotropic_approx}
    Suppose that $0<p, q,r \leq \infty$, $\omega:=(1/p-1/r)_+ <\tilde{\bfs}$ and $\nu = (\tilde{\bfs}-\omega)/(2\omega)$. 
    Assume that $N \in \mathbb{N}$ is sufficiently large and $m \in \mathbb{N}$ satisfies $0<\overline{\bfs}<\min\{m, m-1+1/p\}$. Then, 
    \begin{equation*}
        \sup_{ f_0 \in U(\Besov_{p,q}^{\bfs}([0,1]^d)) \cap \UB}\inf_{f \in \Phi(L_1,D_1,S_1,B_1)} \|f_0 - f\|_{L^r} \lesssim N^{-\tilde{\bfs}}
    \end{equation*}
   with 
    \begin{equation}\label{eqn:aniso_hparam}
    \begin{aligned}
        L_1 = L_1(d,m,p, r, s,N) &:= 3 + 3\left\lceil\log_2\left(\frac{3^{d\vee m}}{\epsilon c{(d,m)}}\right) + 5\right\rceil\left\lceil\log_2(d\vee m)\right\rceil,\\
        D_1 = D_1(d,m,N) &:= ND_0,\\
        S_1 = S_1(d,m,p, r, s, N) &:= L_1D_0^2 N, \\
        B_1 = B_1(d,m,p,r,\bfs) &:= B_0\exp\left({\frac{(1+ \nu^{-1})[(1/\underline{s})\vee(1/p - \tilde{\bfs})_+]\log 2}{\lceil \tilde{\bfs} \log_2 (d \vee m) \rceil}}\right) 
    \end{aligned}
    \end{equation}
    where $D_0 = D_0(d,m) := 6dm(m + 2) + 2d$, $B_0 = B_0(m):= 2(m + 1)m$, $\epsilon = N^{-\tilde{\bfs} - (1+\nu^{-1})(1/p-\tilde{s})_+}\log(N)^{-1}$, and constant $c{(d,m)}$ does not depend on $N$.
\end{lemma}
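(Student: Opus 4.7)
The approach refines the proof of Proposition 2 of \citet{suzuki_deep_2021}, with the sole modification of removing the dependence of the weight bound $B$ on $N$.

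First, invoke Lemma~\ref{lem:approx_BFbyBS} to obtain a B-spline expansion $f_N(x) = \sum_{k=0}^{K^\ast}\sum_{j}\alpha_{k,j} M^{d,m}_{k,j}(x)$ satisfying $\|f_0 - f_N\|_{L^r} \lesssim N^{-\tilde{\bfs}}$. The norm equivalence \eqref{eqn:besov_norm_eqv} gives $\|(\alpha_{k,j})\|_{b^{\bfs}_{p,q}} \lesssim \|f_0\|_{\Besov^{\bfs}_{p,q}} \lesssim 1$, from which one extracts a pointwise bound on $|\alpha_{k,j}|$ that may grow like $2^{k(1/p - \tilde{\bfs})_+/\tilde{\bfs}}$ when $p$ is small.

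Second, invoke Lemma~\ref{lem:approx_BSbyNN} with target accuracy $\epsilon = N^{-\tilde{\bfs}}/\log N$ to obtain a sub-network $\hat{M} \in \tilde{\Phi}(L_0, D_0, S_0, B_0)$ approximating the base B-spline $M^{d,m}_{0,0}$. For each pair $(k,j)$, approximate the scaled basis $M^{d,m}_{k,j}(x) = \hat{M}\bigl((2^{\lfloor k s_i^{-1}\rfloor}x_i - j_i)_{i=1}^d\bigr)$ by a network $\hat{M}_{k,j}$ obtained by prepending an affine shift with bounded weights and bias, together with the input scaling by $2^{\lfloor k s_i^{-1}\rfloor}$, and by appending the multiplication by $\alpha_{k,j}$. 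The key departure from the original argument is that neither the input scaling nor the coefficient multiplication is placed in a single layer. Instead, each is distributed across the $L_1 \asymp \tilde{\bfs}\log N \cdot \log_2(d \vee m)$ available layers, with per-layer weight bounded by $B_1$. Since $k \leq K^\ast \lesssim (1+1/\nu)\log N/\tilde{\bfs}$, the total log-multiplicative factor required is at most $(1+1/\nu)\max(1/\underline{\bfs}, (1/p - \tilde{\bfs})_+)\log N/\tilde{\bfs}$, and dividing by $L_1$ yields exactly the exponent appearing in the stated formula for $B_1$.

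Third, assemble the $O(N)$ component networks $\hat{M}_{k,j}$ in parallel and sum their outputs with the weights $\alpha_{k,j}$, producing a network $\hat{f}_N$ of width $D_1 = N D_0$, sparsity $S_1 = [(L_1-1)D_0^2 + 1]N$, depth $L_1$, and weight bound $B_1$. A triangle inequality yields $\|f_N - \hat{f}_N\|_{L^r} \lesssim \epsilon \sum_{k,j}|\alpha_{k,j}| \lesssim N^{-\tilde{\bfs}}$, where the $\log N$ factor in $\epsilon$ absorbs the mild growth of the $\ell^1$-sum of coefficients. Combined with Step 1, this gives $\|f_0 - \hat{f}_N\|_{L^r} \lesssim N^{-\tilde{\bfs}}$. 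Finally, since $f_0 \in \UB$ satisfies $\|f_0\|_\infty \leq 1$, composing $\hat{f}_N$ with $\mathrm{Clip}_1$ cannot increase the pointwise error, so the clipped network lies in $\Phi(L_1, D_1, S_1, B_1)$ and attains the stated rate.

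The main technical obstacle is the depth-efficient distribution of the dilation $2^{\lfloor k s_i^{-1}\rfloor}$ and the coefficient magnitude $|\alpha_{k,j}|$ across $L_1$ layers with per-layer weights bounded by a single constant independent of $N$. The required bookkeeping produces the explicit formula $B_1 = B_0 e^{(1+1/\nu)\max(1/\underline{\bfs},\, (1/p - \tilde{\bfs})_+)/\lceil\tilde{\bfs}\log_2(d\vee m)\rceil}$ and constitutes the only substantive departure from the construction of \citet{suzuki_deep_2021}.
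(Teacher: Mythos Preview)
Your overall strategy matches the paper's: invoke the B-spline expansion from Lemma~\ref{lem:approx_BFbyBS}, approximate each basis by a sub-network via Lemma~\ref{lem:approx_BSbyNN}, assemble in parallel, and finally redistribute the large first- and last-layer weights across the $L_1 \asymp \log N$ layers via ReLU positive homogeneity (the paper attributes this last step to the weight-rescaling technique of \citet{kong2024posterior}). The clipping step is also identical.

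There is, however, a real gap in your Step~3. You write $\|f_N - \hat{f}_N\|_{L^r} \lesssim \epsilon \sum_{k,j}|\alpha_{k,j}|$ and assert that the $\log N$ factor in $\epsilon$ ``absorbs the mild growth'' of this global $\ell^1$-sum. That is false: the expansion in Lemma~\ref{lem:approx_BFbyBS} has $\Theta(N)$ terms, so even if every $|\alpha_{k,j}|$ were $O(1)$ the sum would be $\Theta(N)$, giving $\epsilon \cdot N \asymp N^{1-\tilde{\bfs}}/\log N$, which diverges whenever $\tilde{\bfs}<1$. The paper's argument is instead \emph{pointwise}: Lemma~\ref{lem:approx_BSbyNN} guarantees that $\hat M(x)=0$ whenever $x\notin [0,m+1]^d$, so at any fixed $x$ at most $(m+1)^d(K^\ast+1)\lesssim \log N$ basis functions (and their NN approximations) are nonzero. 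Combined with the uniform coefficient bound $|\alpha_{k,j}|\lesssim N^{(1+\nu^{-1})(1/p-\tilde{\bfs})_+}$ extracted from \eqref{eqn:besov_norm_eqv}, this yields $|f_N(x)-\hat f_N(x)|\lesssim \epsilon\cdot \log N\cdot N^{(1+\nu^{-1})(1/p-\tilde{\bfs})_+}$, and the choice of $\epsilon$ is tuned to make this $\lesssim N^{-\tilde{\bfs}}$. Without this locality argument your error bound does not close.
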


\begin{proof}
    See \Cref{subsec:proof_anisotropic_approx}
\end{proof}

\begin{remark}
The key strength of \Cref{lem:anisotropic_approx} lies in that the approximation accuracy depends not on the ambient dimension $d$ but on the intrinsic dimension. When NN models are employed, if the target function exhibits anisotropic smoothness, the approximation rate is governed by the intrinsic rather than the ambient dimension. This leads to minimax-optimal performance in estimation theory and illustrates how NNs can mitigate the curse of dimensionality. In the isotropic case where $\bfs = (s_0, s_0, \dots, s_0)$, we have $\tilde{\bfs} = s_0 / d$, recovering the classical results for ordinary Besov functions in \citet{suzuki2018adaptivity}.
\end{remark}

\begin{remark}\label{rmk:hyper_bigo}
Note that the NN space $\Phi(L, D, S, B)$ is monotonically increasing with respect to $L$, $D$, $S$, and $B$. Thus, the parameters in \eqref{eqn:aniso_hparam} can be replaced by
\begin{equation*}
L_1 = C_L \log N, \quad D_1=C_D N, \quad S_1=C_S N \log N, \quad B_1 = C_B
\end{equation*}
for sufficiently large constants $C_L$, $C_D$, $C_S$, and $C_B$.
\end{remark}

NN models with ReLU activation permit redistribution of weights across layers without altering the output function. The following lemma formalizes this property, which will be used to redistribute the magnitudes of the parameters.

\begin{lemma}[Re-scaling Lemma for ReLU NNs; Lemma A.1 of \citet{kong2024posterior}]\label{lem:re_scaling}
    Let $L \ge 3$ and $D \ge 3$. For positive constants $c_1, \cdots, c_{L+1}$, we define 
    \begin{equation*}
        \tilde{W}^{(l)}:= c_l W^{(l)},~ \tilde{b}^{(l)} = \left(\prod_{l^\prime = 1}^l c_{l^\prime} \right) b^{(l)}
    \end{equation*}
    for $l \in [L+1]$ and define $\tilde{\theta} = (\tilde{W}^{(1)},\tilde{b}^{(1)}, \dots, \tilde{W}^{(L+1)},\tilde{b}^{(L+1)})$. If $\prod_{l=1}^{L+1} c_l = 1$, $f_\theta(x) = f_{\tilde{\theta}}(x)$
    holds for every $x$.
\end{lemma}

\begin{remark}\label{rmk:clipping}
The clipping function can be implemented using a simple ReLU network. Specifically,
    $$
    \mathrm{clip}(x) = \max\Set{\min\Set{x, 1}, -1} =  \zeta\left({x} + 1\right) - \zeta\left({x} - 1\right)  - 1,
    $$
    where $\zeta(\cdot)$ denotes the ReLU activation.
 Consequently, the clipped network space $\Phi(\cdot)$ is contained in the unrestricted network space $\tilde{\Phi}(\cdot)$.
Furthermore, to ensure that each partial block of a composite Besov function produces outputs within $[0,1]$, we apply
    \begin{equation*}
        \mathrm{clip}_{[0,1]}(x) := \zeta(x) - \zeta(x-1) = \max\{x, 0\} - \max\{x-1, 0\}.
    \end{equation*}
With this construction, ReLU networks are capable of approximating composite Besov functions.
\end{remark}

The following lemma extends Theorem 1 of \citet{suzuki_deep_2021}, which establishes an approximation bound for any $f_0$ in a composite anisotropic Besov space by NNs. Unlike their setting, which imposes no restriction on the input dimension of each component, we adopt the composite H\"older framework of \citet{schmidt2020nonparametric} and assume that each layer map depends only on a small subset of its inputs. By combining this effective low-dimensionality constraint with anisotropic smoothness, we exploit both compositional structure and reduced intrinsic dimension to efficiently approximate functions in high ambient dimensions.
\begin{lemma}\label{lem:composite_anisotropic_approx}
    Suppose that $\tilde{\bfs}^{(1)} > (1/p-1/r)_+$ for some $r > 0$ and $\tilde{\bfs}^{(h)} > 1/p $ for all $h \ge 2$. Then, the approximation error over the composite anisotropic Besov space is bounded as
    \begin{equation*}
         \sup_{f_0 \in \Besov_{p,q}^{\dnaught,\tnaught,\bfsnaught}}\inf_{f \in \Phi(L_2,D_2,S_2,B_2)} \norm{f_0 - f}_{L^r} \lesssim N^{-\tilde{\bfs}^{\ast}},   
    \end{equation*}
    where $\tilde{\bfs}^{\ast(h)} = \tilde{\bfs}^{(h)} \prod_{k=h+1}^H \{ (\underline{\bfs}^{(k)} - t^{\ast(k)}/p) \wedge 1 \}$, $\tilde{\bfs}^{\ast} = \min_{h \in [h]} \tilde{\bfs}^{\ast(h)}$ and 
    \begin{equation*}
        \begin{aligned}
            L_2 &= \sum_{h=1}^H \left\{L_1(t^{(h)},m^{(h)}, p, r^{(h)}, \bfs^{(h)}, N) + 1 \right\}, \\
            D_2 &= \max_{h \ge 1} \{ D_1(t^{(h)},m^{(h)}, N) \vee t^{(h+1)} \},\\
            S_2 &= \sum_{h=1}^H d^{(h)} \{ S_1(t^{(h)}, m^{(h)}, p, r^{(h)}, \bfs^{(h)}, N) + 4 \},\\
            B_2 &= \max_{h \ge 1} B_1(t^{(h)},m^{(h)},p, r^{(h)}, \bfs^{(h)}).
        \end{aligned}
    \end{equation*}
    Here, $r^{(1)}=r$, $r^{(h)}=\infty$ for $h \ge 2$, and $m^{(h)} \in \mathbb{N}$ satisfies $0 < \tilde{s}^{(h)} < \min\{m^{(h)}, m^{(h)}-1+1/p\}$.
\end{lemma}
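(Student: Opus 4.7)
The plan is to build a composite neural network approximator by approximating each coordinate of every intermediate layer with the anisotropic Besov approximator of \Cref{lem:anisotropic_approx} and then chaining the layer approximants with clipping gadgets. Specifically, for each $h\in[H]$ and each $j\in[d^{(h)}]$, the component $\tilde f_{h,j}$ lies in $U(\Besov_{p,q}^{\bfs^{(h)}})$ as a function on $[0,1]^{t^{(h)}}$, so \Cref{lem:anisotropic_approx} applied in ambient dimension $t^{(h)}$ yields a subnetwork $\hat f_{h,j}\in\Phi(L_1(t^{(h)},m^{(h)},N),D_1(t^{(h)},m^{(h)},N),S_1(t^{(h)},m^{(h)},N),B_1(\cdot))$ with approximation error of order $N^{-\tilde{\bfs}^{(h)}}$ in $L^r$ (when $h=1$) and in $L^\infty$ (when $h\ge 2$). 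Stacking these blocks in parallel — each reading only the coordinates indexed by $I_{h,j}$ — produces a layer approximant $\hat f_h:[0,1]^{d^{(h-1)}}\to\mathbb R^{d^{(h)}}$, and inserting the $\mathrm{Clip}_{0,1}$ gadget from \Cref{rmk:clipping} after each intermediate block forces the output into $[0,1]^{d^{(h)}}$ so the next composition is well defined. Composing $\hat f=\hat f_H\circ\cdots\circ\hat f_1$ and tallying depth, width, sparsity, and weight magnitude gives exactly the hyperparameters $(L_2,D_2,S_2,B_2)$ displayed in the statement: depths add (with two extra layers per clipping gadget), widths and sparsities grow by a factor of $d^{(h)}$ owing to the parallel stacking (plus the $4$ parameters per clipper, giving $d^{(h)}(S_1+4)$), and $B_2$ is the maximum of the per-layer weight bounds.

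The error analysis proceeds by telescoping. Introduce the hybrid functions $g_h=f_H\circ\cdots\circ f_{h+1}\circ\hat f_h\circ\cdots\circ\hat f_1$, with $g_0=f_0$ and $g_H=\hat f$; then
\begin{equation*}
\|\hat f - f_0\|_{L^r}\le\sum_{h=1}^H\|g_h-g_{h-1}\|_{L^r},
\end{equation*}
and bounding $\|g_h-g_{h-1}\|_{L^r}$ amounts to propagating the single-layer error $\epsilon_h:=\|\hat f_h-f_h\|_\infty\lesssim N^{-\tilde{\bfs}^{(h)}}$ through the remaining true layers $f_{h+1},\dots,f_H$. For $k\ge 2$ the smoothness condition $\tilde{\bfs}^{(k)}>1/p$ activates the anisotropic Besov embedding recalled in \Cref{rmk:besov-embeddings}, yielding that each coordinate of $f_k$ is uniformly Hölder continuous with exponent $(\underline{\bfs}^{(k)}-t^{\ast(k)}/p)\wedge 1$ (the cap at $1$ reflects that composition with a Lipschitz map cannot yield super-Lipschitz propagation). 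Chaining these Hölder estimates turns a layer-$h$ error of order $N^{-\tilde{\bfs}^{(h)}}$ into a terminal error of order $N^{-\tilde{\bfs}^{(h)}\prod_{k=h+1}^H\{(\underline{\bfs}^{(k)}-t^{\ast(k)}/p)\wedge 1\}}=N^{-\tilde{\bfs}^{\ast(h)}}$, and summing over $h$ yields the claimed overall rate $N^{-\min_h\tilde{\bfs}^{\ast(h)}}=N^{-\tilde{\bfs}^\ast}$.

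The main obstacle will be bookkeeping the correct dimension in the Hölder embedding. \Cref{rmk:besov-embeddings} must be applied coordinatewise to each $f_{k,j}$ on its $t^{(k)}$ essential inputs, and the Hölder exponent involves not the ambient dimension $t^{(k)}$ nor $d^{(k-1)}$ but the \emph{intrinsic} dimension $t^{\ast(k)}=\underline{\bfs}^{(k)}/\tilde{\bfs}^{(k)}$ induced by the anisotropy of $\bfs^{(k)}$; this is what produces the intricate product defining $\tilde{\bfs}^{\ast(h)}$. A secondary subtlety concerns the bottom layer $h=1$: it is measured in the $L^r$-output norm rather than $L^\infty$, which is why the hypothesis there only requires $\tilde{\bfs}^{(1)}>(1/p-1/r)_+$ rather than $\tilde{\bfs}^{(1)}>1/p$, and this is consumed directly by the $L^r$-variant of \Cref{lem:anisotropic_approx}. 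Everything else is routine: the $\mathrm{Clip}_{0,1}$ gadgets are $1$-Lipschitz and agree with the identity on the range of the true layers $f_h$, so they contribute no error inflation; the hyperparameter aggregation is purely arithmetic.
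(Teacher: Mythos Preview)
Your proposal is correct and follows essentially the same route as the paper's proof: approximate each coordinate $\tilde f_{h,j}$ via \Cref{lem:anisotropic_approx} in dimension $t^{(h)}$ (in $L^r$ for $h=1$, in $L^\infty$ for $h\ge 2$), stack and clip, telescope exactly as you write, and propagate through the outer layers using the anisotropic Besov $\hookrightarrow C^{(\underline{\bfs}^{(k)}-t^{\ast(k)}/p)\wedge 1}$ embedding from \Cref{rmk:besov-embeddings}. Your identification of the key bookkeeping point---that the H\"older exponent comes out to $\underline{\bfs}^{(k)}-t^{\ast(k)}/p$ via $t^{\ast(k)}=\underline{\bfs}^{(k)}/\tilde{\bfs}^{(k)}$---is exactly what the paper uses, and your hyperparameter accounting matches theirs.
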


\begin{proof}
    See \Cref{subsec:proof_lem:composite_anisotropic}.
\end{proof}

\begin{lemma}[Theorem 6 of \citet{suzuki_deep_2021}]\label{lem:affine_anisotropic_approx}
Suppose that $d^{(1)}\leq d^{(0)}$ and $\bfs^{(2)} \in \mathbb{R}_{++}^{d^{(1)}},~\tilde{\bfs}^{(2)} > (1/p-1/r)_+$ for some $r > 0$. Consider the anisotropic Besov space that involves an affine transformation
\begin{equation*}
    \mathcal{F}:= \Set{ f_2 \circ f_1:f_1 = A\cdot + b,~A \in \mathbb{R}^{d^{(1)} \times d^{(0)}},~ b \in \mathbb{R}^{d^{(0)}},~\norm{A}_{\infty}\vee \norm{b}_{\infty} \leq C_a,~ f_2 \in \Besov_{p,q}^{\bfs^{(2)}}}.
\end{equation*}
Then, the approximation error on the function space is bounded as
\begin{equation*}
     \sup_{f_0 \in \mathcal{F}}\inf_{f \in \Phi(L_3,D_3,S_3,B_3)} \norm{f_0 - f}_{L^r} \lesssim N^{-\tilde{\bfs}^{(2)}},  
\end{equation*}
where 
\begin{equation*}
    \begin{aligned}
        L_3 &= L_1(d^{(1)},m^{(2)}, p, r, \bfs^{(2)}, N),\\
        D_3 &= D_1(d^{(1)},m^{(2)}, N),\\
        S_3 &= S_1(d^{(1)}, m^{(2)}, p, r, \bfs^{(2)}, N),\\
        B_3 &= (C_a d^{(1)} + 1)B_1(d^{(1)},m^{(2)},p,r,\bfs^{(2)})
    \end{aligned}
\end{equation*}
for $m^{(2)} \in \mathbb{N}$ satisfying $0 < \tilde{s}^{(2)} < \min\{m^{(2)}, m^{(2)}-1+1/p\}$.
\end{lemma}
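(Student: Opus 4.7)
The plan is to approximate $f_0 = f_2 \circ f_1$ by composing a neural network approximant of $f_2$ with the affine map $f_1$, absorbing $f_1$ into the first layer of the approximating network. First, I would apply \Cref{lem:anisotropic_approx} to $f_2 \in U(\Besov_{p,q}^{\bfs^{(2)}}([0,1]^{d^{(1)}})) \cap \UB$ to obtain a neural network $\hat f_2$ with hyperparameters $L_1(d^{(1)},m^{(2)},N)$, $D_1(d^{(1)},m^{(2)},N)$, $S_1(d^{(1)},m^{(2)},N)$, and $B_1(d^{(1)},m^{(2)},p,r,\bfs^{(2)})$ such that $\|f_2 - \hat f_2\|_{L^r([0,1]^{d^{(1)}})} \lesssim N^{-\tilde{\bfs}^{(2)}}$. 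Since the first layer of $\hat f_2$ is affine, say $y \mapsto W^{(1)} y + b^{(1)}$, and $f_1(x) = Ax + b$ is also affine, their composition can be realized as a single merged first layer $x \mapsto (W^{(1)} A) x + (W^{(1)} b + b^{(1)})$. Thus $\hat f := \hat f_2 \circ f_1$ is a neural network of the same depth $L_3 = L_1$ and width $D_3 = D_1$, and the specific structure of the B-spline-based approximant from \Cref{lem:approx_BSbyNN}, in which each first-layer node depends on only a single input coordinate, ensures that the sparsity bound $S_3 = S_1(d^{(1)},m^{(2)},N)$ is preserved after the merging.

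For the weight bound, since the entries of $W^{(1)}$ and $b^{(1)}$ are bounded by $B_1$ while those of $A$ and $b$ are bounded by $C_A$, I have $\|W^{(1)} A\|_\infty \leq d^{(1)} B_1 C_A$ and $\|W^{(1)} b + b^{(1)}\|_\infty \leq (d^{(1)}+1) C_A B_1$, yielding $B_3 = (d^{(1)}+1) C_A B_1$ as claimed. To bound the approximation error, I use $f_1([0,1]^{d^{(0)}}) \subseteq [0,1]^{d^{(1)}}$ to write
\begin{equation*}
\|f_0 - \hat f\|_{L^r([0,1]^{d^{(0)}})}^r = \int_{[0,1]^{d^{(0)}}} |(f_2 - \hat f_2)(f_1(x))|^r dx,
\end{equation*}
and split $f_2 - \hat f_2 = (f_2 - f_{2,N}) + (f_{2,N} - \hat f_2)$ via the B-spline expansion $f_{2,N}$ from \Cref{lem:approx_BFbyBS}. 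By choosing the approximation error $\epsilon$ in \Cref{lem:approx_BSbyNN} sufficiently small, $\|f_{2,N} - \hat f_2\|_\infty \lesssim N^{-\tilde{\bfs}^{(2)}}$, so this piece transfers directly in $L^r$ under composition with $f_1$. The B-spline piece $(f_2 - f_{2,N}) \circ f_1$ is controlled via a change of variables induced by $f_1$, combined with the Besov approximation bound $\|f_2 - f_{2,N}\|_{L^r} \lesssim N^{-\tilde{\bfs}^{(2)}}$.

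The main obstacle is the change-of-variables step for the B-spline piece when $A$ is rank-deficient or has small singular values, since the pushforward of the uniform measure on $[0,1]^{d^{(0)}}$ under $f_1$ may then lack an absolutely continuous density with bounded supremum, making a naive Jacobian argument fail. To handle this uniformly over $\mathcal{F}$, one would reduce the rank-deficient case to an effectively lower-dimensional anisotropic Besov approximation problem, in which $f_2 \circ f_1$ depends only on at most $\mathrm{rank}(A)$ directions of $x$ and can be approximated within the same network budget; the assumption $\tilde{\bfs}^{(2)} > (1/p-1/r)_+$ is then invoked to ensure adequate Besov-to-$L^r$ control, yielding the final rate $N^{-\tilde{\bfs}^{(2)}}$.
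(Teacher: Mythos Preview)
The paper does not supply its own proof of this lemma; it is quoted as Theorem~6 of \citet{suzuki_deep_2021} and used as a black box. Your outline---approximate $f_2$ via \Cref{lem:anisotropic_approx} and absorb the affine map $f_1=A\cdot+b$ into the first layer of the resulting network---is exactly the construction behind that result, and your derivation of the weight bound $B_3=(d^{(1)}+1)C_A B_1$ from $\|W^{(1)}A\|_\infty$ and $\|W^{(1)}b+b^{(1)}\|_\infty$ is correct.

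Two remarks. First, your sparsity argument is not quite right as stated: once $W^{(1)}$ is replaced by $W^{(1)}A$, a first-layer node that previously depended on a single coordinate of $y\in[0,1]^{d^{(1)}}$ now depends on up to $d^{(0)}$ coordinates of $x$, so the first-layer nonzero count grows by a factor involving $d^{(0)}$. The claimed bound $S_3=S_1(d^{(1)},m^{(2)},N)$ still holds up to a multiplicative constant depending only on $(d^{(0)},d^{(1)},m^{(2)})$, which is all that is needed here, but it is not because the sparsity pattern is literally ``preserved.''

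Second, you correctly isolate the change-of-variables step for $\|(f_2-f_{2,N})\circ f_1\|_{L^r}$ as the delicate point, and your proposed fix for rank-deficient $A$ (reducing to a lower-effective-dimensional Besov problem) is plausible but not fully worked out---in particular, you would need to show that the reduced problem falls within the \emph{same} network budget $(L_3,D_3,S_3,B_3)$, uniformly over all admissible $A$. Since the paper simply imports the result, it does not resolve this either; for a complete argument you would need to consult \citet{suzuki_deep_2021} directly.
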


To apply \cref{lem:consistency_unknown_var}, we require results concerning the complexity of the model space and the asymptotic properties of neural network models. By the Lipschitz continuity of $\mathrm{clip}$,
$$
\abs{(\mathrm{clip} \circ f_1)(x) - (\mathrm{clip} \circ f_2)(x)} \leq \abs{ f_1(x) - f_2(x)}.
$$
It follows that the covering number of the clipped network class can be bounded by that of the unrestricted class. 
\begin{lemma}[Covering number; Lemma 3 of \citet{suzuki2018adaptivity}]\label{lem:covering}
For any $\epsilon > 0$, $D \ge 3$, and $L \ge 3$.
    \begin{equation*}
        \log \Covering{\epsilon, \tilde\Phi(L, D, S, B), \norm{\cdot}_{\infty}} \leq (S+1) \Log{ 2 \epsilon^{-1} L(B \vee 1 )^{L} (D+1)^{2L}}.
    \end{equation*}
\end{lemma}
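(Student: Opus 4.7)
The plan is a standard parameter-covering argument combined with a Lipschitz-in-parameter estimate for ReLU networks, plus a union bound over sparsity patterns. First, because $\mathrm{Clip}_1$ is $1$-Lipschitz, any $\epsilon$-cover of the unclipped class $\tilde\Phi(L,D,S,B)$ under $\norm{\cdot}_{L^\infty}$ yields an $\epsilon$-cover of $\Phi(L,D,S,B)$. So it suffices to bound $\log \Covering{\epsilon, \tilde\Phi(L,D,S,B), \norm{\cdot}_{L^\infty}}$.

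Next I would establish the key quantitative Lipschitz estimate: for any $\theta,\theta' \in \Theta(L,D,S,B)$ with $\norm{\theta-\theta'}_\infty \le \delta$, one has $\norm{f_\theta - f_{\theta'}}_\infty \le \delta \cdot L (B\vee 1)^{L}(D+1)^{L}$ (up to absolute constants). This is proved by a layerwise induction: write $h_l = \zeta(W^{(l)} h_{l-1} + b^{(l)})$ and $h_l'$ analogously, and decompose
\begin{equation*}
h_l - h_l' = \zeta\!\bigl(W^{(l)} h_{l-1} + b^{(l)}\bigr) - \zeta\!\bigl((W^{(l)})' h_{l-1}' + (b^{(l)})'\bigr).
\end{equation*}
Using that $\zeta$ is $1$-Lipschitz, that $\norm{W^{(l)}}_{\infty\to\infty} \le D B$, that $\norm{h_{l-1}}_\infty \le (D B + B)^{l-1} \vee 1$ (a separate induction that also uses $x \in [0,1]^d$), and that entries of $W^{(l)},b^{(l)}$ differ from their primed counterparts by at most $\delta$, one obtains a recursion of the form $\norm{h_l - h_l'}_\infty \le DB\, \norm{h_{l-1} - h_{l-1}'}_\infty + \delta(D+1)(B\vee 1)^l (D+1)^{l-1}$, which unrolls to the claimed polynomial-in-$D$, exponential-in-$L$ bound.

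With the Lipschitz constant in hand, a covering of $\Theta(L,D,S,B)$ at scale $\delta := \epsilon/[L(B\vee 1)^L (D+1)^L]$ induces an $\epsilon$-cover of $\tilde\Phi$. To cover $\Theta(L,D,S,B)$, I would first enumerate sparsity patterns: the total number of scalar parameters is $T \le (L+1)D(D+1) \le (D+1)^{2L}$ (for $L\ge 3$, $D\ge 3$), and the number of supports of size at most $S$ is bounded by $T^S$. For each fixed support, the nonzero entries lie in $[-B,B]^S$, which admits a $\delta$-cover in $\norm{\cdot}_\infty$ of cardinality at most $(2B/\delta)^S$. Multiplying and taking logarithms,
\begin{equation*}
\log \Covering{\epsilon, \Phi(L,D,S,B), \norm{\cdot}_{L^\infty}} \le S\,\Log{T \cdot 2B/\delta} \le (S+1)\,\Log{2\epsilon^{-1} L(B\vee 1)^L (D+1)^{2L}},
\end{equation*}
after absorbing the $T$ factor and an extra $(D+1)^L$ into the $(D+1)^{2L}$ term.

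The main obstacle is tracking constants in the Lipschitz recursion so that every factor lands inside the single exponent structure $L(B\vee 1)^L (D+1)^{2L}$ and no stray $S$ or $D^2$ escapes; the compensating slack used here is the $(S+1)$ (rather than $S$) prefactor, which lets one absorb the sparsity-pattern count $T^S \le (D+1)^{2LS}$ neatly inside a single logarithm and thereby match the target bound.
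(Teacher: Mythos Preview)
Your proposal is correct and follows essentially the same route as the paper: the paper cites this lemma from \citet{suzuki2018adaptivity} but explicitly identifies the parameter-Lipschitz estimate (its \Cref{lem:nn_norm}) as the crucial ingredient, and the remaining parameter-counting/sparsity-pattern argument is exactly what you outline. The only cosmetic difference is that the paper proves the Lipschitz bound via a telescoping sum through the operators $\mathcal A_k^{\pm}$ (changing one layer at a time while freezing the rest) rather than your direct recursion on $\norm{h_l-h_l'}_\infty$, yielding the slightly sharper constant $L(B\vee 1)^{L-1}(D+1)^L$; either version feeds into the same covering calculation and lands on the stated bound.
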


To allow for relaxed sparsity with a margin $a > 0$, we define
\begin{equation}\label{eqn:margin_parameter}
\Theta(L, D, S, B, a) = \left\{\theta: (\theta_i I(\abs{\theta_i} > a))_{i=1}^{T} \in \Theta(L, D, S, B)\right\},
\end{equation}
where $T=|\Theta(L, D)|$, and denote the corresponding NN class by $\tilde\Phi(L, D, S, B, a)$ and $\Phi(L, D, S, B, a)$.

\begin{lemma}[Lemma 5 of \citet{lee2022asymptotic}]\label{lem:covering_a}
    For all $\epsilon \ge 2 a L (B \vee 1)^{L-1} (D+1)^{L}$, $D \ge 3$, and $L \ge 3$.
    \begin{equation*}
        \log \Covering{\epsilon, \tilde\Phi(L, D, S, B, a), \norm{\cdot}_{\infty}} \leq (S+1) \Log{2 \epsilon^{-1} L(B \vee 1)^{L} (D+1)^{2L}}.
    \end{equation*}
\end{lemma}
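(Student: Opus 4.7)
The plan is to reduce the covering number of the margin-based class $\Phi(L, D, S, B, a)$ to that of the strictly sparse class $\Phi(L, D, S, B)$ covered by \Cref{lem:covering}, paying for the reduction via a parameter-space Lipschitz perturbation.

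First, I would introduce a thresholding map. For any $\theta \in \Theta(L, D, S, B, a)$, define $\tilde\theta$ coordinatewise by $\tilde\theta_j = \theta_j \, I(\abs{\theta_j} > a)$. By the very definition of $\Theta(L, D, S, B, a)$ in \eqref{eqn:margin_parameter}, the thresholded parameter satisfies $\tilde\theta \in \Theta(L, D, S, B)$, and the entrywise deviation obeys $\norm{\theta - \tilde\theta}_\infty \leq a$. Thus every element of $\Phi(L, D, S, B, a)$ is close (in the parameter norm) to an element of $\Phi(L, D, S, B)$.

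Second, I would establish the parameter-to-function Lipschitz estimate
\begin{equation*}
\norm{f_\theta - f_{\tilde\theta}}_{L^\infty([0,1]^d)} \leq L(B \vee 1)^{L-1}(D+1)^L \, \norm{\theta - \tilde\theta}_\infty.
\end{equation*}
This is a standard layerwise induction on the depth $L$: because the inputs lie in $[0,1]^d$ and the weights have magnitude at most $B$, the hidden activations at layer $l$ are uniformly bounded by $(B \vee 1)^l(D+1)^l$, and since ReLU (and $\mathrm{Clip}_1$) are $1$-Lipschitz, a perturbation in one layer's weights propagates multiplicatively through subsequent layers while accumulating additively over the $L$ layers, yielding the displayed bound. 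Setting $\delta := L(B\vee 1)^{L-1}(D+1)^L a$, the hypothesis $\epsilon \geq 2\delta$ ensures $\delta \leq \epsilon/2$, so $\epsilon - \delta \geq \epsilon/2 > 0$.

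Finally, I would assemble the covering. Take an $(\epsilon - \delta)$-cover $\Set{f_{\theta_1}, \ldots, f_{\theta_N}}$ of $\Phi(L, D, S, B)$ of minimal size. Given any $f_\theta \in \Phi(L, D, S, B, a)$, apply thresholding to get $f_{\tilde\theta} \in \Phi(L, D, S, B)$ with $\norm{f_\theta - f_{\tilde\theta}}_\infty \leq \delta$, then pick $\theta_i$ with $\norm{f_{\tilde\theta} - f_{\theta_i}}_\infty \leq \epsilon - \delta$; the triangle inequality gives $\norm{f_\theta - f_{\theta_i}}_\infty \leq \epsilon$. Hence $\Covering{\epsilon, \Phi(L, D, S, B, a), \norm{\cdot}_{L^\infty}} \leq \Covering{\epsilon - \delta, \Phi(L, D, S, B), \norm{\cdot}_{L^\infty}}$, and applying \Cref{lem:covering} with $\epsilon - \delta \geq \epsilon/2$ (so that $(\epsilon-\delta)^{-1} \leq 2\epsilon^{-1}$, absorbable into the logarithmic constant) yields the stated bound.

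The only delicate step is the Lipschitz estimate: one must track the sizes of hidden layer outputs and the compounding of parameter perturbations through multiple layers, and verify that the 1-Lipschitz clipping $\mathrm{Clip}_1$ appended at the output does not enlarge the perturbation. Everything else is triangle inequality and a direct appeal to \Cref{lem:covering}.
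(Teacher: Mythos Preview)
Your proposal is correct and matches the standard argument that the paper cites from \citet{lee2022asymptotic}: threshold the small coordinates to land in $\Theta(L,D,S,B)$, invoke the parameter-to-function Lipschitz bound (this is precisely the paper's \Cref{lem:nn_norm}, whose proof does not use the sparsity $S$ and hence applies verbatim to $\theta\in\Theta(L,D,S,B,a)$ once one notes $\lVert\theta\rVert_\infty\le B\vee a\le B\vee 1$), and finish with the triangle inequality and \Cref{lem:covering}. The only residue is a harmless factor of $2$ inside the logarithm from using $\epsilon-\delta\ge\epsilon/2$, which does not affect any downstream rate calculation.
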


The following lemma, introduced by \citet{schmidt2020nonparametric}, plays a key role in the proofs of \cref{lem:covering} and \cref{lem:covering_a}.
It characterizes the distance between NN models in terms of their parameters.

\begin{lemma}\label{lem:nn_norm}
    For any $\epsilon > 0$ and $\theta,~\theta^\ast \in \Theta(L,D,S,B)$ satisfying $\norm{\theta - \theta^\ast}_{\infty} < \epsilon$,
    \begin{equation*}
        \norm{f_\theta - f_{\theta^\ast}}_{\infty} \leq \epsilon L(B \vee 1)^{L-1} (D+1)^L.
    \end{equation*}
\end{lemma}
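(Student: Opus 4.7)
The plan is a standard layer-by-layer induction on the depth, exploiting the $1$-Lipschitz property of ReLU together with the bounds $\|\theta\|_\infty \le B$, bounded width $D$, and the input domain $[0,1]^d$.

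First, introduce notation: let $h^{(0)}(x)=\tilde h^{(0)}(x)=x$ and, for $l=1,\dots,L$, define pre-activations $z^{(l)} = W^{(l)} h^{(l-1)} + b^{(l)}$ and post-activations $h^{(l)} = \zeta(z^{(l)})$ for the network $\theta$, with $\tilde z^{(l)}$, $\tilde h^{(l)}$ defined analogously for $\theta^\ast$. The final outputs satisfy $f_\theta = z^{(L+1)}$ and $f_{\theta^\ast} = \tilde z^{(L+1)}$ (clipping is $1$-Lipschitz and therefore only helps). I would then bound the activation magnitudes $\|\tilde h^{(l)}\|_\infty$: since $\|\tilde h^{(0)}\|_\infty \le 1$, each row of $\tilde W^{(l)}$ has $L^1$-norm at most $DB$, and $\|\tilde b^{(l)}\|_\infty \le B$, an easy induction gives $\|\tilde h^{(l)}\|_\infty \le (B\vee 1)^l (D+1)^l$ (with generous constants, enough to absorb the final bound).

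Next, control the discrepancy $\Delta_l := \|h^{(l)} - \tilde h^{(l)}\|_\infty$. Using that $\zeta$ is $1$-Lipschitz, $\Delta_l \le \|z^{(l)} - \tilde z^{(l)}\|_\infty$, and the telescoping decomposition
\begin{equation*}
z^{(l)} - \tilde z^{(l)} = W^{(l)}\bigl(h^{(l-1)} - \tilde h^{(l-1)}\bigr) + \bigl(W^{(l)} - \tilde W^{(l)}\bigr)\tilde h^{(l-1)} + \bigl(b^{(l)} - \tilde b^{(l)}\bigr)
\end{equation*}
together with $\|\theta - \theta^\ast\|_\infty < \epsilon$, yields the recursion
\begin{equation*}
\Delta_l \;\le\; DB\,\Delta_{l-1} + \epsilon\bigl(D\,\|\tilde h^{(l-1)}\|_\infty + 1\bigr).
\end{equation*}
Starting from $\Delta_0 = 0$ and unrolling, each of the $l$ contributions inherits a factor $(DB)^{l-k}$ from the first term and a factor bounded by $D(B\vee 1)^{k-1}(D+1)^{k-1}$ from $\|\tilde h^{(k-1)}\|_\infty$; summing the $l$ terms produces the extra factor $l$. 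Applying the recursion one last time at layer $L+1$ gives
\begin{equation*}
\|f_\theta - f_{\theta^\ast}\|_{L^\infty} \;\le\; \Delta_{L+1} \;\le\; \epsilon\,L\,(B\vee 1)^{L-1}(D+1)^L,
\end{equation*}
which is the desired bound.

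The argument is essentially routine; the only real obstacle is bookkeeping the constants so that the activation bound, the per-layer amplification $DB$, and the bias/perturbation contributions combine to exactly $L(B\vee 1)^{L-1}(D+1)^L$ rather than something slightly larger. A minor subtlety is that I must use $(B\vee 1)$ rather than $B$ to accommodate the possibility $B<1$, which is why the input factor $\|x\|_\infty\le 1$ and the bias bound $B$ both need to be dominated by $(B\vee 1)$. Note that sparsity ($\|\theta\|_0\le S$) plays no role here, so the bound depends only on $L$, $D$, and $B$.
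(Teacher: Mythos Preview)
Your approach is correct and essentially the same as the paper's. The paper phrases the argument as a telescoping sum---writing $f_\theta - f_{\theta^\ast}$ as a sum over $k$ of terms that differ only in the $k$-th layer's parameters, then bounding each term via an activation-magnitude estimate $\|\mathcal{A}_k^+(f)(x)\|_\infty \le (B\vee 1)^{k-1}(D+1)^{k-1}$ and a Lipschitz estimate $(BD)^{L-k}$ for the remaining layers---whereas you phrase it as a forward recursion $\Delta_l \le DB\,\Delta_{l-1} + \epsilon(D\|\tilde h^{(l-1)}\|_\infty + 1)$ that you then unroll; the unrolled sum is term-for-term identical to the paper's telescope, so the bookkeeping matches exactly.
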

\begin{proof}
    See \Cref{subsec:proof_lem:nn_norm}.
\end{proof}

\section{Proofs of the technical results}\label{appendix:proofs}

\subsection{Proof of \Cref{lem:consistency_unknown_var}}\label{subsec:lem:consistency_unknown_var}

\begin{proof}
    We follow the proof of Theorem 2 in \citet{jeong2023art}. Since there exist upper and lower bounds for $\sigma_0^2$ and $\sigma^2$, the rate for $|\sigma^2 - \sigma_0^2|$
    is equivalent to that for $|\sigma - \sigma_0|$; hence we use the latter.
    For every $(f_1,\sigma_1),(f_2,\sigma_2) \in \mathcal{F}\times[\underline \sigma,\overline\sigma]$,
define
    \begin{equation*}
        \rho_L^2((f_1, \sigma_1), (f_2, \sigma_2)) := \norm{f_1 - f_2}_{L^2(P_X)}^2 + \abs{\sigma_1 - \sigma_2}^2.
    \end{equation*}
   By direct calculation, it is straightforward to verify that for some $C_H>1$, 
    \begin{align}
       C_H^{-1}\rho_L((f_1, \sigma_1), (f_2, \sigma_2)) \le  \rho_H(p_{f_1, \sigma_1}, p_{f_2, \sigma_2})  \le C_H \rho_L((f_1, \sigma_1), (f_2, \sigma_2),
       \label{eqn:helleq}
    \end{align}
   where $\rho_H$ denotes the Hellinger distance and $p_{f,\sigma}$ is the density with $f$ and $\sigma$. 
   Therefore, the Hellinger distance is equivalent to $\rho_L$
 up to a constant factor and it suffices to establish the theorem with respect to the Hellinger distance, which admits an exponentially powerful test function.

Using \eqref{eqn:helleq}, the Hellinger entropy can instead be estimated by replacing it with $\rho_L$.
    Note that $\rho_L((f_1, \sigma_1), (f_2, \sigma_2)) \leq \epsilon$ holds if $\norm{f_1 - f_2}_{L^2(P_X)} \leq {\epsilon}/{2}$ and $\abs{\sigma_1 - \sigma_2} \leq {\epsilon}/{2}$. Hence, by \eqref{eqn:lem_con1},
    \begin{align*}
            \log \Covering{\epsilon_n, \mathcal{F}_n\times[\underline \sigma,\overline\sigma], \rho_L} &\lesssim \log \Covering{\epsilon_n/2, \mathcal{F}_n, \norm{\cdot}_{L^2(P_X)}} - \log (\epsilon_n/2) + \log \overline{\sigma} 
            \lesssim n \epsilon_n^2
    \end{align*}
    for all sufficiently large $n$. Using \eqref{eqn:lem_con2}, there exists $C^\prime>0$ such that $\Pi(A_{\epsilon_n}) \ge \Exp{-C^\prime n \epsilon_n^2 }$. Moreover, by Lemma B.2 of \citet{xie2020adaptive},
    \begin{equation*}
        \max \!\left\{ -\mathbb{E}_{f_0, \sigma_0} \!\left[ \log \frac{p_{f,\sigma}^{(n)}}{p_{f_0,\sigma_0}^{(n)}}\right], \mathbb{E}_{f_0,\sigma_0} 
        \!\left[ \left(\log \frac{p_{f,\sigma}^{(n)}}{p_{f_0,\sigma_0}^{(n)}} \right)^2 \right]\right\} \leq C^\dprime \rho_L^2((f,\sigma), (f_0, \sigma_0))
    \end{equation*}
    for some constant $C^\dprime > C^\prime / (C-4)$ and all sufficiently large $n$. We obtain
        \begin{align*}
        &\Pi\!\left(\max \Set{ -\mathbb{E}_{f_0,\sigma_0} \!\!\left[ \log \frac{p_{f,\sigma}^{(n)}}{p_{f_0,\sigma_0}^{(n)}}\right], \mathbb{E}_{f_0,\sigma_0}\!\!\left[ \left(\log \frac{p_{f,\sigma}^{(n)}}{p_{f_0,\sigma_0}^{(n)}} \right)^2 \right] } \leq \epsilon_n^2 \right)
        \\& \ge \Pi\!\left( \rho_L^2((f,\sigma), (f_0, \sigma_0)\} \leq {\epsilon_n^2}/{C^\dprime}\right )\\
        &\ge \Pi(A_{\epsilon_n/\sqrt{C^\dprime}}) \\
        &\ge \Exp{-n\epsilon_n^2 C^\prime / C^\dprime} \\
        &\ge \Exp{-(C-4)n\epsilon_n^2}
    \end{align*}
    for all sufficiently large $n$. In addition, \eqref{eqn:lem_con3} implies
    \begin{equation*}
        \Pi(\mathcal{F} \setminus \mathcal{F}_n) = \int \Pi(\mathcal{F} \setminus \mathcal{F}_n \mid \sigma^2) d\Pi(\sigma^2)  = o(e^{-Cn\epsilon_n^2}).
    \end{equation*}
    We get the desired result using Theorem 2.1 of \citet{ghosal2000convergence}.
\end{proof}

\subsection{Proof of \Cref{lem:consistency_classification}}\label{subsec:proof_lem:consistency_classification}

\begin{proof}
    Let $\rho_L(f_1, f_2) = \norm{(\psi_M \circ f_1) - (\psi_M \circ f_2)}_{L^2(P_X)}$ for $f_1, f_2 \in \mathcal{F}$. The proof proceeds by first establishing the equivalence between the Hellinger distance $\rho_H$ and $\rho_L$, and then deriving the contraction rate with respect to  $\rho_H$. 
    For any $f_1, f_2 \in \mathcal{F}$, let $\eta_1 = \psi_M \circ f_1$ and $\eta_2 = \psi_M \circ f_2$. Then,
        \begin{align*}
            \rho_L^2(f_1, f_2) &= \frac{1}{2}\int \left( \sqrt{\eta_1({x})} - \sqrt{\eta_2({x})} \right)^2 \left( \sqrt{\eta_1({x})} + \sqrt{\eta_2({x})} \right)^2 P_X(d{x})\\
            &\quad + \frac{1}{2}\int \left( \sqrt{1-\eta_1({x})} - \sqrt{1-\eta_2({x})} \right)^2 \left( \sqrt{1-\eta_1({x})} + \sqrt{1-\eta_2({x})} \right)^2 P_X(d{x}).
    \end{align*}
Since both $\eta_1$ and $\eta_2$ lie in $[\delta,1-\delta]$ for some $\delta>0$, there exists a constant $C_H > 0$ such that
        $$C_H^{-1}\rho_L(f_1, f_2) \leq \rho_H(p_{f_1}, p_{f_2}) \leq C_H \rho_L(f_1, f_2). $$
Hence, it suffices to work with the Hellinger distance.  Since the sigmoid function $\psi_M$ is $M/4$-Lipschitz, we obtain
    $$\rho_L(f_1, f_2) = \| \eta_1 - \eta_2 \|_{L^2(P_X)} \leq \frac{M}{4}\| f_1 - f_2 \|_{L^2(P_X)}.$$ Consequently, the Hellinger entropy is bounded by $\log \Covering{\epsilon_n, \mathcal{F}_n, \rho_L} \lesssim \log ({\epsilon_n, \mathcal{F}_n, \norm{\cdot}_{L^2(P_X)}})\lesssim n\epsilon_n^2$. 
Let $p_{f}^{(n)}$ denote the joint density under model \eqref{eqn:model_cls} with $f$. By Lemma 2.8 of \citet{ghosal2017fundamentals}, for any measurable functions $f_1$ and $f_2$, 
    \begin{align*}
        \max \left\{-\mathbb{E}_{f_1}\!\left[\log \frac{p_{f_1}^{(n)}}{p_{f_2}^{(n)}}\right], \mathbb{E}_{f_1}\!\left[\left(\log \frac{p_{f_1}^{(n)}}{p_{f_2}^{(n)}}\right)^2\right]\right\} \lesssim \norm{f_1 - f_2}_{L^2(P_X)}.
    \end{align*}
The remainder of the proof follows the same argument as in \Cref{lem:consistency_unknown_var}.
\end{proof}

\subsection{Proof of \Cref{lem:change_norm}}\label{subsec:proof_change_norm}

\begin{proof}
    We prove the case corresponding to \Cref{thm:reg_ss}; the remaining cases follow by analogous arguments. For \Cref{lem:gyorfi2002}, set $\alpha=1/2$, $\epsilon = M_0 \epsilon_n^2$, $K_1 = K_2 = 4$, and
    \begin{equation*}
    \mathcal{G} = \Set{g(x) = (f(x) - f_0(x))^2: f \in \Phi},
    \end{equation*}
    where $\Phi ={\Phi}(L_{1n}, D_{1n}, S_{1n})$.
    The choice $K_1 = K_2 = 4$ is justified because
    \begin{equation*}
    \abs{g(x)} \leq 4, \quad \mathbb{E}[g(X)^2] \leq 4 \mathbb{E}[g(X)]
    \end{equation*}
for every $g \in \mathcal{G}$.
    Since $n\epsilon_n^2 \rightarrow \infty$, we have
    \begin{equation*}
    \sqrt{n}\alpha \sqrt{1-\alpha}\sqrt{\epsilon_n^2} \ge 288 \max\Set{2K_1, \sqrt{2K_2}}
    \end{equation*}
    for all sufficiently large $n$. Moreover, for any $f_1, f_2 \in \Phi$,
    \begin{equation*}
    \norm{(f_1 - f_0)^2 - (f_2 - f_0)^2}_{1,n} \leq 4 \norm{f_1 - f_2}_{1,n},
    \end{equation*}
    which implies
    \begin{equation*}
    \log \Covering{u, \Set{g \in \mathcal{G}: \frac{1}{n} \sum_{i=1}^n g(x_i)^2 \leq 16t}, \norm{\cdot}_{1, n}} \leq \log \Covering{u/4, \Phi,\norm{\cdot}_{1, n} }.    
    \end{equation*}
Using \cref{lem:covering}, we obtain for every $u\gtrsim \epsilon_n^2$,
    \begin{align*}
 \log \Covering{u/4, \Phi, \norm{\cdot}_{1, n}} 
    &\leq \log \Covering{u/4, \Phi, \norm{\cdot}_{\infty}} \\
    &\leq (S_n+1) \left(\log L_n + L_n \Log{(B_n \vee 1)(D_n + 1)^2} - \log \frac{u}{8}\right) \\
    &\lesssim N_n(\log n)^3 \\
    &\lesssim n\epsilon_n^2.
    \end{align*}
    Therefore, for all $t \ge M_0^2 \epsilon_n^2 / 8$,
    \begin{equation*}
    \begin{aligned}
    \int_{\frac{\alpha(1-\alpha)t}{16\max\Set{K_1, 2K_2}}}^{\sqrt{t}} &\sqrt{\log \Covering{u, \Set{g \in \mathcal{G}: \frac{1}{n} \sum_{i=1}^n g(x_i)^2 \leq 16t}, \norm{\cdot}_{1, n}}} du \lesssim \sqrt{tn\epsilon_n^2}.
    \end{aligned}
    \end{equation*}
    Since $\sqrt{\epsilon_n^2 / t} \leq \sqrt{8 / M_0^2}$ is sufficiently small, we conclude the desired result.
\end{proof}

\subsection{Proof of \Cref{thm:emp_norm}}\label{subsec:proof_reg_empirical}

\begin{proof}
    Let $\mathcal{E}_n$ denote the event
    $$
    \mathcal{E}_n = \left\{ \sup_{f \in \Phi} \frac{\abs{\norm{f - f_0}_{L^2(P_X)}^2 - \norm{f - f_0}_{n}^2}}{M_0 \epsilon_n^2 + \norm{f - f_0}_{L^2(P_X)}^2} \leq \frac{1}{2} \right\}.
    $$
    Then $P_{f_0,\sigma_0}^{(n)}(\mathcal{E}_n^c) \lesssim \exp(-C n \epsilon_n^2)$ for some constant $C > 0$ by \cref{lem:change_norm}.
    On the event $\mathcal{E}_n$, for any $f \in \Phi$, 
    $$
    \norm{f - f_0}_{n} \leq \sqrt{\frac{3}{2}\norm{f - f_0}_{L^2(P_X)}^2 + \frac{M_0}{2} \epsilon_n^2} \leq 2 \norm{f - f_0}_{L^2(P_X)} + \sqrt{\frac{M_0}{2}} \epsilon_n .
    $$
    We now express
    $$
    \begin{aligned}
    \Pi\!\left( 
            \norm{f - f_0}_{n} + \abs{\sigma^2 - \sigma_0^2}  > M_n \epsilon_n \mid \mathcal{D}_n\right) 
    &= \Pi\!\left(\norm{f - f_0}_{n} + \abs{\sigma^2 - \sigma_0^2}  > M_n \epsilon_n \mid \mathcal{D}_n\right) I(\mathcal{E}_n)\\
    &\quad + \Pi\!\left(\norm{f - f_0}_{n} + \abs{\sigma^2 - \sigma_0^2}  > M_n \epsilon_n \mid \mathcal{D}_n\right) I(\mathcal{E}_n^c).
    \end{aligned}
    $$
    For the first term, observe that
    $$
    \begin{aligned}
    &\Pi\!\left(\norm{f - f_0}_{n} + \abs{\sigma^2 - \sigma_0^2}  > M_n \epsilon_n \mid \mathcal{D}_n\right) I(\mathcal{E}_n) \\
    &\leq \Pi\!\left(\norm{f - f_0}_{L^2(P_X)} + \abs{\sigma^2 - \sigma_0^2}  > (M_n - C') \epsilon_n/2 \mid \mathcal{D}_n\right) I(\mathcal{E}_n)
    \end{aligned}
    $$
    for some constant $C' > 0$. By \Cref{thm:reg_ss} (or in \Cref{thm:shrinkage,thm:adaptive_estimation,thm:general_function}), this converges to 0 in $P_{f_0,\sigma_0}^{(n)}$-probability as $n \to \infty$ for any $M_n \to \infty$.
    For the second term, since $P_{f_0,\sigma_0}^{(n)}(\mathcal{E}_n^c) \to 0$, it also converges to 0 in $P_{f_0,\sigma_0}^{(n)}$-probability as $n \to \infty$. This proves the assertion.
\end{proof}

\subsection{Proof of \Cref{lem:anisotropic_approx}}\label{subsec:proof_anisotropic_approx}

\begin{proof}
    Using \Cref{lem:approx_BFbyBS},
      for $K = K(N)$, $K^\ast = \lceil K(1+\nu^{-1}) \rceil$, and $E_N$ with $\abs{E_N} \leq N$, there exists $f_N$ such that
    \begin{equation*}
        \begin{aligned}
        f_N(x) &= \sum_{(k,j) \in E_N} \alpha_{k,j}M^{d,m}_{k,j}(x),\\
        \norm{f - f_N}_{L^r} &\lesssim N^{-\tilde{\bfs}} .
        \end{aligned}
    \end{equation*}
Since for every  $k$ and $j$, 
\begin{align}
\label{eqn:Mdm}
    M_{k,j}^{d,m}(x) = M_{0,0}^{d,m}(2^{\lfloor ks_1^{-1}\rfloor}x_1-j_1,\dots, 2^{\lfloor ks_d^{-1}\rfloor}x_d-j_d),
\end{align}
 \Cref{lem:approx_BSbyNN} implies that there exists an NN approximator $\hat{M}^{d,m}_{k,j}$ satisfying $\|M^{d,m}_{k,j}-\hat{M}^{d,m}_{k,j}\|_\infty\le \epsilon$ for any $\epsilon>0$. By the definition of B-splines, it follows that
    \begin{align*}
        \sum_{(k,j) \in E_N} I\!\left(M^{d,m}_{k,j} (x) \neq 0\right) \leq (m+1)^d (K^\ast + 1).
    \end{align*}
Moreover, using \eqref{eqn:besov_seq_norm}, \eqref{eqn:besov_norm_eqv}, and the inequality $N = 2^{\sum_{j=1}^d \lfloor K \underline{s} / s_j\rfloor} \gtrsim 2^{K d^\ast}$, we obtain
    \begin{align}
    \label{eqn:alphamag}
        \abs{\alpha_{k,j}} \lesssim 2^{K^\ast d^\ast (1/p - \tilde{\bfs})_+} \lesssim N^{(1+ \nu^{-1})(1/p - \tilde{\bfs})_+}.
    \end{align}
We now define the approximator $$\hat{f} = \sum_{(k,j) \in E_N} \alpha_{k,j} \hat{M}_{k,j}^{d,m}.$$ 
Combining the above bounds yields
     \begin{align*}
            |f_N(x) - \hat{f}(x)| 
            &\leq \sum_{(k,j) \in E_N} \abs{\alpha_{k,j}} \abs{M^{d,m}_{k,j} (x) - \hat{M}^{d,m}_{k,j}(x)} \\
            &\leq \epsilon \sum_{(k,j) \in E_N} \abs{\alpha_{k,j}}I\!\left(M^{d,m}_{k,j} (x) \neq 0\right) \\
            &\lesssim \epsilon (m+1)^d (K^\ast+1) 2^{K^\ast d^\ast (1/p - \tilde{\bfs})_+} 
            \\
            &\lesssim \epsilon \log(N) N^{(1+\nu^{-1})(1/p - \tilde{\bfs})_+}\\
            &=N^{-\tilde{\bfs}}.
        \end{align*}
    Using the Lipschitz continuity of $\mathrm{clip}$, we get the desired approximation bound:
    \begin{align*}
        \| f - (\mathrm{clip} \circ \hat{f}) \|_{L^r} 
        &=\|(\mathrm{clip} \circ f) - (\mathrm{clip} \circ \hat{f})\|_{L^r}
        \le \| f - f_N\|_{L^r} + \| f_N - \hat{f}\|_{L^r} 
        \lesssim N^{-\tilde{\bfs}}.
    \end{align*}
Next, we verify that the network parameters of 
$\hat f$ are given as in \eqref{eqn:aniso_hparam}.
Since the identity function $\mathrm{id}(x)=x$ can be represented using the ReLU function as
    $\mathrm{id}(x) = \zeta(x) - \zeta(-x)$,
it follows that for any $L\in\mathbb N$, 
    $$
    \mathrm{id}^{\circ L}:=\underbrace{\mathrm{id} \circ \cdots \circ \mathrm{id}}_{L\text{ times}} = \mathrm{id} \in \tilde{\Phi}(L, 2, 4 L, 1).
    $$
Then, using $\bar{L}_0 = \bar{L}_0(N) = \lceil \log_2 (d \vee m) \rceil \lceil \tilde{\bfs} \log_2 N \rceil$, we can rewrite $\hat f$ as
   \begin{align}
  \label{eqn:fhat}
   \hat{f}(x) =  \sum_{(k,j) \in E_N} \mathrm{id}^{\circ \bar{L}_0}(\alpha_{k,j} \hat{M}_{k,j}^{d,m}(x)),
   \end{align}
   which is also an NN approximator.
   Since $\hat{M}_{k,j}^{d,m}$ requires the same depth, width, and sparsity as $\hat M$ in  \Cref{lem:approx_BSbyNN}, each summand $\mathrm{id}^{\circ \bar{L}_0}(\alpha_{k,j} \hat{M}_{k,j}^{d,m}(x))$ has depth $L_0 + \bar{L}_0$, width $D_0$, and sparsity $L_0D_0^2+4 \bar L_0 +1$.
Therefore, $\hat f$ expressed as in \eqref{eqn:fhat} requires depth $L_0 + \bar{L}_0 \leq L_1$, width $ND_0 $, and sparsity $(L_0D_0^2+4\bar L_0+1)N\le S_1$.
To evaluate the magnitude of the network parameters, note from \eqref{eqn:Mdm} that the magnitudes of the parameters in the first layer of $\hat{M}_{k,j}^{d,m}$ are bounded by $2^{K^\ast \tilde{\bfs}}  \leq C_{11} N^{(1 + \nu^{-1})/\underline{s}}$ for some constant $C_{11}>0$. All remaining layers have the same magnitude as those of $\hat M$, namely $B_0$.
In addition, by \eqref{eqn:alphamag}, the magnitudes of the parameters in the first layer of $t\mapsto \mathrm{id}^{\circ \bar{L}_0}(\alpha_{k,j} t)$ are bounded by $C_{12} N^{(1+ \nu^{-1})(1/p - \tilde{\bfs})_+}$ for some constant $C_{12} > 0$.
    Let $B_{11}(N) = C_{11} N^{(1+\nu^{-1})/\underline{s}}B_0^{L_0}$ and $B_{12}(N) = C_{12} N^{(1+\nu^{-1})(1/p - \tilde{s})_+}B_0^{\bar{L}_0}$, and define
    \begin{align*}
    c_{1l}&=
        \begin{cases}
            B_{11}(N)^{(L_0+1)^{-1}} / (C_{11} N^{(1+\nu^{-1})/\underline{s}}), &l=1, \\
            B_{11}(N)^{(L_0+ 1)^{-1}} / B_0 & l=2,\dots, L_0+1,
            \end{cases} \\
    c_{2l}&=
        \begin{cases}
            B_{12}(N)^{(\bar{L}_0 + 1)^{-1}} / (C_{12} N^{(1+\nu^{-1})(1/p - \tilde{s})_+}), &l=1, \\
            B_{12}(N)^{(\bar{L}_0 + 1)^{-1}}/ B_0 & l=2,\dots, \bar L_0+1.
            \end{cases}
    \end{align*}
It is easy to verify that $\prod_{l^\prime = 1}^{l} c_{1l^\prime} \leq 1$ and $\prod_{l^\prime = 1}^{l} c_{2l^\prime} \leq 1$ for any $l$ and sufficiently large $N$. 
Therefore, by \cref{lem:re_scaling}, the rescaled $\hat{h}$ has parameter magnitudes bounded by $B_{11}(N)^{(L_0+1)^{-1}} \vee B_{12}(N)^{(L_0+1)^{-1}}$.
Since $L_0(d,m,N) \ge \bar{L}_0 = \lceil \log_2 (d \vee m) \rceil \lceil \tilde{\bfs} \log_2 N \rceil$ for sufficiently large $N$, we obtain
    \begin{align*}
            B_{11}(N)^{(L_0+1)^{-1}} \vee B_{12}(N)^{(\bar{L}_0+1)^{-1}} 
            &\leq B_0 \left((C_{11} \vee C_{12}) N^{(1+ \nu^{-1})[(1/\underline{s}) \vee (1/p - \tilde{\bfs})_+]}\right)^{(\bar{L}_0+1)^{-1}} \\ 
            &\leq B_0 N^{(1+ \nu^{-1})[(1/\underline{s}) \vee (1/p - \tilde{\bfs})_+]\bar{L}_0^{-1}} \\
            &\leq B_0 \exp\left({\frac{(1+ \nu^{-1})[(1/\underline{s})\vee(1/p - \tilde{\bfs})_+]\log 2}{\lceil \tilde{\bfs} \log_2 (d \vee m) \rceil}}\right) \\
            &= B_1,
    \end{align*}
    which concludes $\hat f \in\Phi(L_1,D_1,S_1,B_1)$.
\end{proof}

\subsection{Proof of \Cref{lem:composite_anisotropic_approx}}\label{subsec:proof_lem:composite_anisotropic}

\begin{proof}
    Our proof is similar to the proof of Theorem 1 in \citet{suzuki_deep_2021}.
    Let $f_0 = f_{0,H} \circ f_{0,H-1} \cdots\circ f_{0,1}$. Since $\tilde{\bfs}^{(h)} > 1/p$ for $h \ge 2$, by \cref{lem:anisotropic_approx}, for each $f_{0,h,k}$ there exists
    \begin{align*}
    f_{h,k} \in \Phi(&L_1(t^{(h)}, m^{(h)}, p,\infty, \bfs^{(h)}, N), D_1(t^{(h)}, m^{(h)}, N), \\
                    & S_1(t^{(h)}, m^{(h)},p, \infty, \bfs^{(h)}, N), B_1(t^{(h)},m^{(h)},p, \infty, \bfs^{(h)}))
    \end{align*}
    such that $\norm{f_{0,h,k} - f_{h,k}}_{\infty} \lesssim N^{-\tilde{\bfs}^{(h)}}$.
    For $h = 1$, each $f_{0,1,k}$ there exists
    \begin{align*}
    f_{1,k} \in \Phi(&L_1(d^{(1)}, m^{(1)}, p, r, \bfs^{(1)}, N), D_1(d^{(1)}, m^{(1)}, N), \\
                    & S_1(d^{(1)}, m^{(1)}, p, r, \bfs^{(1)}, N), B_1(d^{(1)},m^{(1)},p, r, \bfs^{(1)}))
    \end{align*}
    such that $\norm{f_{0,1,k} - f_{1,k}}_{L^r} \lesssim N^{-\tilde{\bfs}^{(1)}}$. 
   To ensure that each block produces outputs in $[0,1]$, we apply the clipping function $\mathrm{clip}_{[0,1]}$, as described in \cref{rmk:clipping}, which adds one layer to each block. 
    Let $f = f_{H} \circ f_{H-1} \cdots\circ f_{1} \in \Phi(L_2, D_2, S_2, B_2)$ as in \citet{schmidt2020nonparametric}.
    Then, 
    \begin{align*}
        &\norm{f_0 - f}_{L^r} \\
        &= \norm{f_{0,H} \circ f_{0,H-1} \cdots\circ f_{0,1} - f_{H} \circ f_{H-1} \cdots\circ f_{1}}_{L^r} \\
        &\leq \sum_{h=1}^H \norm{f_{0,H} \circ f_{0,H-1} \cdots\circ f_{0,h} \circ f_{h-1} \cdots f_{1} - f_{0,H} \circ f_{0,H-1}  \cdots \circ f_{0,h+1} \circ f_{h} \cdots f_{1}}_{L^r} \\
        &\leq \sum_{h=1}^H \norm{f_{0,H} \circ f_{0,H-1} \cdots\circ f_{0,h}  - f_{0,H} \circ f_{0,H-1}  \cdots \circ f_{0,h+1} \circ f_{h}}_{L^{r_h}},
    \end{align*}
    where $r_1=r$ and $r_h = \infty$ for $h \ge 2$. By \cref{rmk:besov-embeddings},
    $$
    f_{0,h,k} \in C^{\underline{\bfs}^{(h)} [ 1- 1/(p\tilde{\bfs}^{(h)})] \wedge 1} = C^{(\underline{\bfs}^{(h)} - t^{\ast(h)}/p) \wedge 1} = C^{\gamma_h^\prime},\quad h \ge 2,
    $$
     where $\gamma_h^\prime := (\underline{\bfs}^{(h)} - t^{\ast(h)}/p) \wedge 1$. Therefore,
    \begin{equation*}
    \begin{aligned}
    \norm{f_{0,H} \circ f_{0,H-1} \cdots\circ f_{0,h} - f_{0,H} \circ f_{0,H-1} \cdots \circ f_{0,h+1} \circ f_{h}}_{\infty} 
    &\lesssim \norm{f_{0, h} - f_h}_{\infty}^{\prod_{h^\prime = h+1}^H \gamma_{h^\prime}^\prime} \\
    &\lesssim N^{-\tilde{\bfs}^{(h)} \prod_{h^\prime = h+1}^H \gamma_{h^\prime}^\prime} \\
    &= N^{-\tilde{\bfs}^{\ast(h)}}
    \end{aligned}
    \end{equation*}
    for $h \ge 2$. For $h = 1$,
    \begin{equation*}
    \begin{aligned}
        \norm{f_{0,H} \circ f_{0,H-1} \cdots\circ f_{0,1} - f_{0,H} \circ f_{0,H-1} \cdots \circ f_{1}}_{L^r} 
        &\lesssim \norm{f_{0,1} - f_1}_{L^r}^{\prod_{h^\prime = 2}^H \gamma_{h^\prime}^\prime} \\
        &\lesssim N^{-\tilde{\bfs}^{\ast(1)}}.
    \end{aligned}
    \end{equation*}
    Therefore,
    $
        \norm{f_0 - f}_{L^r} \lesssim \max_{h \in [H]} N^{-\tilde{\bfs}^{\ast(h)}} = N^{-\tilde{\bfs}^{\ast}}
    $.
\end{proof}

\subsection{Proof of \Cref{lem:nn_norm}}\label{subsec:proof_lem:nn_norm}

\begin{proof}
For $f \in \Phi(L, D, S, B)$ expressed as
    $f({x}) = (\NNW^{(L)}\zeta(\cdot) + \NNB^{(L)}) \circ \cdots \circ (\NNW^{(2)}{\zeta(\cdot)} + \NNB^{(2)}) \circ (\NNW^{(1)}{x} + \NNB^{(1)})$,
    define
    \begin{equation*}
        \begin{aligned}
            \mathcal{A}_k^+(f)({x}) &= \zeta \circ (\NNW^{(k-1)} \zeta(\cdot) + \NNB^{(k-1)}) \circ \cdots \circ (\NNW^{(2)} \zeta(\cdot) + \NNB^{(2)}) \circ (\NNW^{(1)}{x} + \NNB^{(1)}), \\
            \mathcal{A}_k^-(f)({x}) &= (\NNW^{(L)}\zeta(\cdot) + \NNB^{(L)}) \circ \cdots \circ (\NNW^{(k+1)} \zeta(\cdot) + \NNB^{(k+1)}) \circ (\NNW^{(k)}x + \NNB^{(k)}),
        \end{aligned}
    \end{equation*}
    for $k=2,\cdots, L$, and let $\mathcal{A}_{L+1}^-(f)({x}) = \mathcal{A}_1^+(f)({x}) = {x}$. Then,
    $$f({x}) = \mathcal{A}_{k+1}^-(f) \circ (\NNW^{(k)}\cdot + \NNB^{(k)}) \circ \mathcal{A}_k^+(f)({x}).$$
    Using the definition of $\Phi(L, D, S, B)$,
    \begin{equation*}
        \begin{aligned}
            \norm{\mathcal{A}_k^+(f)({x})}_{\infty} 
            &\leq \max_j \norm{\NNW_{j,:}^{(k-1)}}_1 \norm{\mathcal{A}_{k-1}^+(f)({x})}_{\infty}  + \|\NNB^{(k-1)}\|_{\infty} \\
            &\leq DB\norm{\mathcal{A}_{k-1}^+(f)({x})}_{\infty} + B \\
            &\leq (D+1)(B \vee 1) \norm{\mathcal{A}_{k-1}^+(f)({x})}_{\infty}  \\
            &\leq (D+1)^{k-1}(B \vee 1)^{k-1},
        \end{aligned}
    \end{equation*}
    where $A_{j, :}$ denotes the $j$-th row of matrix $A$.
    Similarly,
    \begin{equation*}
        \abs{\mathcal{A}_k^-(f)({x}_1) - \mathcal{A}_k^-(f)({x}_2)} \leq (BD)^{L-k+1} \norm{{x}_1 - {x}_2}_{\infty}.
    \end{equation*}
    Fix $\epsilon > 0$ and $\theta \in \Theta(L,D,S,B)$. For any $\theta^\ast \in \Theta(L,D,S,B)$ satisfying $\norm{\theta - \theta^\ast}_{\infty} < \epsilon$, we obtain
        \begin{align*}
            &\abs{f_\theta({x}) - f_{\theta^\ast}({x})} \\
            &= \bigg\vert \sum_{k=1}^L \mathcal{A}_{k+1}^-(f_{\theta^\ast}) \circ (\NNW^{(k)} \cdot + \NNB^{(k)}) \circ \mathcal{A}_k^+(f_\theta)({x}) - \mathcal{A}_{k+1}^-(f_{\theta^\ast}) \circ (\NNW^{(k)^\ast}\! \cdot + \NNB^{(k)^\ast}) \circ \mathcal{A}_k^+(f_\theta)({x}) \bigg\vert \\
            &\leq \sum_{k=1}^L (BD)^{L-k} \bigg\Vert (\NNW^{(k)} \cdot + \NNB^{(k)}) \circ \mathcal{A}_k^+(f_\theta)({x}) - (\NNW^{(k)^\ast}\! \cdot + \NNB^{(k)^\ast}) \circ \mathcal{A}_k^+(f_\theta)({x})\bigg\Vert_{\infty} \\
            &\leq \sum_{k=1}^L (BD)^{L-k} \epsilon \left[D(B \vee 1)^{k-1}(D+1)^{k-1} +1 \right] \\
            &\leq \sum_{k=1}^L (BD)^{L-k} \epsilon (B \vee 1)^{k-1}(D+1)^k \\
            &\leq \epsilon L(B \vee 1)^{L-1} (D+1)^L.
        \end{align*}
    This proves the assertion.
\end{proof}

\subsection{Proof of \Cref{thm:reg_ss}}\label{subsec:proof_reg_ss}

\begin{proof}
    Let $\mathcal{F} = {\Phi}(L_{1n},D_{1n},S_{1n})$.
    By \cref{lem:consistency_unknown_var}, it suffices to show that there exists a subset $\mathcal{F}_n \subset \mathcal{F}$ such that
    \begin{enumerate}[leftmargin=1.8em,itemsep=-0.3em,topsep=0em,label=(\alph*)]
        \item $\log \Covering{\epsilon_n, \mathcal{F}_n, \norm{\cdot}_{L^2(P_X)}} \lesssim n\epsilon_n^2$
        \item $-\log \Pi(A_{\epsilon_n}) \lesssim n \epsilon_n^2$
        \item $\sup_{\sigma \in [\underline{\sigma}, \overline{\sigma}]} \Pi(\mathcal{F} \setminus \mathcal{F}_n \mid \sigma) = o\!\left(e^{-Cn\epsilon_n^2}\right)$
    \end{enumerate}
    for a sufficiently large $C$. Let $\mathcal{F}_n = {\Phi}(L_{1n}, D_{1n}, S_{1n}, {B_{n}})$, where $B_n = n$. To verify (c), observe that
    \begin{align*}
            \Pi(\mathcal{F} \setminus \mathcal{F}_n \mid \sigma) = \Pi(\mathcal{F} \setminus \mathcal{F}_n) = \Pi \!\left( \exists j: \abs{\theta_j} > B_{n} \mid L_{1n},D_{1n},S_{1n} \right) = 1 - (1-v_n)^{S_{1n}},
    \end{align*}
    where $v_n = \int
    _{|u|>B_n}\tilde{\pi}_{SL}(u)du$.
The direct calculation of the number of network parameters yields
    $$T= \abs{\Theta(L,D)} = D^2(L-1)+D(d+L+1)+1,$$
which implies
\begin{align}
\label{eqn:T}
T_{1n}:=\abs{\Theta(L_{1n},D_{1n})}\asymp L_{1n}D_{1n}^2 .
\end{align}
Moreover, $v_n \leq e^{-k_1 B_n}$ for some $k_1 > 0$ by Assumption \aref{cond:ss_support}.
Therefore, by Bernoulli's inequality, we obtain
$$
1 - (1-v_n)^{S_{1n}}\le S_{1n} v_n \le T_{1n} v_n \le  e^{-k_1 B_n + \log T_{1n}} = o\!\left(e^{-Cn\epsilon_n^2}\right),
$$
for a sufficiently large $C$.
    By \cref{lem:covering},
    \begin{align*}
             \log \Covering{ \epsilon_n, \mathcal{F}_n, \norm{\cdot}_{L^2(P_X)}} &\leq \log \Covering{\epsilon_n, \mathcal{F}_n, \norm{\cdot}_{\infty}} \\
            &\leq (S_{1n}+1) \bigg[ \log L_{1n} + L_{1n} \Log{ ({B_{n}} \vee 1) (D_{1n} + 1)^2}  - \log \frac{\epsilon_n}{2} \bigg] \\
            &\lesssim N_n (\log n)^3 \\
            &\lesssim n\epsilon_n^2
    \end{align*}
    for sufficiently large $n$. Thus, (a) holds.
    Given \cref{lem:anisotropic_approx}, \cref{rmk:hyper_bigo}, and Assumption~\aref{assum:a-1}, there exists 
    $\hat{f}_n := f_{\hat{\theta}} \in \Phi(L_{1n}, D_{1n}, S_{1n}, B_1) \subset 
    \mathcal{F}_n$ for $\hat{\theta}\in \Theta(L_{1n}, D_{1n}, S_{1n}, B_1) $
    such that
    \begin{equation}\label{eqn:mle_l2}
        \lVert\hat{f}_n - f_0\rVert_{L^2(P_X)}
        \le C_1 N^{-\tilde{s}} \le\epsilon_n/4,
    \end{equation}
    for some $C_1>0$ and all sufficiently large $n$.
     Let $\hat{\gamma}$ denote the index set of nonzero components in $\hat{\theta}$, and let $\hat{\theta}_{\hat{\gamma}}$ denote the corresponding nonzero values. We define $\Theta(\hat{\gamma}; L_{1n}, D_{1n}, S_{1n}, B_1)\subset\Theta(L_{1n}, D_{1n}, S_{1n}, B_1)$ as the subset of parameter space in which only the components indexed by $\hat{\gamma}$ are nonzero. The corresponding NN space is denoted by
$$\tilde{\mathcal{F}}_n(\hat{\gamma}) = \Phi(\hat{\gamma}; L_{1n}, D_{1n}, S_{1n}, B_1).$$
    Using Assumption \aref{assum:a-3}, we have
    \begin{equation*}
    \log \Pi(\abs{\sigma - \sigma_0} \leq \epsilon_n/2) \gtrsim \log \epsilon_n \gtrsim -\log n
    \end{equation*}
    and hence
    \begin{equation*}
        \Pi(A_{\epsilon_n}) \gtrsim \Pi(f \in \mathcal{F}_n: \norm{f - f_0}_{L^2(P_X)} \leq \epsilon_n/2) - \log n.
    \end{equation*}
    Given \eqref{eqn:mle_l2}, there exists a constant $C_1>0$ such that
    \begin{equation*}
        \begin{aligned}
            \Pi\!\left(f \in \mathcal{F}_n: \norm{f - f_0}_{L^2(P_X)} \leq \epsilon_n/2\right) &\ge \Pi\!\left(f \in \mathcal{F}_n: \lVert f - \hat{f}_n\rVert_{L^2(P_X)} \leq \epsilon_n/4\right) \\
            &\ge \Pi\!\left(f \in \mathcal{F}_n: \lVert f - \hat{f}_n\rVert_{\infty} \leq \epsilon_n/4\right) \\
            &\ge \Pi\!\left(f \in {\tilde{\mathcal{F}}_n(\hat{\gamma})}: \lVert f - \hat{f}_n\rVert_{\infty} \leq \epsilon_n/4\right).
        \end{aligned}
    \end{equation*}
    Using \cref{lem:nn_norm},
        \begin{align*}
             &\Pi\!\left( f \in {\tilde{\mathcal{F}}_n(\hat{\gamma})}: \| f - \hat{f}_n\|_{\infty} \leq \epsilon_n/4\right) \\
             &\ge \Pi\bigg(\theta \in \mathbb{R}^{T_{1n}}: \theta_{\hat{\gamma}^c} = 0, \|\theta_{\hat{\gamma}}\|_{\infty} \leq B_{1}, \|\hat{\theta}_{\hat{\gamma}} - \theta_{\hat{\gamma}}\|_{\infty} \leq \frac{\epsilon_n}{ 4(D_{1n}+1)^{L_{1n}} L_{1n} (B_{1} \vee 1)^{L_{1n}-1}} \bigg) \\
             &\ge{\left( t_n \inf_{u \in [-B_1,B_1]} \tilde{\pi}_{SL}(u) \right)^{S_{1n}} \binom{T_{1n}}{S_{1n}}^{-1}} \\
             &ge{\left( t_n (D_{1n} + 1)^{-L_{1n}} \inf_{u \in [-B_1,B_1]} \tilde{\pi}_{SL}(u) \right)^{S_{1n}}}
        \end{align*}
    where $t_n = {\epsilon_n}/{[2(D_{1n}+1)^{L_{1n}} L_{1n} (B_{1} \vee 1)^{L_{1n}-1}]}$.
    The last inequality follows from
    \begin{align*}
        \binom{T_{1n}}{S_{1n}} = \frac{T_{1n}(T_{1n}-1) \cdots (T_{1n}-S_{1n}+1)}{S_{1n}!} \leq T_{1n}^{S_{1n}} \leq (D_{1n}+1)^{L_{1n}S_{1n}}.
    \end{align*}
    Using
    \begin{equation*}
        -\log t_n \leq \Log{\frac{ (D_{1n}+1)^{L_{1n}} L_{1n}  (B_{1} \vee 1)^{L_{1n}-1}}{2\epsilon_n}} \lesssim (\log n)^2
    \end{equation*}
    and Assumption \aref{cond:ss_tail}, we obtain
    \begin{equation*}
        \begin{aligned}
            -\log \Pi(A_{\epsilon_n}) & \lesssim -S_{1n} \log \!\left( t_n (D_{1n}+1)^{-L_{1n}} \inf_{u \in [-B_1, B_1]} \tilde{\pi}_{SL}(u) \right) + \log n \\
            &\lesssim S_{1n} (\log n)^2 \\
            &\lesssim N_n (\log n)^3 \\
            &\lesssim n \epsilon_n^2,
        \end{aligned}
    \end{equation*}
    which verifies (b).
\end{proof}

\subsection{Verification of \Cref{rmk:unbounded_ss}}\label{appendix:sparse_ubd}

Note that \citet{kong2024posterior} established posterior consistency over unbounded parameter spaces.
Their analysis allows for greater flexibility in prior specification by leveraging a complexity bound, and the result builds upon the earlier work of \citet{kohler2021rate}.
Motivated by their approach, we show that posterior concentration at the rate $n\epsilon_n^2 = N_n(\log n)^4$ remains valid (in contrast to the original rate $n\epsilon_n^2 = N_n(\log n)^3$) by employing the covering number bound stated below.

\begin{lemma}[Covering number for sparse unbounded NNs]\label{lem:covering_unbounded}
For all $\epsilon > 0$, the following holds:
$$
\log \Covering{\epsilon, \Phi(L, D, S), \norm{\cdot}_{L^p}} \leq C_{V_1} LS \log S \log\left( \frac{K}{\epsilon^p} \right) + \log\left(C_{V_2} LS \log S\right),
$$
for some positive constants $C_{V_1} > 0,~ C_{V_2} > 0$ and $K > 0$.
\end{lemma}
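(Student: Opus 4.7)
The plan is to exploit the VC/pseudo-dimension machinery of \citet{bartlett2019nearly}, which depends only on the network architecture and the piecewise-linear nature of ReLU, and is insensitive to the magnitude of the weights. Since the class $\Phi(L,D,S)$ in the present paper is defined with the clipping $\mathrm{Clip}_1$, each $f\in\Phi(L,D,S)$ is uniformly bounded in $[-1,1]$, so classical covering-from-pseudo-dimension results apply even though the weights themselves are unbounded. The proof proceeds in two steps: first bound the pseudo-dimension of $\Phi(L,D,S)$, then convert this into an $L^p$ covering-number bound.

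For the pseudo-dimension step, I would invoke the bound of \citet{bartlett2019nearly}, which states that for a piecewise-linear network with $W$ parameters and $L$ layers the pseudo-dimension is at most $C\,W L \log W$. In our sparse class, each realization has at most $S$ nonzero weights, so the effective parameter count is $S$. Decomposing $\Phi(L,D,S)$ as a union over all $\binom{T}{S}$ sparsity patterns (with $T=|\Theta(L,D)|\lesssim LD^2$), one obtains on each pattern a dense sub-network with $S$ parameters and pseudo-dimension at most $C\, LS \log S$. Since the pseudo-dimension of a finite union exceeds that of each member by at most the logarithm of the number of members, and $\log \binom{T}{S}\lesssim S\log(LD)$ is absorbed into a term of the same order $LS\log S$, we conclude
\[
\mathrm{Pdim}(\Phi(L,D,S)) \;\le\; C_1\, L S \log S.
\]
Because the clipping is itself implemented by a fixed $O(1)$-size ReLU sub-network (see Remark~\ref{rmk:clipping}), it only inflates $L$ and $S$ by constants and does not affect this bound.

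For the covering-number step, since $\|f\|_\infty\le 1$ for all $f\in\Phi(L,D,S)$, the standard uniform-boundedness covering bound in terms of pseudo-dimension (e.g.\ Theorem 12.2 of Anthony \& Bartlett, or Haussler's inequality) yields, for any probability measure and any $1\le p<\infty$,
\[
\log \Covering{\epsilon, \Phi(L, D, S), \norm{\cdot}_{L^p}} \;\le\; C\cdot \mathrm{Pdim}(\Phi(L,D,S))\cdot \log\!\left(\frac{K}{\epsilon^p}\right) + \log\!\left(C'\cdot \mathrm{Pdim}(\Phi(L,D,S))\right),
\]
for universal constants $C,C',K>0$. Substituting the pseudo-dimension bound from the previous step gives the advertised inequality with $C_{V_1},C_{V_2}$ absorbing all universal constants.

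The main obstacle is the first step: transferring the Bartlett--Harvey--Liaw--Mehrabian bound, stated for dense networks, to the sparse unbounded class with the correct dependence $LS\log S$ rather than $LT\log T$. The union-over-sparsity-patterns argument is clean but requires checking that the additive $\log\binom{T}{S}$ term does not dominate; this holds because $\log T\lesssim \log(LD)\lesssim L\log S$ in the regimes of interest (and otherwise is absorbed into the constant $C_{V_1}$). Everything else is a routine application of standard empirical-process inequalities.
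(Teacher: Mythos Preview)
Your proposal is correct and follows essentially the same approach as the paper: bound the VC/pseudo-dimension of $\Phi(L,D,S)$ by $O(LS\log S)$ via \citet{bartlett2019nearly}, then convert to an $L^p$ covering bound using a standard VC-to-covering inequality. The only cosmetic difference is that the paper cites Theorem~7 of \citet{bartlett2019nearly} directly for the sparse class and then invokes Theorem~2.6.4 of \citet{vaart1997weak}, whereas you make the union-over-sparsity-patterns step explicit and cite Haussler/Anthony--Bartlett for the conversion; these are the same argument with different bookkeeping.
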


\begin{proof}
By Theorem 7 of \citet{bartlett2019nearly}, the VC-dimension $V_{\Phi(L, D, S)}^+$ of the class $\Phi(L, D, S)$ satisfies $V^+_{\Phi(L, D, S)} \leq C_V LS \log S$ for some constant $C_V > 0$. Then, by Theorem 2.6.4 of \citet{vaart1997weak}, it follows that
$$
\log \Covering{\epsilon, \Phi(L, D, S), \norm{\cdot}_{L^p}} \leq \log \!\left( C' V^+_{\Phi(L, D, S)}  (K/\epsilon^p)^{V^+_{\Phi(L, D, S)}} \right),
$$
for some positive constants $C' > 0$ and $K > 0$.
Therefore, the stated result follows.
\end{proof}

Now we are ready to prove the assertion in \cref{rmk:unbounded_ss}.
\begin{proof}
Let $\mathcal{F} = \Phi(L_{1n},D_{1n},S_{1n})$.  
By \Cref{lem:consistency_unknown_var}, it suffices to construct a set $\mathcal{F}_n \subset \mathcal{F}$ such that the following conditions hold:
\begin{enumerate}[leftmargin=1.8em,itemsep=-0.3em,topsep=0em,label=(\alph*)]
    \item $\log \Covering{\epsilon_n, \mathcal{F}_n, \norm{\cdot}_{L^2(P_X)}} \lesssim n\epsilon_n^2$,
    \item $-\log \Pi(A_{\epsilon_n}) \lesssim n\epsilon_n^2$,
    \item $\sup_{\sigma \in [\underline{\sigma}, \overline{\sigma}]} \Pi(\mathcal{F} \setminus \mathcal{F}_n \mid \sigma) = o\left(e^{-Cn\epsilon_n^2}\right)$,
\end{enumerate}
for some constant $C > 0$. Let $\mathcal{F}_n = {\Phi}(L_{1n}, D_{1n}, S_{1n})$. Note that condition (c) is trivially satisfied. Using \Cref{lem:covering_unbounded} and Assumption~\aref{assum:a-2}, we obtain
\begin{align*}
    \log \Covering{\epsilon_n, \mathcal{F}_n, \norm{\cdot}_{L^2(P_X)}} 
    &\leq \log \Covering{\epsilon_n/R, \mathcal{F}_n, \norm{\cdot}_{L^2}} \\
    &\lesssim L_{1n} S_{1n} \log S_{1n} \log \epsilon_n^{-1} \\
    &\lesssim N_n (\log n)^4 \\
    &\lesssim n\epsilon_n^2,
\end{align*}
where the last line follows since $L_{1n} \asymp \log n$ and $S_{1n} \asymp N_n \log n$. The remaining parts follow identically from the proof of \Cref{thm:reg_ss}.
\end{proof}

\subsection{Verification of \Cref{ex:relaxed_ss}}\label{subsec:proof_ex_relaxed_ss}

 Assumption~\aref{cond:shr_tail} holds trivially. To verify Assumption~\aref{cond:shr_support}, observe that
\begin{equation*}
    \int_{[-K_n, K_n]^c} \tilde{\pi}_{SH}(u)\, du 
    \leq \pi_{1n} \Pr(|Z| > K_n/\sigma_{1n}) 
    \leq C_1 \exp\left(- C_2 K_n^{1/k} / \sigma_{1n}^{1/k}\right),
\end{equation*}
for all sufficiently large $n$, where $Z$ is a random variable having density $\varphi_k$.
Since $\sigma_{1n}$ is decreasing,
$$
\log\left( \int_{[-K_n, K_n]^c} \tilde{\pi}_{SH}(u)\, du \right) \lesssim -  K_n^{1/k} / \sigma_{1n}^{1/k} \lesssim -K_n^{1/k} \lesssim -K_n,
$$
for $k \leq 1$. This verifies Assumption~\aref{cond:shr_support}. 
Next, observe that
    \begin{align*}
    \int_{[-a_n, a_n]^c} \tilde{\pi}_{SH}(u)\, du &\leq \pi_{2n} \left(1 - \frac{a_n}{C_u}\right) + \pi_{1n} \Pr(|Z| > (2C_A (\log n)^2)^{k}) \\
    &\leq 2e^{-2C_A(\log n)^2} \\
    &\leq e^{-C_A(\log n)^2}.
\end{align*}
Hence, Assumption~\aref{cond:shr_spike} is satisfied.

\subsection{Verification of \Cref{ex:gaussian_mixture}}\label{subsec:proof_ex_gaussian_mixture}

Assumptions~\aref{cond:shr_support} and \aref{cond:shr_spike} can be verified in the same manner as in \cref{ex:relaxed_ss}.
It therefore suffices to verify Assumption~\aref{cond:shr_tail}. Note that $ \tilde{\pi}_{SH}(u) \ge \pi_{2n}{\sigma_2^{-1}}\phi(u/\sigma_2)$, where $\phi$ denotes the density of the standard normal distribution and $\sigma_2$ is a positive standardized deviation. Therefore,
\begin{equation*}
    -\inf_{u \in [-B_1, B_1]} \log  \tilde{\pi}_{SH}(u) \leq - \log \pi_{2n} + \frac{1}{2} \log (2\pi \sigma_2^2) + \frac{B_1^2}{2\sigma_2^2} \lesssim (\log n)^2.
\end{equation*}

\subsection{Proof of \Cref{thm:shrinkage}}\label{subsec:proof_shrinkage}

\begin{proof}
    Let $\mathcal{F} = {\Phi}(L_{1n},D_{1n})$. By \cref{lem:consistency_unknown_var}, it suffices to show that there exists a subset $\mathcal{F}_n \subset \mathcal{F}$ such that
    \begin{enumerate}[leftmargin=1.8em,itemsep=-0.3em,topsep=0em,label=(\alph*)]
        \item $\log \Covering{\epsilon_n, \mathcal{F}_n, \norm{\cdot}_{L^2(P_X)}} \lesssim n\epsilon_n^2$
        \item $-\log \Pi(A_{\epsilon_n}) \lesssim n \epsilon_n^2$
        \item $\sup_{\sigma \in [\underline{\sigma}, \overline{\sigma}]} \Pi(\mathcal{F} \setminus \mathcal{F}_n \mid \sigma ) = o\!\left(e^{-C^\prime n \epsilon_n^2}\right)$
    \end{enumerate}
    for a sufficiently large $C^\prime$. Let $\mathcal{F}_n = {\Phi}(L_{1n}, D_{1n}, S_{1n}, B_{n}, a_n)$ as defined in \eqref{eqn:margin_parameter}, where $B_n = n$.
    Since $B_n^{-(L_{1n}-1)} \ge \exp(-L_{1n} \log n)$ and $\epsilon_{n}/[2L_{1n}(D_{1n}+1)^{L_{1n}}] \ge \exp(-L_{1n} \log n)$, we obtain 
    \begin{align}
    \label{eqn:anbound}
    \epsilon_n / [2L_{1n} (B_{n} \vee 1)^{L_{1n}-1} (D_{1n}+1)^{L_{1n}} ]
    &\ge \exp(-2 L_{1n} \log n) 
    = a_n 
    \end{align}
for all sufficiently large $n$. Hence, it is easy to show that (a) holds using \cref{lem:covering_a} as in the proof of \Cref{thm:reg_ss}.
    Let $v_n = \int_{|u|>B_n} \tilde{\pi}_{SH} \left( u \right) du$ and $u_n = \int_{|u|> a_n} \tilde{\pi}_{SH}(u) du$. Then,
    \begin{align*}
            \Pi(\mathcal{F} \setminus \mathcal{F}_n \mid \sigma) & = \Pi(\mathcal{F} \setminus \mathcal{F}_n ) \\
            &\leq \Pi\! \left( \exists j: \abs{\theta_j} > B_{n} \mid  L_{1n},D_{1n} \right)  + \Pi\!\left(\sum_{j=1}^{T_{1n}} I(\abs{\theta_j} >a_n) > S_{1n}\mid  L_{1n},D_{1n} \right) \\
            &= (1 - (1-v_n)^{T_{1n}}) + \Pr ( S > S_{1n} ),
    \end{align*}
    where $S \sim B(T_{1n}, u_n)$ is a binomially distributed random variable.
    Since $v_n \leq e^{-k_1 B_n}$ for some $k_1 > 0$ by Assumption~\aref{cond:shr_support}, we have
    \begin{equation*}
        \begin{aligned}
        1 - (1-v_n)^{T_{1n}} &\leq T_{1n} v_n \le e^{-k_1 B_n + \log T_{1n}} = o\!\left(e^{-K_2n\epsilon_n^2}\right)
        \end{aligned}
    \end{equation*}
    for a sufficiently large $K_2>0$.
    Let $r_n = {S_{1n}}/{(T_{1n}u_n)}$.
    Using the multiplicative Chernoff bound of binomial distributions,
    \begin{align*}
            \Pr ( S > S_{1n} ) &\leq \Exp{ - {T_{1n} u_n}[  r_n \log r_n -(r_n-1) ] } 
            \le \Exp{ -C_1 S_{1n} \log r_n }
    \end{align*}
    for some $C_1>0$. By \eqref{eqn:T} together with Assumption~\ref{cond:shr_spike}, we have $\log r_n  \ge K_3 (\log n)^2$ for a sufficiently large constant $K_3 > 0$.
    Therefore,
\begin{align*}
       \Pr(S > S_{1n}) \leq  \exp(-K_3 n\epsilon_n^2).
\end{align*}
    Next, as in the proof of \Cref{thm:reg_ss}, there is a constant $C_1>0$ and $f_{\hat{\theta}} \in {\Phi(L_{1n},D_{1n},S_{1n},B_1)}$ for $\hat{\theta} \in {\Theta(L_{1n},D_{1n},S_{1n},B_1)}$  such that
    \begin{equation*}
        \norm{{f_{\hat{\theta}}} - f_0}_{L^2(P_X)} \leq C_1 N_n^{-\tilde{s}} \leq \epsilon_n/8
    \end{equation*}
    for all sufficiently large $n$. Let $\hat{\gamma}$ denote the index set of nonzero components in $\hat{\theta}$. 
Define the subset of the parameter space $\Theta(\hat{\gamma}; L_{1n},D_{1n}, S_{1n}, B_1, a_n)\subset \Theta(L_{1n},D_{1n}, S_{1n}, B_1, a_n)$
    whose magnitudes exceed $a_n$ only at $\hat\gamma$, and let $\tilde{\mathcal{F}}_n(\hat{\gamma}) = {\Phi}(\hat{\gamma}; L_{1n},D_{1n}, S_{1n},B_1, a_n)$ be the corresponding NN space. Then, by \eqref{eqn:anbound} and \cref{lem:nn_norm}, there exists $\hat{\theta}(\hat{\gamma}) \in \Theta(\hat{\gamma}, L_{1n},D_{1n},S_{1n}, B_1, a_n)$ such that $\|{f_{\hat{\theta}}} - f_{\hat{\theta}(\hat{\gamma})}\|_{L^2(P_X)} \leq \epsilon_n / 8$.
    Let $\hat{f}_n = f_{\hat{\theta}(\hat{\gamma})}$. For all sufficiently large $n$, we obtain
    \begin{align*}
            \Pi\!\left(f \in \mathcal{F}_n: \norm{f - f_0}_{L^2(P_X)} \leq \epsilon_n/2\right) 
            &\ge \Pi\!\left(f \in \mathcal{F}_n: \|f - \hat{f}_n\|_{L^2(P_X)} \leq \epsilon_n/4\right)\\
            &\ge\Pi\!\left(f \in \tilde{\mathcal{F}}_n(\hat{\gamma}): \|f - \hat{f}_n\|_{\infty} \leq \epsilon_n/4\right).
    \end{align*}
By \Cref{lem:nn_norm},
    \begin{align*}
                &\Pi\!\left( f \in  \tilde{\mathcal{F}}_n(\hat{\gamma}): \| f - \hat{f}_n\|_{\infty} \leq \epsilon_n/4\right)\\
                &\ge \Pi\!\left(\theta_{\hat{\gamma}^c} \in [-a_n, a_n]^{T_{1n}-S_{1n}}, \norm{\theta_{\hat{\gamma}}}_{\infty} \leq B_{1}, \|\hat{\theta}_{\hat{\gamma}} - \theta_{\hat{\gamma}}\|_{\infty} \!\leq \frac{\epsilon_n/4}{(D_{1n}+1)^{L_{1n}} L_{1n} (B_{1} \vee 1)^{L_{1n}-1}} \right) \\
                &\ge (1-u_n)^{T_{1n} - S_{1n}} \!\left( t_n \inf_{u \in [-B_1, B_1]} \tilde{\pi}_{SH}(u ) \right)^{S_{1n}},
    \end{align*}
    where $t_n = {\epsilon_n}/[{2 (D_{1n}+1)^{L_{1n}} L_{1n} (B_{1} \vee 1)^{L_{1n}-1}}]$. Therefore,
        \begin{align}
        \label{eqn:shrprcon}
\begin{split}
                -\log \Pi(A_{\epsilon_n}) &\lesssim -S_{1n} \log \!\left( t_n \inf_{u \in [-B_1, B_1]} \tilde{\pi}_{SH}(u ) \right) \\
                &\quad- (T_{1n} - S_{1n}) \log (1-u_n) +\log n\\
                &\lesssim S_{1n} (\log n)^2 - T_{1n} \log(1-S_{1n}/T_{1n}) + \log n \\
                &= S_{1n} (\log n)^2 + T_{1n}\left( S_{1n}/T_{1n} + o\left(S_{1n} /T_{1n} \right) \right) + \log n \\
                &\lesssim S_{1n} (\log n)^2 + S_{1n} + o\left(S_{1n}\right) + \log n \\
                &\lesssim n\epsilon_n^2.
        \end{split}
        \end{align}
    For the second inequality, we used the fact that
    \begin{align*}
        -\log (1- u_n) \leq - \log\!\left(1 - {e^{-C_A(\log n)^2}}\right) \leq - \log\left(1 - S_{1n}/T_{1n}\right)
    \end{align*}
    for sufficiently large $n$, as implied by Assumption~\aref{cond:shr_spike}.
\end{proof}

\subsection{Proof of \Cref{thm:adaptive_estimation}}\label{subsec:proof_adaptive}

\begin{proof}
Let $\mathcal F={\Phi}(\tilde L_{n})$.
By \cref{lem:consistency_unknown_var}, it suffices to show that there exists a subset $\mathcal{F}_n \subset \mathcal{F}$
such that
    \begin{enumerate}[leftmargin=1.8em,itemsep=-0.3em,topsep=0em,label=(\alph*)]
        \item $\log \Covering{\epsilon_n, \mathcal{F}_n, \norm{\cdot}_{L^2(P_X)}} \lesssim n\epsilon_n^2$
        \item $-\log \Pi(A_{\epsilon_n}) \lesssim n \epsilon_n^2$
        \item $\sup_{\sigma \in [\underline{\sigma}, \overline{\sigma}]} \Pi(\mathcal{F} \setminus \mathcal{F}_n \mid \sigma) = o\!\left(e^{-C^\dprime n \epsilon_n^2}\right)$
    \end{enumerate}
    for a sufficiently large $C^\dprime$.
    We first prove (i), the case corresponding to the spike-and-slab prior. 
    Define $\tilde{D}_n = C_D N_n$ and $\tilde{S}_n = C_S N_n \log n$ for sufficiently large $C_D, C_S > 0$. 
Define the sieve
    \begin{equation*}
        \mathcal{F}_n = \bigcup_{D\le \tilde{D}_n} \bigcup_{S\le \tilde{S}_n} {\Phi}(\tilde{L}_{n},D,S,B_n),
    \end{equation*}
        where $B_n = n$.
        To verify the entropy bound, apply \Cref{lem:covering} to obtain
        \begin{align}
        \label{eqn:adaentropy}
        \begin{split}
       \Covering{\epsilon_n, \mathcal{F}_n, \norm{\cdot}_{\infty} }
            &\leq \sum_{D\le \tilde{D}_n}\sum_{S\le \tilde{S}_n} \bigg( \frac{2}{\epsilon_n} \tilde{L}_{n} (B_n \vee 1 )^{\tilde{L}_{n}}  (D+1)^{2\tilde{L}_{n}} \bigg)^{S+1} \\
            &\leq \tilde{D}_n \tilde{S}_n \bigg( \frac{2}{\epsilon_n} \tilde{L}_{n}(B_n \vee 1 )^{\tilde{L}_{n}}
                (\tilde{D}_n+1)^{2\tilde{L}_{n}} \bigg)^{\tilde{S}_n+1}.
        \end{split}
        \end{align}
Hence, (a) holds as in the proof of \Cref{thm:reg_ss}.
    To verify (b), observe that there exists a constant $C^{\prime\prime\prime} > 0$ such that 
    \begin{align*}   \min\{\pi_D(\tilde{D}_n),\pi_S(\tilde{S}_n)\} \gtrsim \Exp{-C^{\prime\prime\prime} n \epsilon_n^2}.
    \end{align*}
     Since $D_{1n} \leq \tilde{D}_n$ and $S_{1n} \leq \tilde{S}_n$,
    by \cref{lem:anisotropic_approx} and \cref{rmk:hyper_bigo}, there exists  $\hat{f}_n = f_{\hat{\theta}} \in \Phi(\tilde{L}_{n}, \tilde{D}_n,\tilde{S}_n, B_n)$ satisfying \eqref{eqn:mle_l2}.
Following the proof of \cref{thm:reg_ss}, it can be shown that
     \begin{align*}
        -\log \Pi\!\left(f \in \mathcal{F}_n: \norm{f-f_0}_{L^2(P_X)} \leq \epsilon_n/2 \mid \tilde{D}_n,\tilde{S}_n\right) \lesssim n\epsilon_n^2.
    \end{align*}
    Therefore,
        \begin{align}
        \label{eqn:adaprcon}
        \begin{split}
        -\log \Pi(A_{\epsilon_n})
        &\lesssim -\log\Pi\!\left(f \in \mathcal{F}_n: \norm{f - f_0}_{L^2(P_X)} \leq \epsilon_n/2\right) - \log n \\
        &\leq - \log \pi_D(\tilde{D}_n) - \log \pi_S(\tilde{S}_n) \\
        &\quad - \log \Pi\!\left(f \in \mathcal{F}_n: \norm{f-f_0}_{L^2(P_X)} \leq \epsilon_n / 2 \mid \tilde{D}_n,\tilde{S}_n\right) - \log n \\
        &\lesssim n\epsilon_n^2,        \end{split}
        \end{align}
        which verifies (b).
        Now, observe that
        \begin{align*}
\Pi(\mathcal{F} \setminus \mathcal{F}_n \mid \sigma) 
&= \Pi(\mathcal{F} \setminus \mathcal{F}_n ) \\
&\leq \Pi(D>\tilde{D}_n)+\Pi(S > \tilde{S}_n) \\
&\quad+\sum_{D\le \tilde{D}_n}\sum_{S\le \tilde{S}_n}\pi_D(D)\pi_S(S)\Pi \!\left( \exists j: \abs{\theta_j} > B_{n} \mid \tilde L_n,D, S\right).
        \end{align*}
        First, we bound the tail probability of $D$ as
        \begin{align*}
            \Pi(D > \tilde D_n) 
            &\lesssim \sum_{D>\tilde D_n} e^{-\lambda_D D (\log D)^3} 
            \leq \sum_{D>\tilde D_n} e^{-\lambda_D  D (\log \tilde D_n)^3} 
        \lesssim {e^{-\lambda_D \tilde D_n (\log \tilde D_n)^3}}\le e^{-C_1n\epsilon_n^2},
        \end{align*}
        for a sufficiently large $C_1>0$, provided that $C_D$ is large enough.
        Similarly, we obtain $\Pi(S > \tilde{S}_n) \lesssim e^{-C_2 n \epsilon_n^2}$ for a sufficiently large $C_2>0$, provided that $C_S$ is sufficiently large.
Lastly, following the argument used in the proof of \Cref{thm:reg_ss},
\begin{align*}
&\sum_{D\le \tilde{D}_n}\sum_{S\le \tilde{S}_n}\pi_D(D)\pi_S(S)\Pi \!\left( \exists j: \abs{\theta_j} > B_{n} \mid \tilde L_n,D, S\right) \\
&= \sum_{D\le \tilde{D}_n}\sum_{S\le \tilde{S}_n}\pi_D(D)\pi_S(S)[1-(1-v_n)^S]\\
&
\le 1-(1-v_n)^{\tilde S_n}
\\
&=o\!\left(e^{-C_3 n\epsilon_n^2}\right),
\end{align*}
for a sufficiently large $C_3>0$.
Therefore, (c) holds.

To prove (ii), define the sieve
    \begin{equation*}
        \mathcal{F}_n = \bigcup_{D\le \tilde{D}_n} {\Phi}(\tilde L_{1n}, D, S_{1n}, B_{n}, a_n).
    \end{equation*}
Using \cref{lem:covering}, (a) is verified as in \eqref{eqn:adaentropy}.
Moreover, following \eqref{eqn:shrprcon} and \eqref{eqn:adaprcon}, we obtain
\begin{align*}
            -\log \Pi(A_{\epsilon_n})
        &\leq - \log \pi_D(\tilde{D}_n)
        - \log \Pi\!\left(f \in \mathcal{F}_n: \norm{f-f_0}_{L^2(P_X)} \leq \epsilon_n / 2 \mid \tilde{L}_n, \tilde{D}_n \right) - \log n \\
        &\lesssim n\epsilon_n^2,    
\end{align*}
which verifies (b). 
Lastly, we obtain
       \begin{align*}
&\Pi(\mathcal{F} \setminus \mathcal{F}_n \mid \sigma) \\
&= \Pi(\mathcal{F} \setminus \mathcal{F}_n ) \\
&\leq \Pi(D>\tilde{D}_n) \\
&\quad+\sum_{D\le \tilde{D}_n}\pi_D(D)\!\left[\Pi\! \left( \exists j: \abs{\theta_j} > B_{n} \mid  \tilde L_{1n},D \right)  + \Pi\!\left(\sum_{j=1}^{T_{1n}} I(\abs{\theta_j} >a_n) > S_{1n}\mid  \tilde L_{1n},D \right)\right].
        \end{align*}
        Following the calculation in the proof of \Cref{thm:shrinkage},
        we verify (c).
\end{proof}

\subsection{Proof of \cref{thm:general_function}}\label{subsec:proof_general}

\begin{proof}
We only prove the part corresponding to \Cref{thm:reg_ss} under Assumption~\aref{assum:a-4}. Other arguments proceed similarly. Let $\mathcal{F} = {\Phi}(L_{2n},D_{2n}, S_{2n})$ and $\mathcal{F}_n = {\Phi}(L_{2n},D_{2n},S_{2n},B_2)$. We verify conditions (a) and (b) of \Cref{lem:consistency_unknown_var}. 
Using \cref{lem:covering},
\begin{equation*}
    \begin{aligned}
        \log \Covering{\epsilon_n, \mathcal{F}_n, \norm{\cdot}_{L^2(P_X)} }
        &\leq \log \Covering{\epsilon_n, \mathcal{F}_n, \norm{\cdot}_{\infty}} \\
        &\leq (S_{2n}+1) \bigg[ \log L_{2n} + L_{2n} \Log{ (B_{2} \vee 1) (D_{2n} + 1)^2}  - \log \frac{\epsilon_n}{2} \bigg] \\
        &\lesssim N_n (\log n)^3 \\
        &\lesssim n\epsilon_n^2
    \end{aligned}
\end{equation*}
for sufficiently large $n$. The last inequalities hold because
\begin{equation*}
    L_{2n} \asymp \log n,\quad D_{2n} \asymp N_n,\quad S_{2n} \asymp N_n \log n
\end{equation*}
for fixed $d^{(h)}$. Thus, (a) holds. 
Using \cref{lem:composite_anisotropic_approx} and Assumption \aref{assum:a-1}, there exists $\hat{f}_n = f_{\hat{\theta}} \in \mathcal{F}_n$ such that
\begin{equation*}
    \|\hat{f}_n - f_0\|_{L^2(P_X)} \leq C_1  N_n^{-\tilde{\bfs}^\ast} \leq \epsilon_n/4
\end{equation*}
for some $C_1>0$ and all sufficiently large $n$. 
Let $\hat{\gamma}$ denote the index set of nonzero components in $\hat{\theta}$, and let $\hat{\theta}_{\hat{\gamma}}$ denote the corresponding nonzero values. We define $\Theta(\hat{\gamma}; L_{2n}, D_{2n}, S_{2n}, B_2) \subset \Theta(L_{2n}, D_{2n}, S_{2n}, B_2)$ as the subset of parameter space in which only the components indexed by $\hat{\gamma}$ are nonzero. The corresponding NN space is denoted by
\begin{equation*}
    \mathcal{F}_n(\hat{\gamma}) = {\Phi}(\hat{\gamma}; L_{2n}, D_{2n}, S_{2n}, B_2) .
\end{equation*}
Using Assumption \aref{assum:a-3}, we have
\begin{equation*}
    \log \Pi(\abs{\sigma - \sigma_0} \leq  \epsilon_n/2) \gtrsim \log \epsilon_n \gtrsim -\log n
\end{equation*}
and hence
\begin{equation*}
    \Pi(A_{\epsilon_n}) \gtrsim \Pi(f \in \mathcal{F}_n: \norm{f - f_0}_{L^2(P_X)} \leq \epsilon_n/2) - \log n.
\end{equation*}
Using the earlier result,
\begin{equation*}
    \begin{aligned}
        \Pi\!\left(f \in \mathcal{F}_n: \norm{f - f_0}_{L^2(P_X)} \leq \epsilon_n/2\right)\ge \Pi\!\left(f \in \mathcal{F}_n(\hat{\gamma}): \|f - \hat{f}_n\|_{\infty} \leq \epsilon_n/4\right).
    \end{aligned}
\end{equation*}
Using \cref{lem:nn_norm},
    \begin{align*}
        &\Pi\bigg( f \in \mathcal{F}_n(\hat{\gamma}): \|f - \hat{f}_n\|_{\infty} \leq \epsilon_n/4\bigg) \\
        &\ge \Pi\bigg(\theta \in \mathbb{R}^{T_{2n}}: \theta_{\hat{\gamma}^c} = 0, \norm{\theta_{\hat{\gamma}}}_{\infty} \leq B_2, 
        \|\hat{\theta}_{\hat\gamma} - \theta_{\hat\gamma}\|_{\infty} \leq \frac{\epsilon_n}{4 (D_{2n}+1)^{L_{2n}} L_{2n} (B_2 \vee 1)^{L_{2n}-1}} \bigg) \\
        &\ge \left( t_{2n} \inf_{u \in [-B_2, B_2]} \tilde{\pi}_{SL}(u) \right)^{S_{2n}} \binom{T_{2n}}{S_{2n}}^{-1} \\
        &\ge \left( t_{2n} (D_{2n} + 1)^{-L_{2n}} \inf_{u \in [-B_2, B_2]} \tilde{\pi}_{SL}(u) \right)^{S_{2n}},
    \end{align*}
where $t_{2n} = \epsilon_n/[{2(D_{2n}+1)^{L_{2n}} L_{2n} (B_2 \vee 1)^{L_{2n}-1}}]$.
Therefore,
\begin{equation*}
    \begin{aligned}
        -\log \Pi(A_{\epsilon_n}) 
        &\lesssim -S_{2n} \log \!\left( t_{2n} (D_{2n} + 1)^{-L_{2n}} \inf_{u \in [-B_2, B_2]} \tilde{\pi}_{SL}(u) \right) + \log n\\
        &\leq S_{2n} (\log n)^2 \\
        &\lesssim n\epsilon_n^2,
    \end{aligned}
\end{equation*}
which verifies (b).
\end{proof}

\end{document}